\newtheorem{theorem}{Theorem}
\newtheorem{definition}{Definition}
\newtheorem{repdefinition}{Definition}
\newtheorem{corollary}[theorem]{Corollary}
\definecolor{myOrange}{HTML}{EF553B}
\definecolor{myPurple}{HTML}{AB63FA} 
\definecolor{myBlue}{HTML}{636EFA}
\definecolor{myGreen}{HTML}{00CC96}
\newcommand{\squaresymbolp}{\textcolor{myPurple}{\rule{1ex}{1ex}}}
\newcommand{\circlesymbolb}{\tikz\draw[fill=myBlue,draw=myBlue] (0,0) circle (0.5ex);}
\newcommand{\circlesymbolg}{\tikz\draw[fill=myGreen,draw=myGreen] (0,0) circle (0.5ex);}
\newcommand{\circlesymbolo}{\tikz\draw[fill=myOrange,draw=myOrange] (0,0) circle (0.5ex);}
\title{DAIL: Beyond Task Ambiguity for Language-Conditioned Reinforcement Learning}
\author{
    \textbf{Runpeng Xie}$^{*1}$, 
    \textbf{Quanwei Wang}$^{*2}$,
    \textbf{Hao Hu}$^{3}$,
    \textbf{Zherui Zhou}$^{4}$,
    \textbf{Ni Mu}$^{2}$,
    \textbf{Xiyun Li}$^{5}$,
    \textbf{Yiqin Yang}$^{\dagger1}$,
    \textbf{Shuang Xu}$^{1}$,
    \textbf{Qianchuan Zhao}$^{2}$,
    \textbf{Bo Xu}$^{\dagger1}$\\
    \small
    $^{1}$The Key Laboratory of Cognition and Decision Intelligence for Complex Systems,\\ Institute of Automation, Chinese Academy of Sciences, Beijing, China \\
    $^{2}$Department of Automation, Tsinghua University\quad$^{3}$Moonshot AI\\
    $^{4}$Department of Computer Science and Engineering, Washington University\quad$^{5}$Tecent AI Lab\\
    \texttt{xierunpeng2021@ia.ac.cn, wqw21@mails.tsinghua.edu.cn}\\
}
\begin{document}

\maketitle

\begingroup
\renewcommand\thefootnote{} 
\footnotetext{${}^{*}$Equal contribution.}
\footnotetext{${}^{\dagger}$Correspondence to Yiqin Yang and Bo Xu.}
\endgroup

\begin{abstract}
Comprehending natural language and following human instructions are critical capabilities for intelligent agents. 
However, the flexibility of linguistic instructions induces substantial ambiguity across language-conditioned tasks, severely degrading algorithmic performance.
To address these limitations, we present a novel method named DAIL (Distributional Aligned Learning), featuring two key components: distributional policy and semantic alignment.
Specifically, we provide theoretical results that the value distribution estimation mechanism enhances task differentiability.
Meanwhile, the semantic alignment module captures the correspondence between trajectories and linguistic instructions.
Extensive experimental results on both structured and visual observation benchmarks demonstrate that DAIL effectively resolves instruction ambiguities, achieving superior performance to baseline methods. Our implementation is available at \href{https://github.com/RunpengXie/Distributional-Aligned-Learning}{https://github.com/RunpengXie/Distributional-Aligned-Learning}.

\end{abstract}

\section{Introduction}
Artificial agents are anticipated to master diverse skills while effectively interpreting human instructions and generalizing across various tasks. Therefore, comprehension and following of natural language emerges as critical capabilities for agents in this context.
For example, language-conditioned agents have achieved remarkable success in robotic manipulation~\cite{luketina2019survey,brohan2023rt}, text-based environments~\cite{li2022pre, carta2023grounding}, visual navigation~\cite{zhou2023language, hermann2017grounded}, and autonomous driving~\cite{cui2024drive, roh2020conditional}.
The fundamental requirement has propelled language-conditioned reinforcement learning (RL) to the forefront of research, which focuses on enabling agents to interpret and execute natural language instructions through RL frameworks.

Recent advancements in the language-conditioned RL domain have focused on bridging the gap between linguistic understanding and decision-making processes, aiming to create agents capable of executing complex instructions with human-like adaptability.
For example, some works~\cite{co2018guiding, bing2023meta} integrate language-conditioned policy with trial-and-error learning, significantly improving the performance and sample efficiency in robot task acquisition. Meanwhile, some studies~\cite{goyal2021pixl2r, goyal2019using} leverage expert demonstrations to map language instructions to reward signals directly to address the issue of sparse rewards in language-conditioned RL. 
However, linguistic instructions exhibit high flexibility, which induces exponential growth in task space.
In this case, identical tasks may have divergent expressions, while distinct tasks share overlapping language instructions. This variability makes language-conditioned RL methods face the significant challenge of task ambiguity, which hinders the agent from discerning the connection between rewards and task objectives, thereby significantly impairing learning efficiency.

To solve this issue, we propose a novel method, DAIL, which consists of two main components: a distributional language-guided policy and a trajectory-wise semantic alignment module. 
Specifically, the distributional policy module~\cite{bellemare2017distributional} estimates the value distribution, preserving more information to aid task discrimination. Theoretically, we analyze the sample complexity required to guarantee task disambiguation and establish that distributional estimation methods are sample-efficient in offline settings.
On the other hand, the semantic alignment module constrains the language instruction representations by maximizing the mutual information between trajectories and the instructions, thereby achieving better differentiation across instructions.
With theoretical guarantees, the two modules enable precise disambiguation and execution of linguistic instructions, thereby improving learning efficiency.

We conduct extensive experiments on both structured observation~\cite{chevalier2018babyai} and visual observation~\cite{shridhar2020alfred} benchmarks. 
This design is to validate the external validity of the DAIL agent, progressing from less complex structured inputs to more complex and expressive visual observations.
The experimental results show that DAIL outperforms the state-of-the-art language-conditioned RL methods in both benchmarks.
Further, the visualization analysis demonstrates that DAIL can learn a non-ambiguous task representation compared with baselines.
Our main contributions are summarized as follows:
\begin{itemize}
    \item First, we highlight the critical issue of task ambiguity and empirically analyze the limitations of current mainstream methods. We define the task distinction in our setting and analyze the sample complexity to avoid task ambiguity theoretically.
    \item Second, we propose DAIL, a simple yet efficient language-conditioned learning framework, which addresses the task ambiguity issue based on distributional policy and semantic alignment.
    \item Lastly, we conduct extensive experiments to show that DAIL significantly outperforms conventional language-conditioned methods. 
    The results indicate that by improving task discrimination, we can effectively mitigate the task ambiguity issue, thereby broadening the application of language-conditioned RL.
\end{itemize}

\section{Preliminaries}

\begin{figure*}[t]
    \centering
    \includegraphics[width=1.0\textwidth]{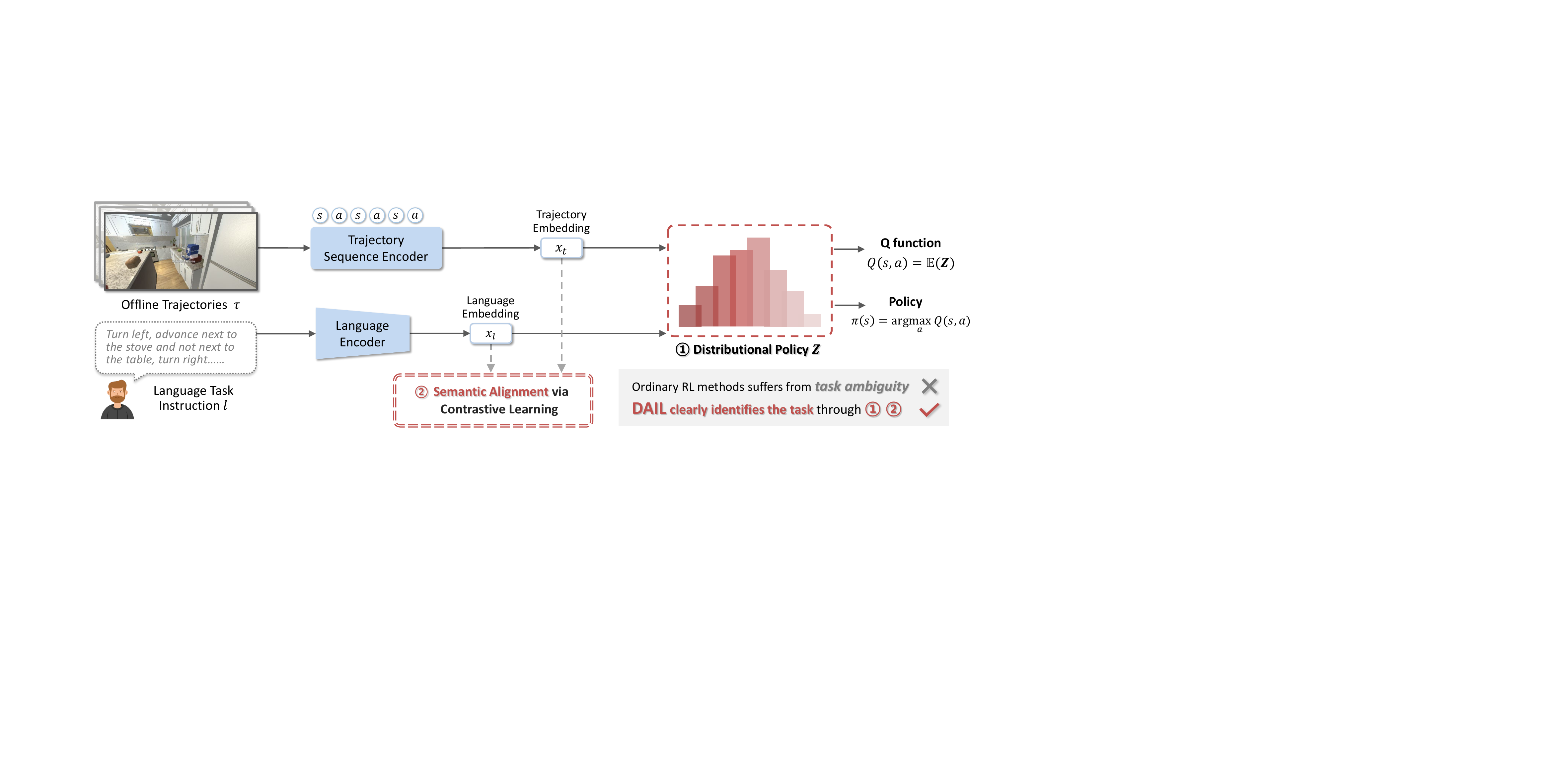} 
    \vspace{-10pt}
    \caption{
    The framework of our method.
    The key ideas are: (1) use the distributional language-guided policy to aid task discrimination, (2) use the trajectory-wise semantic alignment module to extract task representation.}
    \label{fig:arch}
\end{figure*}

\paragraph{Language-conditioned RL} 
Based on contextual markov decision process (CMDP)~\cite{hallak2015contextual}, we consider Language-conditioned Markov Decision Process (LCMDP) as a model consisting of a tuple $\mathcal{M}=(\mathcal{S, A}, P, r,\gamma,\mathcal{L},p_0,p_l)$, where $\mathcal{S}$ denotes the state space, $\mathcal{A}$ represents the action space, $\mathcal{L}$ is the language instruction space, $P(s'|s, a, l)$ represents the probabilistic transition model, $r:\mathcal{S\times A \times L}\rightarrow \mathbb{R}$ is the reward function conditioned on language instructions and
$\gamma$ is the discount factor. 
$p_0$ represents the probability distribution of the initial state, and $p_l$ denotes the probability distribution of language instruction. 
We establish language instructions $l$ as task descriptors, and each instruction uniquely specifies a task.

Language-conditioned RL aims to obtain a policy $\pi(\cdot|s,l)$ that maximizes the cumulative discounted returns under a specific distribution of language instructions:
\begin{equation}
\begin{aligned}
    \pi^*= {\arg\max}_{\pi} \mathbb{E}_{\pi} \left[\sum_{t=0}^{\infty}\gamma^t r(s_t,a_t,l)\right],
\end{aligned}
\end{equation}
where $s_0\sim p_0(\cdot), l\sim p_l(\cdot), a_t\sim \pi(\cdot|s_t, l)$ and $s_{t+1}\sim P(\cdot|s_t,a_t,l)$.
The temporal difference loss in language-conditioned RL is adapted as follows:
\begin{equation}
\begin{aligned}
L_{\text{TD}}(\theta) = &\mathbb{E}_{(s,a,s',l)\sim \mathcal{D}}[(r(s,a,l)+
\gamma\mathrm{max}_{a'}Q_{\hat{\theta}}(s',a',l)-Q_{\theta}(s,a,l))^2]
\end{aligned}
\end{equation}  
where $Q_{\theta}(s,a,l)$ is the parameterized $Q$-function conditioned language instruction $l$, and $Q_{\hat{\theta}}(s,a,l)$ is the target $Q$-network.

\paragraph{Offline RL}
Due to the high cost of real-time interaction with the environment, we consider the offline learning setting, in which we learns a policy $\pi$ without interacting with an environment.
Rather, the learning is based on a dataset $\mathcal{D}$ generated by a behavior policy $\pi_{\beta}$. One of the major challenges in offline RL is the issue of distributional shift~\citep{fujimoto2019off}, where the learned policy is different from the behavioral policy. Existing offline RL methods apply various forms of regularization to limit the deviation of the current learned policy:
\begin{equation}
    \pi^* = {\arg\max}_{\pi}\left[ J_{\mathcal{D}}(\pi) - \alpha D(\pi, \pi_{\beta})\right],
    \label{eq: offline opt}
\end{equation}
where $J_{\mathcal{D}}(\pi)$ is the cumulative discounted return of policy $\pi$ on the empirical MDP induced by the dataset $\mathcal{D}$, and $D(\pi, \pi_{\beta})$ is a divergence measure between $\pi$ and $\pi_{\beta}$.
As for the language-conditioned task, we will provide the language instruction in the evaluation.
To make our writing more concise, let $\tau$ be a full trajectory, and $\tau_t$ be the trajectory ending at time-step $t$. 
Let $p_{\mathcal{D}}(\tau, l)$ represents the joint distribution of trajectory-instruction pairs in the dataset $\mathcal{D}$.


\section{Ambiguity on Language-Conditioned Tasks}
\label{sec:toy}

In practical tasks, language instructions have high flexibility.
For example, similar tasks may employ divergent expressions, while distinct tasks share overlapping language instructions.
The variability induces exponential growth of the task space.
Therefore, when the number of language instructions increases, the agent is required to accurately identify the tasks; otherwise, it will significantly affect the agent's performance.
We name this issue the ambiguity in language-conditioned tasks. We give a formal definition of semantics distinction from instructions as follows.

\begin{definition}[Semantics Instructions Distinction]
In a multi-task RL setting with known task instruction space $\mathcal{L}$ and unknown semantics space $\mathcal{G}$, for a task distinction threshold $\delta$ and sub-optimality gap $\epsilon$,  two task instructions \( l_i, l_j \in \mathcal{L}\) are considered with \textbf{different underlying semantics}, $g_i\not\leftrightarrow g_j$, if the expected Q-values under any shared $\epsilon$-optimal policy \( \pi \) satisfy:
\[
\mathbb{E}_{\pi}\left[|Q_{\pi}(s, a, l_i) - Q_{\pi}(s, a, l_j)|\right] \geq \delta.
\]
Conversely, if
\[
\mathbb{E}_{\pi}\left[|Q_{\pi}(s, a, l_i) - Q_{\pi}(s, a, l_j)|\right] \leq \delta/2,
\]
then \( l_i \) and \( l_j \) are considered with the \textbf{same underlying semantics}, $g_i\leftrightarrow g_j$, where $s_0\sim p_0(\cdot), a\sim \pi(\cdot|s),s'\sim p(\cdot|s,a)$. $V_{\pi}(s)\geq V^*(s)-\epsilon, \forall s \in \mathcal{S}$, $V^*$ is optimal value function.


\label{def: distintion}
\end{definition}

\begin{figure*}[t]
    \centering
    \includegraphics[width=\linewidth]{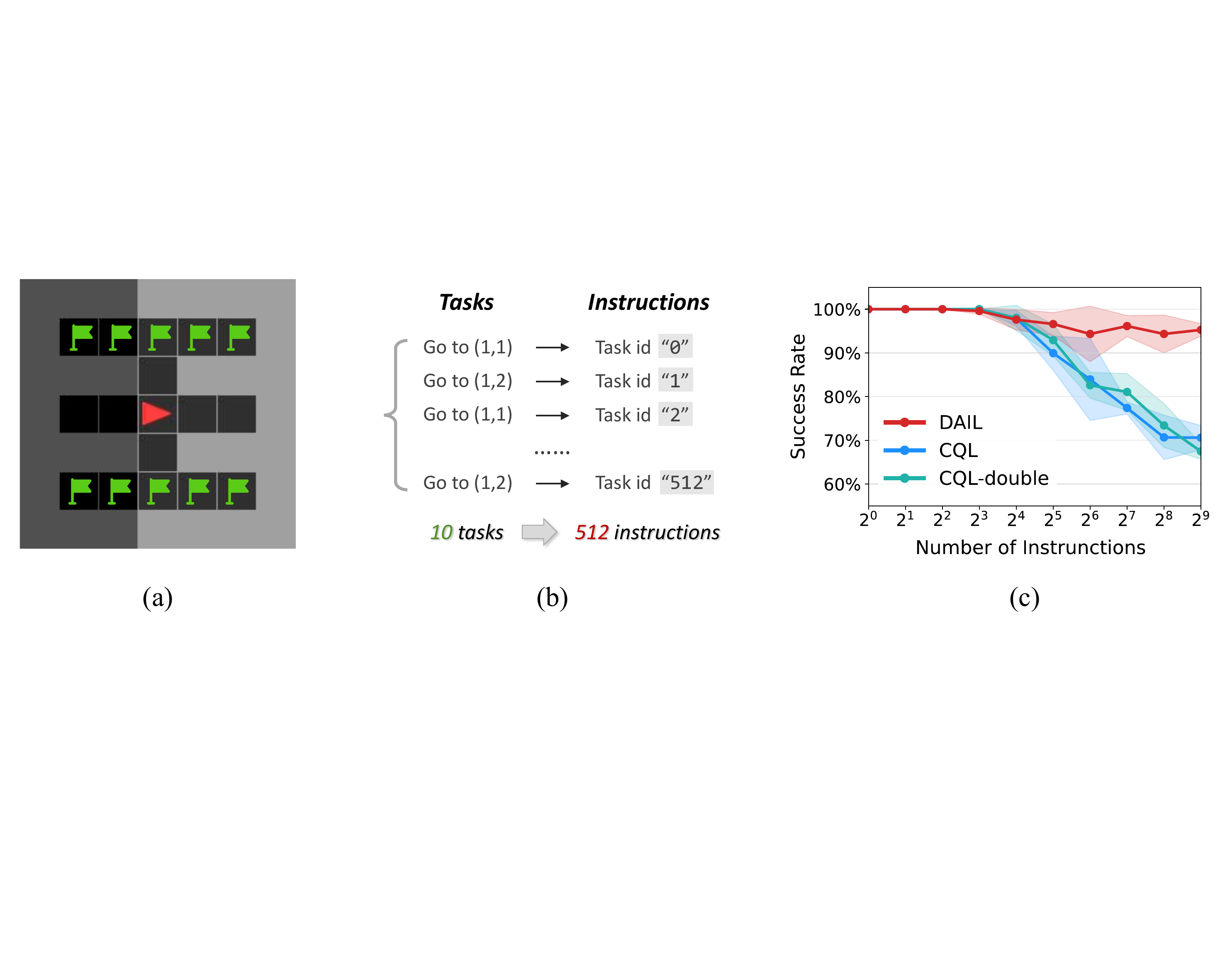}
    \vspace{-20pt}
    \caption{\textbf{Left:} The green flags in the map denote the accessible goals.
    \textbf{Middle:} An illustration of the mapping between goal positions and instructions.
    \textbf{Right:} Average success rates over 100 evaluations for each number of instructions and 3 seeds.}
    \label{fig:toy_res}
\end{figure*}

To make this point more straightforward, we introduce a toy experiment based on the Minigrid environment~\cite{MinigridMiniworld23}.
The setup consists of 10 accessible goal positions $\mathcal{G}=\{g_0,g_1,...,g_{9}\}$, where $\mathcal{G}$ represents the set of all possible goal positions. 
The agent (red triangle-shaped) must follow a given instruction $l\in \mathcal{L}$ to navigate to a specific goal position (Left of Figure~\ref{fig:toy_res}). 
We simulate instructions using numerical task IDs, making $\mathcal{L}\subset \mathbb{N}$. We employ a random mapping $F:\mathcal{L} \to \mathcal{G}$ to assign each $l$ to goal position $g$, which simulates semantics of instructions and is hidden from the agent (Middle of Figure~\ref{fig:toy_res}). With these settings, we can control the number of instructions $|\mathcal{L}|$ by simply adjusting the set of valid task IDs and the mappings. We conduct 10 experiments, varying the number of instructions from 1 to $2^9$. For each experiment, we generate a new mapping $F$ and collect 1024 random trajectories as the offline dataset. 


We evaluate the standard offline RL algorithm, CQL~\cite{kumar2020conservative}, on the above settings.
In addition, to test how model size affects the task ambiguity, we create an enhanced version called CQL-double by doubling its parameters.
The experimental results in Figure~\ref{fig:toy_res} show that if the number of instructions is small (e.g., under 16), agents can understand instructions and reach the goal position with high success.
However, as instructions multiply, the number of demonstrations for each instruction declines, which increases task ambiguity and consequently makes CQL and CQL-double performance drop significantly.
These observations reveal fundamental limitations in language-conditioned RL regarding task ambiguity, highlighting that model scaling alone remains insufficient to resolve this challenge.
For detailed information on the toy experiment, please refer to Appendix~\ref{appendix:implement}.

\section{Method}
\label{sec:method}
To address the issue of task ambiguity, we hope to improve the algorithm's ability to distinguish tasks.
In this section, we propose the Distributional Aligned Learning algorithm~(DAIL), which enhances the task differentiability from policy and representation.
Specifically, DAIL adopts the distributional language-guided policy to estimate the value distribution, preserving more information to aid task discrimination.
On the other hand, DAIL uses the trajectory-wise semantic alignment module to extract task representations and help discriminate different instructions by maximizing the mutual information between trajectories and instructions.
We show the overall framework of DAIL in Figure~\ref{fig:arch} and Algorithm~\ref{alg:main} in Appendix~\ref{appendix:pseudocode}.

\subsection{Distributional Language-Guided Policy}
\label{sec: distributional}
Current RL methods aim to estimate the expectation of the cumulative discounted reward.
However, as shown in Section~\ref{sec:toy}, when the number of instructions increases while keeping the number of actually-distinct tasks constant, the estimated expectation for different task instructions becomes similar. 
As a result, it is difficult for the agent to complete the task instructions accurately.
Differently, the distributional technique addresses this issue by calculating the distribution of the cumulative discounted reward, which is more distinguishable than expectation, as illustrated through an extreme yet straightforward example in Figure~\ref{fig:insight}.
\begin{figure*}[t]
    \centering
    \includegraphics[width=1\textwidth]{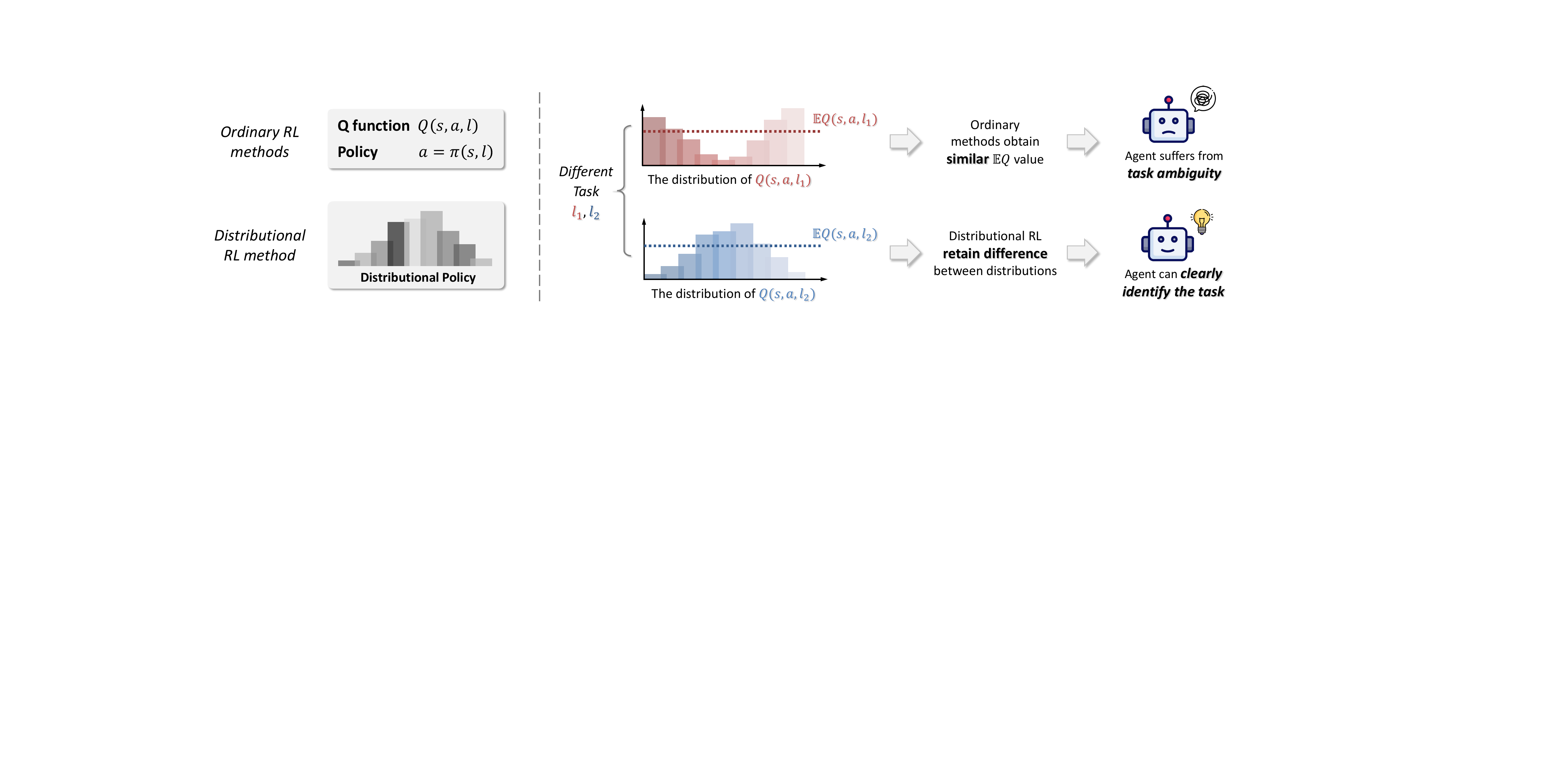}
    \vspace{-10pt}
    \caption{An illustrative case where $Q(s,a)$ functions in two tasks share the same expectations but have different distributions. In this case, traditional RL cannot discriminate between the tasks while distributional RL can. Please refer to Appendix~\ref{appendix:theory} for more details.}
    \centering
    \vspace{-5pt}
    \label{fig:insight}
\end{figure*}
For a fixed policy $\pi$, let the random variable $Z^{\pi}$ represent the cumulative discounted reward obtained along the policy $\pi$, and it has the following  relationships:
\begin{equation}
\begin{aligned}
    &V^{\pi}(s, l) :=\mathbb{E}[Z^{\pi}(s, l)] = \mathbb{E}\left[\sum_{t=0}^{\infty}\gamma^t r(s_t,a_t, l)|s_0=s, \pi \right] \\
    &Q^{\pi}(s,a, l) :=\mathbb{E}[Z^{\pi}(s,a, l)] = \mathbb{E}\left[\sum_{t=0}^{\infty}\gamma^t r(s_t,a_t, l) | s_0=s, a_0=a, \pi\right],
\end{aligned}
\end{equation}
Let $\mathcal{Z}$ denote the space of value distributions. In the following, for simplicity of notation, we denote $Z^\pi\in \mathcal{Z}$ as $Z$.
Instead of estimating the expectation, we calculate the probability distribution of the random variable $Z$:
\begin{equation}
    \mathcal{T}^{\pi} Z(s,a,l) :\overset{D}{=} R(s,a,l) + \gamma Z(s',a',l),
\label{eq: c51}
\end{equation}
where $\mathcal{T}$ is the distributional Bellman operator, and $A :\overset{D}{=} B$ denotes that $A$ equals $B$ by probability laws. $R\in\mathcal{Z}$ is a function depicting the reward distributions.
Since modeling continuous distributions is challenging, we can discretize the value function distribution $Z$ and train it by minimizing the cross-entropy loss:
\begin{equation}
\begin{aligned}
    \mathcal{L}_{\text{Dist}}(\theta) = \mathbb{E}_{(\tau,l)\sim p_{\mathcal{D}}(\cdot, \cdot)} \frac{1}{T}\sum_{t=0}^{T-1}D_{\text{KL}}\left(\Phi \hat{\mathcal{T}} Z_{\hat{\theta}}(s_t,a_t,l)\|Z_{\theta}(s_t,a_t,l)\right)
\end{aligned},
\label{eq: dist}
\end{equation}
where $T$ is the trajectory length, $\Phi\hat{\mathcal{T}}$ is the sample Bellman update, which projects the value function distribution onto the parametric discrete distribution.
$Z_{\theta}$ and $Z_{\hat{\theta}}$ are estimated value distributions parameterized by $\theta$ and $\hat{\theta}$, respectively.
By optimizing Equation~\ref{eq: dist}, we obtain the distribution $Z_{\theta}$ that exhibits strong discriminative power across different instructions $l$. Please refer to Appendix~\ref{appendix:distrl} for the details.

In addition, we conduct the following theoretical analysis.
Let $n_{\text{value}},n_{\text{dist}}$ denote the number of samples needed to avoid task ambiguity for value-based and distributional settings, respectively.
As shown in Theorem~\ref{thm: sample-complexity} and Corollary~\ref {cor: sample-complexity}, distributional RL achieves better sample efficiency compared with estimating the expectation when the number of tasks is sufficiently large, $n_{\text{value}}\geq n_{\text{dist}}$. Proofs and further details can be found in Appendix \ref{appendix:theory}.


\begin{theorem}[Sample Complexity for Task Instruction Disambiguation]
Consider an offline multi-task RL setting with \( M \) distinct tasks~(with different semantics). In direct Q-value estimate setting, suppose \( Q(s, a, l) \in [0, Q_{\max}],~ \forall (s,a,l) \sim \mathcal{D}\) with finite $Q_{\max}$, and the task distinction threshold is \( \delta > 0 \). When the number of training samples \( n_{\textup{value}} \) satisfies:
\[
n_{\textup{value}} \geq \frac{C_{\textup{value}} \log(3M^2 / \eta)}{\delta^2}.
\]
The mean value estimate algorithm achieves task-level semantic disambiguation with confidence at least \( 1 - \eta \).
In the distributional RL setting, let \( Z(s, a,l) \) denote the learned return distribution, and suppose the task distinction threshold is given by a 1-Wasserstein distance \( d > 0 \). Then, to ensure semantic disambiguation of task instructions with confidence at least \( 1 - \eta \), it suffices that:
\[
n_{\textup{dist}} \geq \frac{C_{\textup{dist}} \log(3M^2 / \eta)}{d^2},
\]
where \( C_{\textup{value}}, C_{\textup{dist}} >0 \) are universal constants depending on certain attributes of Q-value distribution.

\label{thm: sample-complexity}
\end{theorem}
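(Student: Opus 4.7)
The plan is to establish both sample-complexity bounds through pointwise concentration inequalities combined with a union bound over task pairs, separating the scalar setting (Hoeffding for mean Q-value estimates) from the distributional setting (Wasserstein concentration for return distributions).

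First, I would formalize the decision rule implied by Definition~\ref{def: distintion}. Given $M$ task instructions, an algorithm must correctly classify each pair $(l_i,l_j)$ as either satisfying $\mathbb{E}_\pi[|Q(s,a,l_i)-Q(s,a,l_j)|]\geq\delta$ or the complementary bound with $\delta/2$. With empirical estimates $\hat{Q}_n$ built from $n$ samples per task, the natural test statistic is $\hat{\Delta}_{ij}:=\mathbb{E}_\pi[|\hat{Q}_n(s,a,l_i)-\hat{Q}_n(s,a,l_j)|]$, and disambiguation succeeds provided $|\hat{\Delta}_{ij}-\Delta_{ij}|<\delta/4$ uniformly over all pairs. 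By the triangle inequality it suffices to control each individual estimation error $|\hat{Q}_n(s,a,l_i)-Q(s,a,l_i)|$ at level $\delta/8$.

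For the value-based case, I would apply Hoeffding's inequality to the empirical mean of $n$ returns per task, obtaining $\mathbb{P}(|\hat{Q}_n(s,a,l_i)-Q(s,a,l_i)|>\delta/8)\leq 2\exp(-n\delta^2/(32 Q_{\max}^2))$. A union bound over the $M$ individual estimates together with the $\binom{M}{2}$ pair comparisons (the constant $3$ in the statement absorbing the bookkeeping between per-task and per-pair events) gives a total failure probability of the form $3M^2\exp(-n\delta^2/(32 Q_{\max}^2))$. Setting this at most $\eta$ and inverting yields the stated bound with $C_{\textup{value}}=\Theta(Q_{\max}^2)$. For the distributional case, I would swap Hoeffding for a concentration bound on the empirical $1$-Wasserstein distance. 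For one-dimensional distributions with bounded support, $W_1(\hat{Z}_n,Z)$ concentrates at the rate $O(1/\sqrt{n})$; this is obtainable by integrating the DKW inequality through the CDF representation $W_1(\mu,\nu)=\int|F_\mu(x)-F_\nu(x)|\,dx$. The resulting sub-Gaussian tail $\mathbb{P}(W_1(\hat{Z}_n,Z)>d/4)\leq C\exp(-cnd^2)$, combined with the same union bound over task pairs, reproduces the $\log(3M^2/\eta)/d^2$ scaling with $C_{\textup{dist}}$ depending on the support or scale parameter of the return distribution.

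The main obstacle will be the distributional concentration step. Securing a clean sub-Gaussian rate in $W_1$ that matches the $1/\sqrt{n}$ form uniformly over $(s,a)$ requires either a bounded-support or sub-Gaussian hypothesis on $Z$, and the uniform-in-$(s,a)$ version may need a covering or bracketing-entropy argument when the state-action space is continuous. A secondary subtlety is ensuring that the per-task empirical estimates are effectively independent across $l_i$'s, which in the offline setting follows from partitioning $\mathcal{D}$ by task or from task-conditional resampling. The comparison $n_{\textup{value}}\geq n_{\textup{dist}}$ stated in the subsequent corollary then follows because two return distributions can differ in higher moments while sharing the same mean, so generically $d\geq\delta$ and the distributional criterion is easier to satisfy when the task count is large.
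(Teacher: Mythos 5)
Your proposal follows essentially the same route as the paper's proof: per-task concentration (Hoeffding for the scalar $Q$-estimates, and sub-Gaussian concentration of the empirical $1$-Wasserstein distance for bounded one-dimensional return distributions, which the paper obtains by citing Lei (2020) where you propose integrating the DKW inequality --- the same standard fact), a triangle-inequality reduction of the pairwise test statistic to per-task estimation errors, a union bound over the roughly $M^2/2$ task pairs producing the $3M^2$ factor, and inversion of the exponential tail to get $n \geq (C/\delta^2)\log(3M^2/\eta)$ and its Wasserstein analogue. The only substantive differences are bookkeeping: your $\delta/4$--$\delta/8$ threshold splitting versus the paper's cruder $\delta/2$-per-term split, and your task-partitioning independence remark standing in for the paper's explicit assumption that the dataset is generated by an $\epsilon$-optimal behavior policy so that dataset averages match expectations under $\pi$.
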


\subsection{Trajectory-Wise Semantic Alignment}
\label{sec: align}

In this subsection, we focus on learning trajectory embedding that enhances the correspondence between instructions and trajectories, thereby reducing task ambiguity at the representational level.
To achieve this, we attempt to maximize the mutual information between the language instructions and trajectory:
\begin{equation}
\begin{aligned}
    w = \underset{w}{\mathrm{max}}~I(X_{\tau}(w); X_{l}(w)),
\end{aligned}
\label{eq: mutual}
\end{equation}
where $w$ is the parameter of the representation module, $X_{\tau}, X_l$ are the random variables of trajectory embedding and language instruction, respectively.
For the trajectory embedding $x_\tau$, we adopt the sequence model.
Specifically, we first use $u_{w}(\cdot, \cdot)$ to encode the state-action pairs, and then pass the sequence of embeddings through a sequence model $h_w(\cdot)$:
\begin{equation}
\begin{aligned}
    x_{\tau} = h_{w}(u_w(s_1,a_1), u_w(s_2,a_2), ..., u_w(s_T,a_T)).
\end{aligned}
\label{eq: x_t}
\end{equation}
As for the language instruction representation, we employ a language encoder to tokenize and encode the instructions into language embeddings $x_l$. 
Please refer to Appendix~\ref{appendix: architecture} for the detailed model architecture.
Let $f_w(\tau, l)=\frac{x_\tau^T x_l}{||x_\tau||~||x_l||}$ measure the similarity between the trajectory embedding and language instruction representation.
Since minimizing InfoNCE is equivalent to maximizing the lower bound of the mutual information~\cite{oord2018representation}, we can maximize the mutual information in Equation~\ref{eq: mutual} by minimizing the following NCE loss:
\begin{equation}
\begin{aligned}
    \mathcal{L}_c(w)=\mathbb{E}_{\underset{l^-\sim p_l(\cdot)}{(\tau, l^+)\sim p_{\mathcal{D}}(\cdot,\cdot)}}[&\mathrm{log}\sigma(f_w(\tau, l^+))+\mathrm{log}(1-\sigma(f_w(\tau, l^-)))],
\end{aligned}
\label{eq: align loss}
\end{equation}
where $l^+$ denotes the positive samples, which are sampled from the distribution of trajectory-instruction pair $p_{\mathcal{D}}(\cdot, \cdot)$.
$l^-$ denotes the negative samples, generated by uniform sampling over the language instructions from the offline datasets.  

\subsection{Practical Implementation}
\label{sec: implementation}
To address the partial observability issue in practical implementation, the trajectory encoding $x_t$ defined in Equation~\ref{eq: x_t} is incorporated as an additional input $(s_t,a_t,x_{t-1},l)$.
The Equation~\ref{eq: c51} is transformed into correspondingly:
\begin{equation}
\begin{aligned}
    \mathcal{T}^{\pi} Z(s_t,a_t,x_{t-1},l) :\overset{D}{=} R(s_t,a_t,l) + \gamma Z(s_{t+1},a_{t+1},x_{t},l)
\end{aligned}.
\end{equation}
In practice, we estimate the value distribution $Z_{\theta}$ with discrete distribution, using a set of atoms $\{z_i=V_{\text{MIN}}+i\Delta z\}_{i=1}^{M-1},\Delta z=\frac{V_{\text{MAX}}-V_{\text{MIN}}}{M-1}$. $V_{\text{MIN}}, V_{\text{MAX}} \in \mathbb{R}$ are the lower and upper bounds of the distributions with support, respectively, and $M \in \mathbb{N}$ is the number of atoms. Then the discrete value distribution is modeled as:
\begin{equation}
Z_{\theta}(s_t,a_t,x_{t-1},l)=z_i\text{, with probability}~p_i(s_t,a_t,x_{t-1},l):=\frac{e^{\theta_i(s_t,a_t,x_{t-1},l)}}{\sum_j e^{\theta_j(s_t,a_t,x_{t-1},l)}}
\label{eq:distribution}
\end{equation}
where $\theta_i:\mathcal{S}\times{\mathcal{A}}\times\mathcal{X}\times\mathcal{L}\to \mathbb{R}$ is a parametric model employed to approximate $Z_{\theta}$ and updated by Equation~\ref{eq: dist}.
Based on the analysis in Section~\ref{sec: distributional}, we compute $Q$-function with distributional mechanism by $Q_{\theta}(s_t,a_t,x_{t-1},l)=\mathbb{E}[Z_{\theta}(s_t,a_t,x_{t-1},l)]=\sum_{i=0}^{M-1}p_{i}(s_t,a_t,x_{t-1},l)z_i$.
Please refer to Appendix~\ref{appendix:distrl} for the details. 

In addition, we consider the offline learning setting in this work, which learns a policy without interacting with the environment.
For this reason, we adopt the standard offline learning term, $\text{CQL}(\mathcal{H})$~\cite{kumar2020conservative}, to address the distribution shift issue in offline RL learning~\cite{fujimoto2019off}:
\begin{equation}
\begin{aligned}
    &\mathcal{L}_{\text{CQL}}(\theta) = \mathbb{E}_{(\tau,l)\sim p_{\mathcal{D}}(\cdot, \cdot)} \frac{1}{T}\sum_{t=0}^{T-1} \mathrm{log}\sum_a \mathrm{exp}(Q_{\theta}(s_t,a,x_{t-1},l))- \mathbb{E}_{a\sim\pi_{\beta}(a|s)}[Q_{\theta}(s_t,a,x_{t-1},l)],
\end{aligned}
\label{eq: cql}
\end{equation}
Combining all the above loss functions, the total loss function is:
\begin{equation}
\begin{aligned}
    \mathcal{L}_{\text{tot}} =  \mathcal{L}_{\text{Dist}} + \lambda \mathcal{L}_c + \alpha \mathcal{L}_{\text{CQL}}
\end{aligned}
\label{eq: total loss}
\end{equation}
where $\lambda, \alpha$ are weights of the trajectory-wise semantic alignment module and the offline learning term, respectively.
Finally, we select the action with the highest $Q$-value:
\begin{equation}
\begin{aligned}
    a^*_t = \underset{a}{\mathrm{argmax}} Q_{\theta}(s_t,a,x_{t-1},l)
\end{aligned}
\end{equation}
The complete process of our method is shown in Algorithm~\ref{alg:main} in Appendix~\ref{appendix:pseudocode} and Figure~\ref{fig:neuralnetwork} in Appendix~\ref{appendix: architecture}.
\begin{figure*}[t]
    \centering
    \includegraphics[width=1\textwidth]{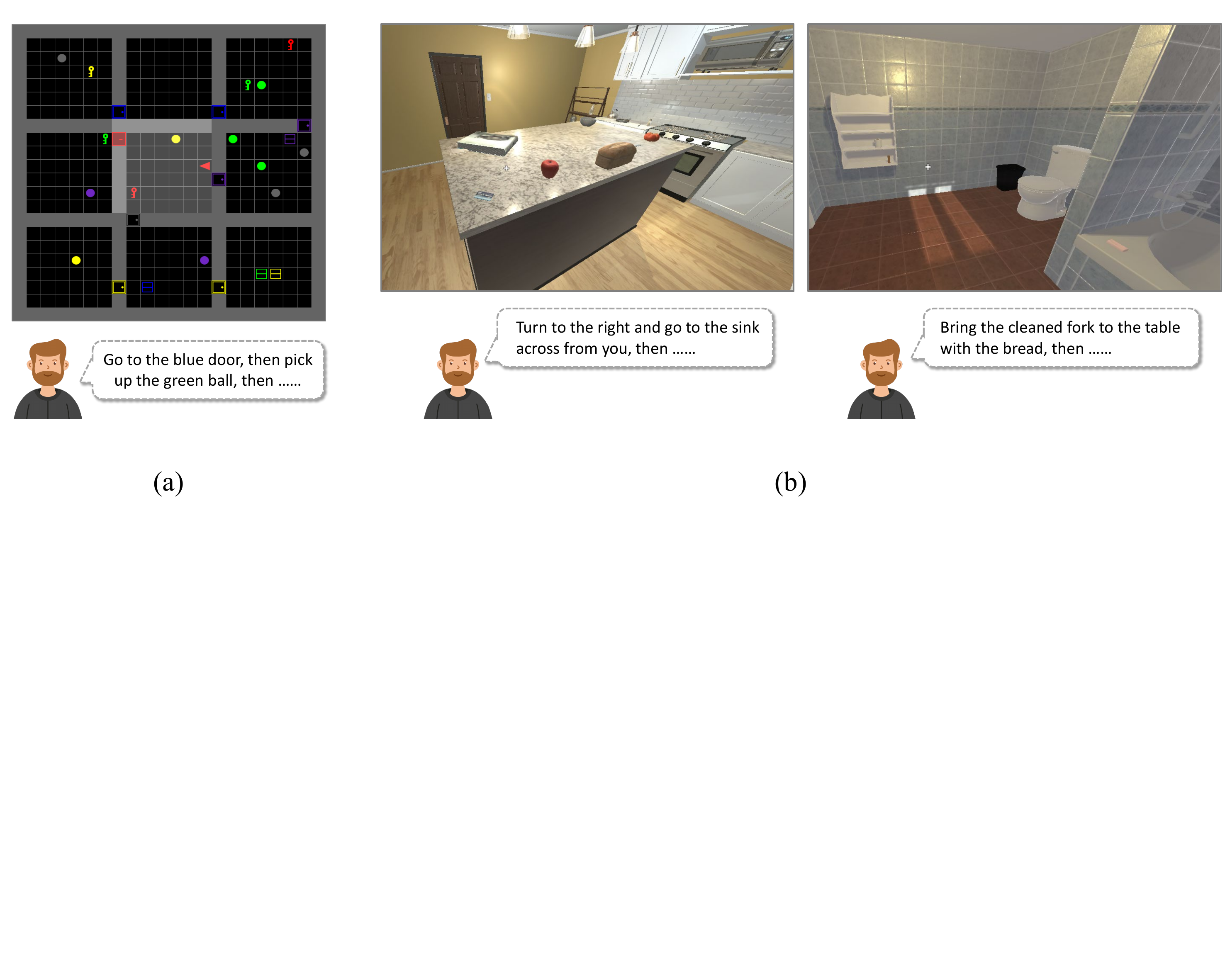}
    \vspace{-10pt}
    \caption{\textbf{Left:} An example of the \texttt{SynthLoc} task in the BabyAI environment: ``put the yellow key next to a green ball''. \textbf{Right:} Two example scenes with instructions from ALFRED. Agents are asked to finish specific tasks in the 3D household environment according to received instructions.}
    \centering
    \vspace{-5pt}
    \label{fig:environment}
\end{figure*}

\section{Experiments}
\label{sec:exp}
We designed our experiments to answer the following questions: 
\textit{Q1}: How does DAIL compare to other state-of-the-art methods on offline language-conditioned tasks? 
\textit{Q2}: How does DAIL perform as the number of tasks explodes? 
\textit{Q3}: Can DAIL learn a meaningful alignment between trajectories and instructions?
\textit{Q4}: What is the contribution of each of the proposed techniques in DAIL?

\subsection{Experimental Setting}
We evaluate various methods in language-conditioned tasks, with detailed introductions as follows:

\paragraph{BabyAI}~\cite{chevalier2018babyai} is a language learning research platform with different levels of tasks, shown on the Left of Figure~\ref{fig:environment}. 
We choose level $\verb|SynthLoc|$ for evaluation, which contains four major groups of tasks $\verb|Open|$, $\verb|Goto|$, $\verb|PickUp|$, and $\verb|PutNext|$.
The agent is asked to operate with an assigned object, like ``open a red door'' or ``put the gray box in front of you next to the blue key''. 
Tasks vary as the colors, types, or locations of objectives change, making around 6000 different tasks in total.
To evaluate the algorithms' generalizations, we divide the task space into in-distribution tasks and out-of-distribution tasks.
In-distribution tasks account for approximately 60\% of the total number of tasks, with a total of 3325. 
For the offline learning, we construct an offline dataset with 50k expert trajectories, 50k imitation learning agent trajectories, and 25k random trajectories.

\paragraph{ALFRED}~\cite{shridhar2020alfred} benchmarks sequential decision-making tasks involving household activities (e.g, cleaning, heating food) through language instructions and first-person vision, shown on the Right of Figure~\ref{fig:environment}. 
The dataset provides 8055 expert demonstrations with 25k human-annotated language instructions detailing both high-level goals and sub-goal step-by-step guidance. 
As our work primarily focuses on low-level policy learning rather than high-level planning, we specifically concentrate on the $\verb|GOTO|$ sub-goal setting for our evaluation.
In this task set, the agent must go to specific locations according to instructions like ``Move to other side of couch on the right side of the table before the door''. To simulate the presence of noisy data in real-world applications, we augment the training set with 30k random-agent trajectories, resulting in 97896 total trajectories with 53442 unique instructions across 108 household scenes.

\paragraph{Baselines}
We choose three state-of-the-art offline RL algorithms, CQL~\cite{kumar2020conservative}, IQL~\cite{kostrikov2021offline}, and adapt them into a language-conditioned manner as our RL baselines.
In addition, for imitation baselines, we include GCBC~\cite{ding2019goal}, BC-Z~\cite{jang2022bc} and GRIF~\cite{myers2023goal}, which are also adapted into a language-conditioned manner by the benchmarks~\cite{chevalier2018babyai, shridhar2020alfred}. 
Further details about baselines are shown in Appendix~\ref{appendix:implement}.

\begin{figure}[t]
    \centering
    \includegraphics[width=\textwidth]{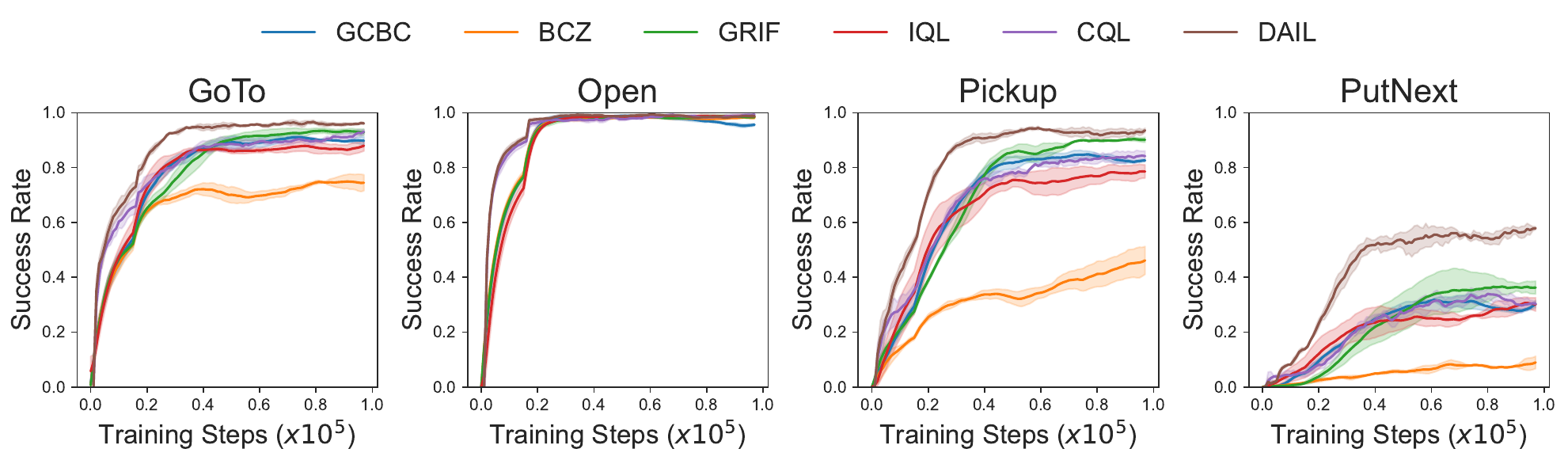} 
    \vspace{-10pt}
    \caption{Training curves on in-distribution BabyAI tasks. Success rates are evaluated over 3 seeds. }
    \centering
    \label{fig:babyai_curve_two}
\end{figure}
\begin{table*}[t]
    \caption{Success rate of out-of-distribution BabyAI tasks. Each score is evaluated over 3 seeds.}
    \label{table: babyai}
\begin{center}
    \begin{tabular}{l|c|c|c|c|c|c} 
    \toprule
      Tasks & GCBC & BC-Z & GRIF & IQL & CQL & DAIL (ours) \\
      \midrule
      Open & 94.4\textpm2.5 & 93.7\textpm1.0 & 95.9\textpm1.7 & 98.0\textpm0.4 & 98.8\textpm0.5 & \textbf{99.0\textpm0.2}\\
      Goto & 90.3\textpm1.6 & 76.9\textpm3.0 & 88.8\textpm2.6 & 86.1\textpm1.2 & 88.9\textpm2.1 & \textbf{91.3\textpm1.0} \\
      PickUp & 78.4\textpm2.1 & 45.4\textpm1.5 & 75.6\textpm3.9 & 70.4\textpm3.6 & 71.9\textpm2.2 & \textbf{87.6\textpm2.0}\\
      PutNext  & 27.4\textpm1.6& 11.2\textpm3.3 & 22.5\textpm2.7 & 21.4\textpm3.1 & 27.6\textpm0.8 & \textbf{49.1\textpm1.8}\\
      \midrule
      All & 74.1\textpm0.7 & 57.9\textpm1.8  & 71.2\textpm2.6 & 69.7\textpm2.3 & 72.6\textpm0.4 & \textbf{81.7\textpm1.3}\\ 
      \bottomrule
    \end{tabular}
\end{center}
\end{table*}

\subsection{Main Results}
\paragraph{BabyAI experimental results.}
We provide training curves of in-distribution tasks in Figure~\ref{fig:babyai_curve_two} and out-of-distribution experimental results in Table~\ref{table: babyai}. The complete results are shown in Table \ref{table:babyai-high} in Appendix~\ref{appendix:add-res}. It shows that our method achieves superior performance compared with other baselines, especially in the $\verb|PutNext|$ task category. Vanilla offline RL algorithms like CQL and IQL underperform compared to imitation learning methods like GCBC, which we attribute to the adverse impact of task ambiguity on RL-based approaches as discussed in Theorem~\ref{thm: sample-complexity}.
On the other hand, modified algorithms designed for language-conditioned IL (BC-Z and GRIF) perform poorly under our setting. This is primarily because their contrastive learning objectives are not robust in the presence of noisy or suboptimal data. In contrast, our alignment-based approach, built on an offline RL framework, maintains strong performance. We further evaluate various algorithms on a more challenging dataset with fewer expert trajectories~(Table~\ref{table:babyai-medium} in Appendix~\ref{appendix:add-res}).
Our method still achieves the optimal results. Further details about the experiment are shown in Appendix~\ref{appendix:implement}.

\paragraph{ALFRED experimental results.}
The complexity and variety of instructions in ALFRED challenge the agent's ability to discern instructions. 
The experimental results in Table~\ref{table:alfred_res} show that our approach demonstrates the highest success rate~(SR), validating its positive impact on instruction recognition capability compared to other baselines. 
GCBC exhibits poorer resistance to suboptimal data, resulting in performance comparable to other RL baseline models. 
We also report path-length weighted success rates~(PLW SR), which considers the length of expert demonstration and demonstrates the effectiveness of the behavior policy following~\cite{shridhar2020alfred}. We further illustrate the observation trajectories generated by DAIL under the validation set instructions of ALFRED with varied instructions and scenes in Figure~\ref{fig:alfred_vis}. This visualization demonstrates DAIL's robust task execution in complex scenes under diverse instructions, with additional trajectory demonstrations provided in Appendix~\ref{appendix:alfred_res}.
\begin{figure}[t]
    \centering
    \includegraphics[width=\textwidth]{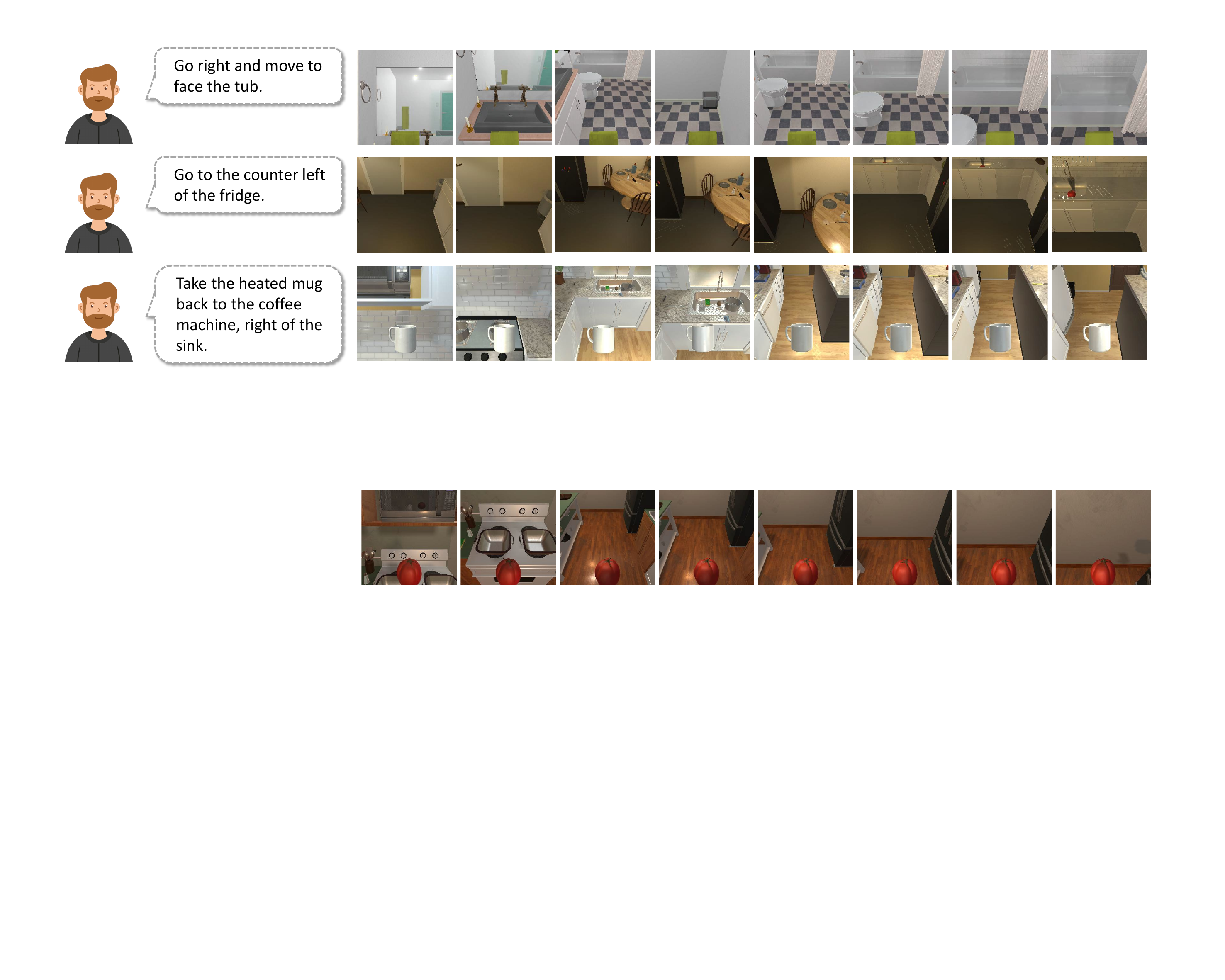} 
    \vspace{-10pt}
    \caption{Example trajectories of DAIL in ALFRED validation set instructions with varied instructions and scenes.}
    \centering
    \label{fig:alfred_vis}
\end{figure}

\begin{table*}[t]
        \caption{Success rate (SR) and path-length weighted success scores (PLW SR) in the ALFRED tasks. The results are shown on the training set and validation set respectively.
        Each score is evaluated over 3 seeds.}
    \label{table:alfred_res}
    \begin{center}
        \begin{tabular}{c|c|c|c|c|c|c}
        \toprule
            Tasks & GCBC & BC-Z & GRIF & IQL & CQL & DAIL (ours) \\
            \midrule
            SR (Training) & 87.9\textpm2.4 & 86.5\textpm0.8 & 87.8\textpm1.6 & 88.4\textpm0.6 & 87.2\textpm1.2 & \textbf{92.3\textpm2.2}\\
            PLW SR (Training) & 84.3\textpm2.6  & 82.3\textpm2.7  & 82.3\textpm2.6  & 84.4\textpm2.7 & 83.0\textpm2.9 & \textbf{90.2\textpm3.1}\\
            SR (Validation) & 47.1\textpm2.4 & 43.0\textpm2.5 & 48.6\textpm1.0  & 52.0\textpm3.0 & 50.4\textpm1.7 & \textbf{56.8\textpm2.1}\\
            PLW SR (Validation) & 40.5\textpm3.1 & 39.5\textpm2.2 & 43.2\textpm2.5 & 47.0\textpm3.2 & 44.4\textpm1.3 & \textbf{50.3\textpm2.4}\\
        \bottomrule
        \end{tabular}

    \end{center}
\end{table*}
\begin{figure*}[t]
    \centering
    \includegraphics[width=0.95\textwidth]{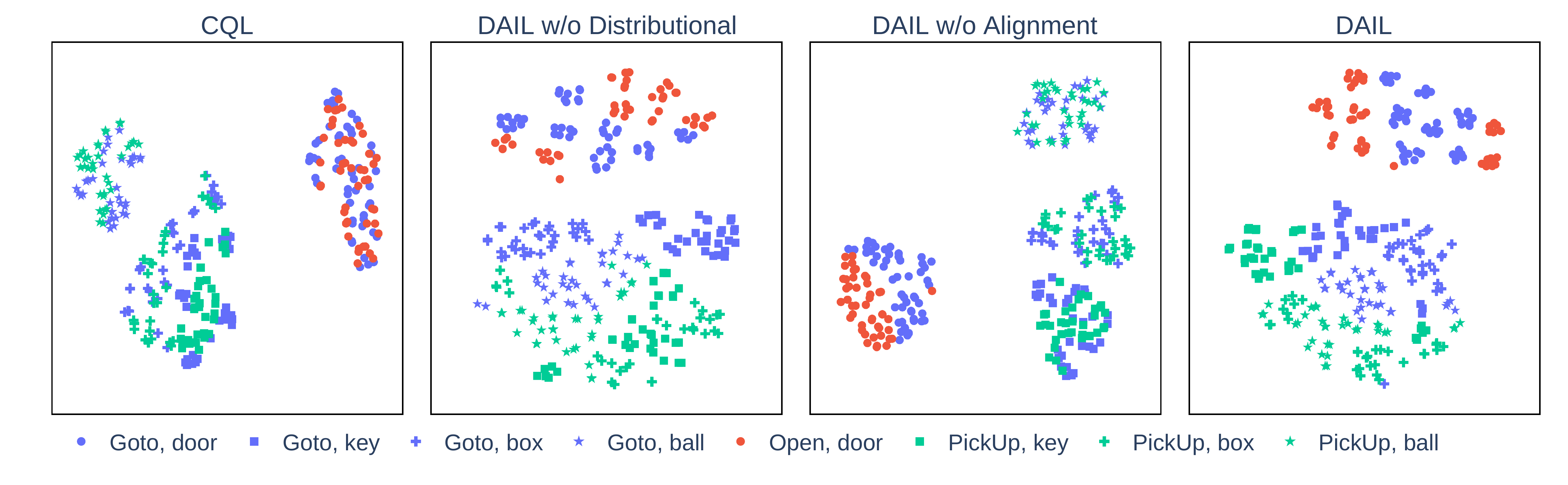}
    \vspace{-10pt}
    \caption{T-SNE visualization of instructions in BabyAI tasks. The figure distinguishes between different task categories~(e.g., Open) and target object types~(e.g., box), using marker colors and shapes to represent each separately.
    For example, ``pick up a red box'' corresponds to \squaresymbolp, ``go to a green box'' corresponds to \textcolor{myGreen}{$\mathrel{\textbf{+}}$}.}
    \centering
    \label{fig:visual}
\end{figure*}

\subsection{Visualization}
\label{subsec:vis}
To better understand how our proposed method enhances the performance, we show the t-SNE~\cite{van2008visualizing} results of the language instructions internal embedding on BabyAI $\verb|SynthLoc|$. Each point represents the internal representation of a unique language instruction within the policy network.
The experimental results in Figure \ref{fig:visual} show that vanilla RL can only marginally separate some broad task categories, but fails at distinguishing \textcolor{myGreen}{\texttt{PickUp}} and \textcolor{myBlue}{\texttt{Goto}}. Moreover, it is completely confused between \textcolor{myOrange}{\texttt{Open}~(door)} and \textcolor{myBlue}{\texttt{Goto}~(door)}.
Alignment and distributional methods help discriminate tasks between and within categories. Our method substantially enhances task representation by clearly differentiating between task categories and target object types (for example, separating \circlesymbolb~and \circlesymbolo, \textcolor{myBlue}{$\star$}~and \textcolor{myGreen}{$\star$}). 

For more precise visualization, we use the same method to visualize the task representations of algorithms on the same task category, illustrated in Figure~\ref{fig:cql-pickupgoto} in Appendix~\ref{appendix:vis}. 
Our method effectively distinguishes all tasks in these two categories without overlapping confusion and group similar tasks into smaller clusters (\textcolor{red}{\rule{1ex}{1ex}}, \tikz\draw[fill=blue,draw=blue] (0,0) circle (0.5ex);, \textcolor{purple}{$\mathrel{\textbf{+}}$}, \textcolor{red}{$\star$} and so on). We conduct the visualization experiments under 3 training seeds, all of which yield consistent experimental conclusions. Moreover, we quantitatively measure the clustering quality of our proposed components with the Silhouette score\cite{rousseeuw1987silhouettes} using the learned language embeddings. The experimental results in Table \ref{table:clustering} in Appendix \ref{appendix:add-res} demonstrate that our method effectively enhances clustering performance, which aligns with the experimental findings from the visualization.
Details of the quantitative evaluation and more visualizations of task representations can be referred in Appendix \ref{appendix:add-res} and \ref{appendix:vis}, respectively.

\subsection{Ablation Studies}
\textbf{Ablation of components.} To study the contribution of each component in our learning framework, we conduct the following ablation study.
We compare the performance of algorithms that only apply trajectory-wise alignment or distributional language-guided policy alone with our method on \texttt{SynthLoc}. 
The experimental results in Table~\ref{table:ablation_main} show that both modules significantly improve the performance over vanilla CQL on in-distribution and out-of-distribution tasks.
Further, combining both components can achieve the best performance compared to other approaches. 

\begin{table*}[ht]
    \caption{Ablation results for components of our method. Each score is evaluated over 3 seeds.}
    \begin{center}
        \begin{tabular}{c|c|c|c|c}
        \toprule
        \multirow{2}{*}{Algorithm} & \multicolumn{2}{c|}{In Distribution} & \multicolumn{2}{c}{Out of Distribution}\\
        \cmidrule(lr){2-3} \cmidrule(lr){4-5}
            & PutNext & All & PutNext & All \\
        \midrule
            CQL  & 25.6\textpm2.5 & 78.1\textpm1.6 & 27.6\textpm0.8 & 72.6\textpm0.4 \\
            {DAIL w/o Distributional} & 39.6\textpm0.6 & 83.3\textpm0.2 & 39.3\textpm1.7 & 77.3\textpm0.8\\
            {DAIL w/o Alignment} & 39.1\textpm1.0 & 82.2\textpm1.3 & 32.0\textpm0.9 & 75.1\textpm1.6\\
        \midrule
            DAIL  & \textbf{57.9\textpm0.9} & \textbf{89.2\textpm0.5} & \textbf{49.1\textpm1.8} & \textbf{81.7\textpm1.3}\\
        \bottomrule
        \end{tabular}
    \end{center}
    \label{table:ablation_main}
\end{table*}

\textbf{Ablation of alignment weight $\lambda$.} In Equation~\ref{eq: total loss}, $\lambda$ is the weight of the alignment loss.
For this reason, we evaluate the choice of $\lambda$ in BabyAI tasks with various $\lambda$.
The experimental results in Figure~\ref{fig:lambda_main} show that there is no noticeable performance difference between $\lambda=0.2$ and $\lambda=1$. The performance begins to degrade when the influence of the loss is either too small~($\lambda=0.01$) or too large~($\lambda=2$). Therefore, we recommend choosing a value between 0.2 and 1. 

\begin{figure}[ht]
    \centering
    \includegraphics[width=0.5\textwidth]{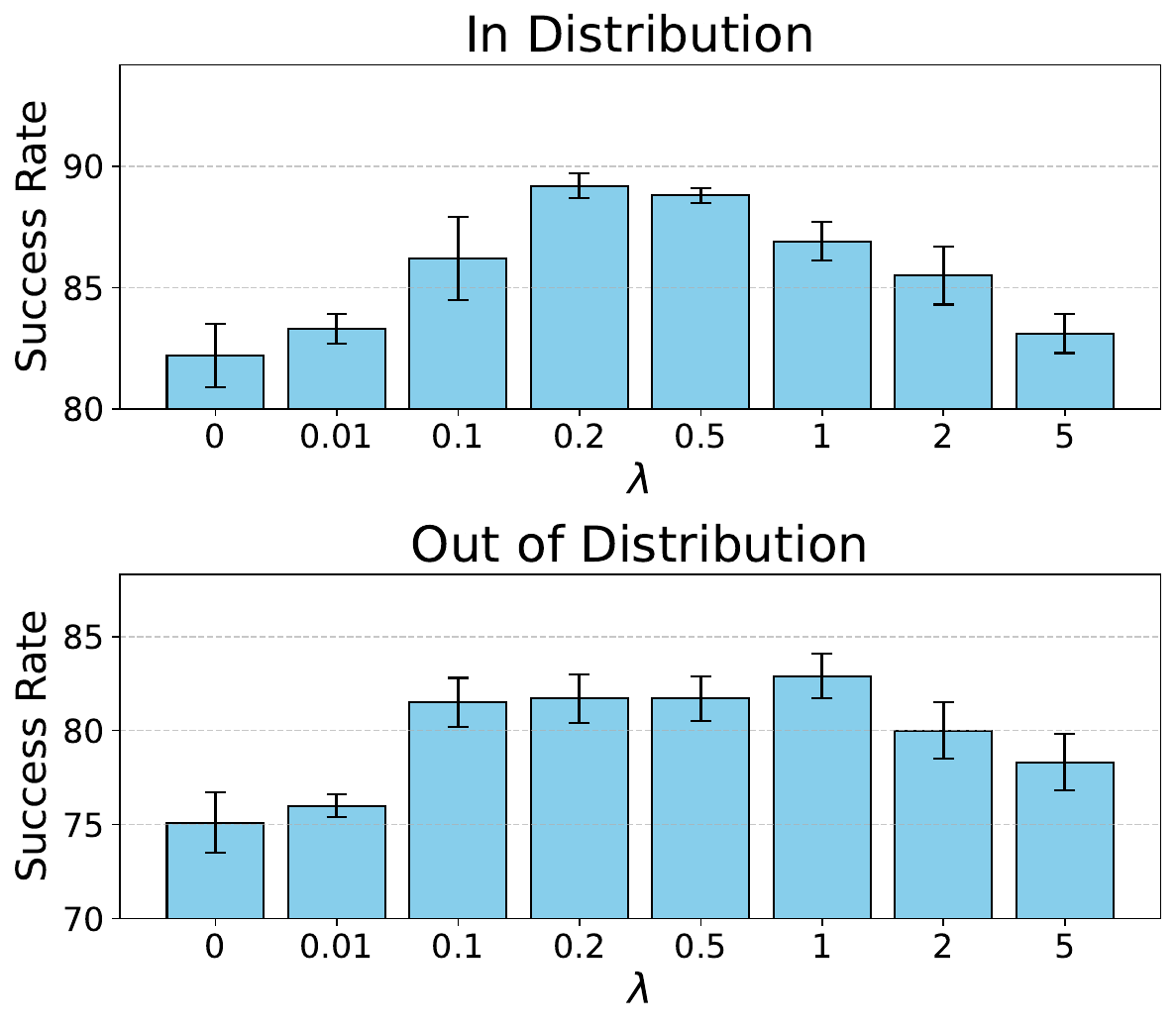} 
    \caption{Percent difference of the performance of an ablation over $\lambda$, compared to the average results of all $\lambda$s evaluated.}
    \label{fig:lambda_main}
\end{figure}

\section{Conclusion and Future Work}
\label{sec:conclusion}
In this work, we aim to address the task discrimination and comprehension challenges in language-conditioned RL. To achieve this, we propose a novel method called DAIL, which incorporates a distributional language-guided policy and a trajectory-wise semantic alignment module. 
We first theoretically demonstrate that distributional RL methods are more sample-efficient than traditional RL methods for learning in language-conditioned tasks. 
We then perform extensive experiments on both structured and visual observation benchmarks. The experimental results and visualization analysis show that our method learns language instruction representations with clearer semantics.
The simplicity and robustness of DAIL make it easily adaptable as a plug-in for other methods tackling language-conditioned problems.
While our method is theoretically and empirically validated, demonstrating its significant advantages in language-conditioned tasks, several limitations remain. First, due to experimental constraints, we are unable to test our method in real-world scenes to further validate the method’s effectiveness and robustness, which we will consider in future work. Second, our theoretical analysis of distributional RL's advantages relies on the assumptions of offline RL. Although we believe that this approach remains effective in online settings, we defer the theoretical analysis to future work.

\begin{ack}
This work is supported by Strategic Priority Research Program of the Chinese Academy of Sciences (No.XDA27040200).
\end{ack}



\bibliographystyle{plain}
\bibliography{neurips_2025}


\appendix



\newpage
\appendix

\clearpage
\section*{NeurIPS Paper Checklist}

\begin{enumerate}

\item {\bf Claims}
    \item[] Question: Do the main claims made in the abstract and introduction accurately reflect the paper's contributions and scope?
    \item[] Answer: \answerYes{} 
    \item[] Justification: The main claims in the abstract and introduction accurately summarize the paper's contributions, particularly in highlighting the proposed method's effectiveness in reducing task ambiguity and its robustness under noisy, offline settings.
    \item[] Guidelines:
    \begin{itemize}
        \item The answer NA means that the abstract and introduction do not include the claims made in the paper.
        \item The abstract and/or introduction should clearly state the claims made, including the contributions made in the paper and important assumptions and limitations. A No or NA answer to this question will not be perceived well by the reviewers. 
        \item The claims made should match theoretical and experimental results, and reflect how much the results can be expected to generalize to other settings. 
        \item It is fine to include aspirational goals as motivation as long as it is clear that these goals are not attained by the paper. 
    \end{itemize}

\item {\bf Limitations}
    \item[] Question: Does the paper discuss the limitations of the work performed by the authors?
    \item[] Answer: \answerYes{} 
    \item[] Justification: We discuss the limitations of our method in Section~\ref{sec:conclusion}, including experimental settings and theoretical assumptions.
    \item[] Guidelines:
    \begin{itemize}
        \item The answer NA means that the paper has no limitation while the answer No means that the paper has limitations, but those are not discussed in the paper. 
        \item The authors are encouraged to create a separate "Limitations" section in their paper.
        \item The paper should point out any strong assumptions and how robust the results are to violations of these assumptions (e.g., independence assumptions, noiseless settings, model well-specification, asymptotic approximations only holding locally). The authors should reflect on how these assumptions might be violated in practice and what the implications would be.
        \item The authors should reflect on the scope of the claims made, e.g., if the approach was only tested on a few datasets or with a few runs. In general, empirical results often depend on implicit assumptions, which should be articulated.
        \item The authors should reflect on the factors that influence the performance of the approach. For example, a facial recognition algorithm may perform poorly when image resolution is low or images are taken in low lighting. Or a speech-to-text system might not be used reliably to provide closed captions for online lectures because it fails to handle technical jargon.
        \item The authors should discuss the computational efficiency of the proposed algorithms and how they scale with dataset size.
        \item If applicable, the authors should discuss possible limitations of their approach to address problems of privacy and fairness.
        \item While the authors might fear that complete honesty about limitations might be used by reviewers as grounds for rejection, a worse outcome might be that reviewers discover limitations that aren't acknowledged in the paper. The authors should use their best judgment and recognize that individual actions in favor of transparency play an important role in developing norms that preserve the integrity of the community. Reviewers will be specifically instructed to not penalize honesty concerning limitations.
    \end{itemize}

\item {\bf Theory assumptions and proofs}
    \item[] Question: For each theoretical result, does the paper provide the full set of assumptions and a complete (and correct) proof?
    \item[] Answer: \answerYes{} 
    \item[] Justification: In the main text, we mainly conduct theoretical analysis in Section \ref{sec:toy} and \ref{sec: distributional}. The paper provides the full set of assumptions and a complete proof in Appendix \ref{appendix:theory}.
    \item[] Guidelines:
    \begin{itemize}
        \item The answer NA means that the paper does not include theoretical results. 
        \item All the theorems, formulas, and proofs in the paper should be numbered and cross-referenced.
        \item All assumptions should be clearly stated or referenced in the statement of any theorems.
        \item The proofs can either appear in the main paper or the supplemental material, but if they appear in the supplemental material, the authors are encouraged to provide a short proof sketch to provide intuition. 
        \item Inversely, any informal proof provided in the core of the paper should be complemented by formal proofs provided in appendix or supplemental material.
        \item Theorems and Lemmas that the proof relies upon should be properly referenced. 
    \end{itemize}

    \item {\bf Experimental result reproducibility}
    \item[] Question: Does the paper fully disclose all the information needed to reproduce the main experimental results of the paper to the extent that it affects the main claims and/or conclusions of the paper (regardless of whether the code and data are provided or not)?
    \item[] Answer: \answerYes{} 
    \item[] Justification: We discuss the implementation of our method in Section \ref{sec: implementation} and details of experiments in Section \ref{sec:exp} and Appendix \ref{appendix:implement}.
    \item[] Guidelines:
    \begin{itemize}
        \item The answer NA means that the paper does not include experiments.
        \item If the paper includes experiments, a No answer to this question will not be perceived well by the reviewers: Making the paper reproducible is important, regardless of whether the code and data are provided or not.
        \item If the contribution is a dataset and/or model, the authors should describe the steps taken to make their results reproducible or verifiable. 
        \item Depending on the contribution, reproducibility can be accomplished in various ways. For example, if the contribution is a novel architecture, describing the architecture fully might suffice, or if the contribution is a specific model and empirical evaluation, it may be necessary to either make it possible for others to replicate the model with the same dataset, or provide access to the model. In general. releasing code and data is often one good way to accomplish this, but reproducibility can also be provided via detailed instructions for how to replicate the results, access to a hosted model (e.g., in the case of a large language model), releasing of a model checkpoint, or other means that are appropriate to the research performed.
        \item While NeurIPS does not require releasing code, the conference does require all submissions to provide some reasonable avenue for reproducibility, which may depend on the nature of the contribution. For example
        \begin{enumerate}
            \item If the contribution is primarily a new algorithm, the paper should make it clear how to reproduce that algorithm.
            \item If the contribution is primarily a new model architecture, the paper should describe the architecture clearly and fully.
            \item If the contribution is a new model (e.g., a large language model), then there should either be a way to access this model for reproducing the results or a way to reproduce the model (e.g., with an open-source dataset or instructions for how to construct the dataset).
            \item We recognize that reproducibility may be tricky in some cases, in which case authors are welcome to describe the particular way they provide for reproducibility. In the case of closed-source models, it may be that access to the model is limited in some way (e.g., to registered users), but it should be possible for other researchers to have some path to reproducing or verifying the results.
        \end{enumerate}
    \end{itemize}

\item {\bf Open access to data and code}
    \item[] Question: Does the paper provide open access to the data and code, with sufficient instructions to faithfully reproduce the main experimental results, as described in supplemental material?
    \item[] Answer: \answerYes{} 
    \item[] Justification: We provide data and code in the supplemental material.
    \item[] Guidelines:
    \begin{itemize}
        \item The answer NA means that paper does not include experiments requiring code.
        \item Please see the NeurIPS code and data submission guidelines (\url{https://nips.cc/public/guides/CodeSubmissionPolicy}) for more details.
        \item While we encourage the release of code and data, we understand that this might not be possible, so “No” is an acceptable answer. Papers cannot be rejected simply for not including code, unless this is central to the contribution (e.g., for a new open-source benchmark).
        \item The instructions should contain the exact command and environment needed to run to reproduce the results. See the NeurIPS code and data submission guidelines (\url{https://nips.cc/public/guides/CodeSubmissionPolicy}) for more details.
        \item The authors should provide instructions on data access and preparation, including how to access the raw data, preprocessed data, intermediate data, and generated data, etc.
        \item The authors should provide scripts to reproduce all experimental results for the new proposed method and baselines. If only a subset of experiments are reproducible, they should state which ones are omitted from the script and why.
        \item At submission time, to preserve anonymity, the authors should release anonymized versions (if applicable).
        \item Providing as much information as possible in supplemental material (appended to the paper) is recommended, but including URLs to data and code is permitted.
    \end{itemize}

\item {\bf Experimental setting/details}
    \item[] Question: Does the paper specify all the training and test details (e.g., data splits, hyperparameters, how they were chosen, type of optimizer, etc.) necessary to understand the results?
    \item[] Answer: \answerYes{} 
    \item[] Justification: We discuss the experimental details of experiments in Section \ref{sec:exp} and Appendix \ref{appendix:implement}. We provide the hyperparameters and network architectures in Appendix \ref{appendix: architecture}.
    \item[] Guidelines:
    \begin{itemize}
        \item The answer NA means that the paper does not include experiments.
        \item The experimental setting should be presented in the core of the paper to a level of detail that is necessary to appreciate the results and make sense of them.
        \item The full details can be provided either with the code, in appendix, or as supplemental material.
    \end{itemize}

\item {\bf Experiment statistical significance}
    \item[] Question: Does the paper report error bars suitably and correctly defined or other appropriate information about the statistical significance of the experiments?
    \item[] Answer: \answerYes{} 
    \item[] Justification: We provide the standard deviation in all the experimental results in the paper.
    \item[] Guidelines:
    \begin{itemize}
        \item The answer NA means that the paper does not include experiments.
        \item The authors should answer "Yes" if the results are accompanied by error bars, confidence intervals, or statistical significance tests, at least for the experiments that support the main claims of the paper.
        \item The factors of variability that the error bars are capturing should be clearly stated (for example, train/test split, initialization, random drawing of some parameter, or overall run with given experimental conditions).
        \item The method for calculating the error bars should be explained (closed form formula, call to a library function, bootstrap, etc.)
        \item The assumptions made should be given (e.g., Normally distributed errors).
        \item It should be clear whether the error bar is the standard deviation or the standard error of the mean.
        \item It is OK to report 1-sigma error bars, but one should state it. The authors should preferably report a 2-sigma error bar than state that they have a 96\% CI, if the hypothesis of Normality of errors is not verified.
        \item For asymmetric distributions, the authors should be careful not to show in tables or figures symmetric error bars that would yield results that are out of range (e.g. negative error rates).
        \item If error bars are reported in tables or plots, The authors should explain in the text how they were calculated and reference the corresponding figures or tables in the text.
    \end{itemize}

\item {\bf Experiments compute resources}
    \item[] Question: For each experiment, does the paper provide sufficient information on the computer resources (type of compute workers, memory, time of execution) needed to reproduce the experiments?
    \item[] Answer: \answerYes{} 
    \item[] Justification: The information or computer resources are provided in Appendix \ref{appendix: architecture}.
    \item[] Guidelines:
    \begin{itemize}
        \item The answer NA means that the paper does not include experiments.
        \item The paper should indicate the type of compute workers CPU or GPU, internal cluster, or cloud provider, including relevant memory and storage.
        \item The paper should provide the amount of compute required for each of the individual experimental runs as well as estimate the total compute. 
        \item The paper should disclose whether the full research project required more compute than the experiments reported in the paper (e.g., preliminary or failed experiments that didn't make it into the paper). 
    \end{itemize}
    
\item {\bf Code of ethics}
    \item[] Question: Does the research conducted in the paper conform, in every respect, with the NeurIPS Code of Ethics \url{https://neurips.cc/public/EthicsGuidelines}?
    \item[] Answer: \answerYes{} 
    \item[] Justification: We confirm that our research conform with the NeurIPS Code of Ethics.
    \item[] Guidelines:
    \begin{itemize}
        \item The answer NA means that the authors have not reviewed the NeurIPS Code of Ethics.
        \item If the authors answer No, they should explain the special circumstances that require a deviation from the Code of Ethics.
        \item The authors should make sure to preserve anonymity (e.g., if there is a special consideration due to laws or regulations in their jurisdiction).
    \end{itemize}

\item {\bf Broader impacts}
    \item[] Question: Does the paper discuss both potential positive societal impacts and negative societal impacts of the work performed?
    \item[] Answer: \answerNA{} 
    \item[] Justification: Our work focuses on addressing task ambiguity in offline reinforcement learning; this work does not present any foreseeable societal consequences.
    \item[] Guidelines:
    \begin{itemize}
        \item The answer NA means that there is no societal impact of the work performed.
        \item If the authors answer NA or No, they should explain why their work has no societal impact or why the paper does not address societal impact.
        \item Examples of negative societal impacts include potential malicious or unintended uses (e.g., disinformation, generating fake profiles, surveillance), fairness considerations (e.g., deployment of technologies that could make decisions that unfairly impact specific groups), privacy considerations, and security considerations.
        \item The conference expects that many papers will be foundational research and not tied to particular applications, let alone deployments. However, if there is a direct path to any negative applications, the authors should point it out. For example, it is legitimate to point out that an improvement in the quality of generative models could be used to generate deepfakes for disinformation. On the other hand, it is not needed to point out that a generic algorithm for optimizing neural networks could enable people to train models that generate Deepfakes faster.
        \item The authors should consider possible harms that could arise when the technology is being used as intended and functioning correctly, harms that could arise when the technology is being used as intended but gives incorrect results, and harms following from (intentional or unintentional) misuse of the technology.
        \item If there are negative societal impacts, the authors could also discuss possible mitigation strategies (e.g., gated release of models, providing defenses in addition to attacks, mechanisms for monitoring misuse, mechanisms to monitor how a system learns from feedback over time, improving the efficiency and accessibility of ML).
    \end{itemize}
    
\item {\bf Safeguards}
    \item[] Question: Does the paper describe safeguards that have been put in place for responsible release of data or models that have a high risk for misuse (e.g., pretrained language models, image generators, or scraped datasets)?
    \item[] Answer: \answerNA{} 
    \item[] Justification: The paper poses no such risks.
    \item[] Guidelines:
    \begin{itemize}
        \item The answer NA means that the paper poses no such risks.
        \item Released models that have a high risk for misuse or dual-use should be released with necessary safeguards to allow for controlled use of the model, for example by requiring that users adhere to usage guidelines or restrictions to access the model or implementing safety filters. 
        \item Datasets that have been scraped from the Internet could pose safety risks. The authors should describe how they avoided releasing unsafe images.
        \item We recognize that providing effective safeguards is challenging, and many papers do not require this, but we encourage authors to take this into account and make a best faith effort.
    \end{itemize}

\item {\bf Licenses for existing assets}
    \item[] Question: Are the creators or original owners of assets (e.g., code, data, models), used in the paper, properly credited and are the license and terms of use explicitly mentioned and properly respected?
    \item[] Answer: \answerYes{} 
    \item[] Justification: We properly credit the existing assets and respect the terms of use in our research.
    \item[] Guidelines:
    \begin{itemize}
        \item The answer NA means that the paper does not use existing assets.
        \item The authors should cite the original paper that produced the code package or dataset.
        \item The authors should state which version of the asset is used and, if possible, include a URL.
        \item The name of the license (e.g., CC-BY 4.0) should be included for each asset.
        \item For scraped data from a particular source (e.g., website), the copyright and terms of service of that source should be provided.
        \item If assets are released, the license, copyright information, and terms of use in the package should be provided. For popular datasets, \url{paperswithcode.com/datasets} has curated licenses for some datasets. Their licensing guide can help determine the license of a dataset.
        \item For existing datasets that are re-packaged, both the original license and the license of the derived asset (if it has changed) should be provided.
        \item If this information is not available online, the authors are encouraged to reach out to the asset's creators.
    \end{itemize}

\item {\bf New assets}
    \item[] Question: Are new assets introduced in the paper well documented and is the documentation provided alongside the assets?
    \item[] Answer: \answerYes{} 
    \item[] Justification: The model and code we propose are well documented.
    \item[] Guidelines:
    \begin{itemize}
        \item The answer NA means that the paper does not release new assets.
        \item Researchers should communicate the details of the dataset/code/model as part of their submissions via structured templates. This includes details about training, license, limitations, etc. 
        \item The paper should discuss whether and how consent was obtained from people whose asset is used.
        \item At submission time, remember to anonymize your assets (if applicable). You can either create an anonymized URL or include an anonymized zip file.
    \end{itemize}

\item {\bf Crowdsourcing and research with human subjects}
    \item[] Question: For crowdsourcing experiments and research with human subjects, does the paper include the full text of instructions given to participants and screenshots, if applicable, as well as details about compensation (if any)? 
    \item[] Answer: \answerNA{} 
    \item[] Justification: The paper does not involve crowdsourcing nor research with human subjects.
    \item[] Guidelines:
    \begin{itemize}
        \item The answer NA means that the paper does not involve crowdsourcing nor research with human subjects.
        \item Including this information in the supplemental material is fine, but if the main contribution of the paper involves human subjects, then as much detail as possible should be included in the main paper. 
        \item According to the NeurIPS Code of Ethics, workers involved in data collection, curation, or other labor should be paid at least the minimum wage in the country of the data collector. 
    \end{itemize}

\item {\bf Institutional review board (IRB) approvals or equivalent for research with human subjects}
    \item[] Question: Does the paper describe potential risks incurred by study participants, whether such risks were disclosed to the subjects, and whether Institutional Review Board (IRB) approvals (or an equivalent approval/review based on the requirements of your country or institution) were obtained?
    \item[] Answer: \answerNA{} 
    \item[] Justification: The paper does not involve crowdsourcing nor research with human subjects.
    \item[] Guidelines:
    \begin{itemize}
        \item The answer NA means that the paper does not involve crowdsourcing nor research with human subjects.
        \item Depending on the country in which research is conducted, IRB approval (or equivalent) may be required for any human subjects research. If you obtained IRB approval, you should clearly state this in the paper. 
        \item We recognize that the procedures for this may vary significantly between institutions and locations, and we expect authors to adhere to the NeurIPS Code of Ethics and the guidelines for their institution. 
        \item For initial submissions, do not include any information that would break anonymity (if applicable), such as the institution conducting the review.
    \end{itemize}

\item {\bf Declaration of LLM usage}
    \item[] Question: Does the paper describe the usage of LLMs if it is an important, original, or non-standard component of the core methods in this research? Note that if the LLM is used only for writing, editing, or formatting purposes and does not impact the core methodology, scientific rigorousness, or originality of the research, declaration is not required.
    \item[] Answer: \answerNA{} 
    \item[] Justification: The core method development in this research does not involve LLMs as any important, original, or non-standard components.
    \item[] Guidelines:
    \begin{itemize}
        \item The answer NA means that the core method development in this research does not involve LLMs as any important, original, or non-standard components.
        \item Please refer to our LLM policy (\url{https://neurips.cc/Conferences/2025/LLM}) for what should or should not be described.
    \end{itemize}

\end{enumerate}

\clearpage

\section{Related Work}
\label{appendix:related}

\paragraph{Language-conditioned Agents.}
Two primary approaches for enabling an agent to follow human instructions are reinforcement learning (RL) and imitation learning (IL). 
Some approaches in language-conditioned RL focus on aligning language with policies or performing feature extraction to integrate language information into the learning process~\cite{chaplot2018gated, kaplan2017beating, andreas2017modular, misra2017mapping, shao2021concept2robot, bing2023meta}, while others prioritize reward shaping~\cite{squire2015grounding, fu2019language, goyal2019using, goyal2021pixl2r, shao2021concept2robot} to formulate language-conditioned reward functions.
However, most methods rely on simulators and are limited in scalability when applied in the offline setting. 
On the other hand, IL is designed to learn from large datasets but is heavily dependent on the quality of the data~\cite{hussein2017imitation}. To mitigate this dependency and enhance data efficiency, recent works have leveraged crowd-sourced annotations~\cite{nair2022learning}, multimodal alignment~\cite{jang2022bc, myers2023goal, nair2022r3m}, or carefully designed model architectures~\cite{mees2022matters, stepputtis2020language}. These methods enhance IL by supplying richer supervisory signals or building more robust model structures. 
Despite the success of IL methods in solving complex tasks, the scale and quality of the dataset remain significant constraints, particularly without external annotations or auxiliary datasets.

\paragraph{Offline Goal-conditioned RL.}
Learning a task-specific policy from demonstrations with different goals presents a significant challenge in offline goal-conditioned reinforcement learning (GCRL). By relabeling trajectories and treating intermediate states as additional goal states~\cite{andrychowicz2017hindsight, levy2017learning, li2020generalized, yang2022rethinking, yu2021conservative, ma2022offline}, offline GCRL has achieved notable improvements in sample efficiency. However, this approach is challenging to replicate in language-conditioned settings. First, replicating a specific goal state is particularly challenging in environments characterized by randomness or partial observability. Second, directly using goal states that lack semantic context as labels proves ineffective for learning. While some works use language instructions to guide planning~\cite{colas2020language, peng2023conceptual, pang2023natural}, the ambiguity of language can complicate this process. Several studies have applied hierarchical RL to guide policy through subgoals~\cite{levy2017learning, zhang2020generating, chane2021goal, park2024hiql, li2022hierarchical} and demonstrate strong performance. However, they rely heavily on a high-level policy that accurately decomposes language instructions into subgoals. 
\clearpage
\section{Algorithm}
\label{appendix:pseudocode}
\renewcommand{\algorithmiccomment}[1]{ // #1}

\begin{algorithm}[h]
   \caption{Distributional Aligned Learning}
   \label{alg:main}
    \begin{algorithmic}
    \REQUIRE Offline dataset $\mathcal{D}=\{(\tau,l)\}$, target network update frequency $K_{\rm update}$ and support atoms $Z_{\rm atoms}$. 
    \STATE Initialize policy parameters $\theta$ and target policy parameters $\hat{\theta}$.
    \FOR{each gradient step}
        \STATE Sample batch $\mathcal{B}=\{(\tau,l)\}_{i=1}^N\sim \mathcal{D}$
        \STATE Encode instructions: $\{x_l\}_{i=1}^N$
        \STATE Compute the history information $\{x_0,x_1,..,x_T\}_{i=1}^N$ using (\ref{eq: x_t})
        
        \STATE\COMMENT{\textbf{Distributional Language-Guide Policy}}
        \FOR{each transition $(s_t,a_t,r_t,s_{t+1},l)$ in batch $\mathcal{B}$}
            \STATE Compute estimated value distribution $Z_{\theta}(s_t,a_t,x_{t-1},l)$ using (\ref{eq: z_theta})
            \STATE Compute projected update $\Phi \hat{\mathcal{T}}Z_\theta(s,a,x,l)$ using (\ref{eq: Tz_theta})
            \STATE 
            $l_{\text{Dist}}(\theta)\leftarrow D_{KL}(\Phi\hat{\mathcal{T}}Z_{\hat{\theta}}(s_t,a_t,x_{t-1},l)||Z_\theta(s_t,a_t,x_{t-1},l))$
        \ENDFOR
    
        \STATE\COMMENT{\textbf{Trajectory-Wise Semantic Alignment}}
        \FOR{each trajectory-instruction pair $(\tau, l)$}
            \FOR{each pair $(\tau', l')$ other than $(\tau, l)$}
                \STATE View $l'$ as negative instruction $l^-$, and compute contrastive loss $l_c$ by using (\ref{eq: align loss})
            \ENDFOR
        \ENDFOR
        \STATE Add up the losses above to get $\mathcal{L}_{\rm Dist}\leftarrow\sum l_{\rm Dist}$, and $\mathcal{L}_{c}\leftarrow\sum_{(\tau,l)} l_{c}$
        \STATE Compute conservative Q-learning loss $\mathcal{L}_{\rm CQL}$ using (\ref{eq: cql})
        \STATE $\mathcal{L}_{\rm tot}\leftarrow\mathcal{L}_{\rm Dist}+\lambda\mathcal{L}_{c}+\alpha\mathcal{L}_{\rm CQL}$
        \STATE Update $\theta\leftarrow\theta-\eta\nabla_{\theta}\mathcal{L}_{\rm tot}(\theta)$
        \IF{step \% $K_{\rm update}$=0}
            \STATE Update target network: $\hat{\theta}\leftarrow\theta$
        \ENDIF
    \ENDFOR
    \STATE Extract policy: $\pi(s,x,l)\leftarrow \arg\max_a \sum_{i=1}^M p_i(s,a,x,l)z_i$
    \RETURN $\pi(s,x,l)$
\end{algorithmic}
\end{algorithm}



\clearpage
\section{Details on Distributional Language-Guided Policy}
\label{appendix:distrl}
In this section, we provide a more detailed introduction to the specific computational workflow of our proposed Distributional Language-Guided Policy. 
We follow the approach in~\cite{bellemare2017distributional}, employing discrete atoms to model and approximate the original value distribution. To facilitate understanding, we have included the calculation methodology for this section of the work here. 

Specifically, we model the discrete value distribution with discrete units called atoms $\{ z_i=V_{\rm MIN}+i\Delta z\}_{i=0}^{M-1}$, which are uniformly spaced support points that discretize the range of possible returns into $[V_{MIN}, V_{MAX}]$.
$M \in \mathbb{N}$ is the number of atoms. $V_{\rm MAX}, V_{\rm MIN}, M$ are pre-defined parameters, which together decide the step between the atoms of the categorical value distribution: $\Delta z:=\frac{V_{\rm MAX}-V_{\rm MIN}}{M-1}$. 

To represent the discrete value distribution, we need to estimate the probability $p_i$, which means the probability for the discrete value equaling $z_i$. In practical implementation, we approximate the probabilities $p_i$ by a parametric model $\theta:\mathcal{S}\times A\times\mathcal{X} \times \mathcal{L}\to \mathbb{R}^M$. We use $\theta_i(\cdot, \cdot, \cdot, \cdot)$ to denote the $i$-th dimension of this model's output. Therefore, the discrete value distribution can be written as:
\begin{equation}
\begin{aligned}
Z_{\theta}(s,a,x,l)&=z_i~~~ \text{w.p.} ~~~p_i(s,a,x,l):=\frac{e^{\theta_i(s,a,x,l)}}{\sum_j e^{\theta_j(s,a,x,l)}}\\
Q_{\theta}(s,a,x,l)&=\mathbb{E}[Z_{\theta}(s,a,x,l)]=\sum_{i=0}^{M-1}p_{i}(s,a,x,l)z_i
\label{eq: z_theta}
\end{aligned}
\end{equation}
where $\sum_{i=0}^{M-1}p_i=1$. We now explain how to learn $\theta$ through RL. Given a transition $(s,a,r,s',l)$, the Bellman update for each atom $z_i$ is computed as:
\begin{equation}
\hat{\mathcal{T}}z_i=r+\gamma z_i
\end{equation}
The Bellman update $\hat{\mathcal{T}}z_i$ maps the original support points to new locations that do not align with predefined discrete atoms $\{z_i\}_{i=0}^{M-1}$, making it impossible to represent as a valid discrete distribution over the fixed atoms. Therefore, we need to project these values back onto the fixed support $\{z_0,...,z_{M-1}\}$. To do so, we distribute probability mass $p_i(s',a',x',l)$ to the nearest two atoms in $[V_{\rm MIN}, V_{\rm MAX}]$, where $a'=\pi(s',x',l)$ is the output of the greedy policy $\pi(\cdot, \cdot, \cdot)$. $x'$ is computed as $x'=h_{w}(x,(s,a))$. Ultimately, we can compute the projected probability for $\hat{\mathcal{T}}z_i$ via a local interpolation mechanism:
\begin{equation}
    \text{Projected probability}=\begin{cases}
        \frac{z_{j+1}-\hat{\mathcal{T}}z_i}{\Delta z}\cdot p_i(s',a',x',l), & \text{assigned to}~z_j \\
        \frac{\hat{\mathcal{T}}z_i-z_j}{\Delta z}\cdot p_i(s',a',x',l), & \text{assigned to}~z_{j+1}
        \end{cases}
\end{equation}
Sum all the projected probabilities from all $\hat{\mathcal{T}}z_j$, we can compute the projected update probabilities by:
\begin{equation}
(\Phi \hat{\mathcal{T}}Z_\theta(s,a,x,l))_i=\sum_{j=0}^{M-1}[1-\frac{|[\hat{\mathcal{T}}z_j]_{V_{MIN}}^{V_{MAX}}-z_i|}{\Delta z}]_0^1~p_j(s',a',x',l)
\label{eq: Tz_theta}
\end{equation}
where $[\cdot]_{a}^b$ bounds the argument in the range $[a,b]$. Given the project update $\Phi\hat{\mathcal{T}}Z_{\hat{\theta}}$ and the current estimates of the discrete value distributions $Z_{\theta}$, we can update $\theta$ by minimizing the KL divergence:
\begin{equation}
D_{KL}(\Phi\hat{\mathcal{T}}Z_{\hat{\theta}}(s,a,x,l)||Z_\theta(s,a,x,l))
\end{equation}
where $\hat{\theta}$ is the target network.
\setcounter{theorem}{0}
\clearpage
\section{Theoretical Analysis}
\label{appendix:theory}

\subsection{Insights Using Distributional RL}
\label{subsec:insight}

The key insight of applying distributional RL to language-conditioned tasks lies in its capacity to capture value distributions, which provides fine-grained task differentiation across various tasks. Traditional RL methods, however, rely on learning scalar value expectations, discarding critical distributional information, making it require more samples to discriminate between tasks properly. We first demonstrate this key insight through an extreme yet illustrative example as illustrated in Figure~\ref{fig:insight_appendix}.

\begin{figure*}[h]
    \centering
    \includegraphics[width=1\textwidth]{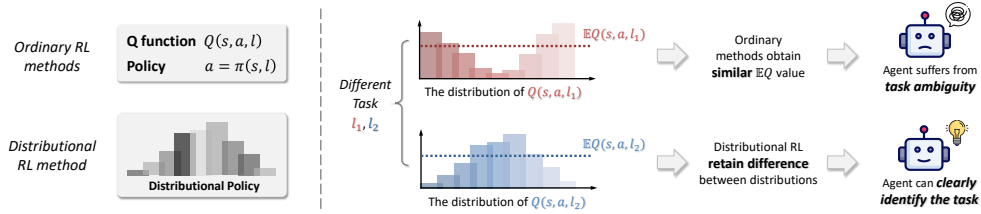}
    \vspace{-5pt}
    \caption{An illustrative case where $Q(s,a)$ functions in two tasks share the same expectations but have different distributions. In this case, traditional RL cannot discriminate between the tasks, while distributional RL can.}
    \centering
    \label{fig:insight_appendix}
\end{figure*}

Consider two distinct tasks $l_1$ and $l_2$ that share identical expected returns $Q_{\pi}(s,a,l_1)=Q_{\pi}(s,a,l_2)$ for a specific state-action pair $(s,a)$, but exhibit fundamentally divergent Q value distributions $Z(s,a,l_1)\neq Z(s,a,l_2)$. In this case, traditional RL methods relying on value estimation would inevitably conflate tasks $l_1,l_2$ at $(s,a)$, regardless of sample size, while distributional RL resolves this ambiguity through approximating the full distributions of values.

To formally validate this insight, we present the following theorem, accompanied by a rigorous proof.

\subsection{Theoretical Proof}

\begin{repdefinition}[Restatement of Definition~\ref{def: distintion}]
In a multi-task RL setting with known task instruction space $\mathcal{L}$ and unknown semantics space $\mathcal{G}$, for a task distinction threshold $\delta$ and sub-optimality gap $\epsilon$,  two task instructions \( l_i, l_j \in \mathcal{L}\) are considered with \textbf{different underlying semantics}, $g_i\not\leftrightarrow g_j$, if the expected Q-values under any shared $\epsilon$-optimal policy \( \pi \) satisfy:
\[
\mathbb{E}_{\pi}\left[|Q_{\pi}(s, a, l_i) - Q_{\pi}(s, a, l_j)|\right] \geq \delta.
\]
Conversely, if
\[
\mathbb{E}_{\pi}\left[|Q_{\pi}(s, a, l_i) - Q_{\pi}(s, a, l_j)|\right] \leq \delta/2,
\]
then \( l_i \) and \( l_j \) are considered with the \textbf{same underlying semantics}, $g_i\leftrightarrow g_j$, where $s_0\sim p_0(\cdot), a\sim \pi(\cdot|s),s'\sim p(\cdot|s,a)$. $V_{\pi}(s)\geq V^*(s)-\epsilon, \forall s \in \mathcal{S}$, $V^*$ is optimal value function.

In the case of distributional reinforcement learning, the Wasserstein distance \( W_1 \) between the return distributions \( Z(s, a) \) is used as the criterion. Specifically, \( l_i \) and \( l_j \) are considered to represent \textbf{different semantics} if
\[
\mathbb{E}_{\pi}\left[W_1\left(Z_{\pi}(s, a, l_i), Z_{\pi}(s, a, l_j)\right)\right] \geq d,
\]
and the \textbf{same semantics} if this expectation is less than or equal \( d/2 \).
\end{repdefinition}

\begin{theorem}[Sample Complexity for Task Instruction Disambiguation]
Consider an offline multi-task RL setting with \( M \) distinct tasks~(with different semantics). In direct Q-value estimate setting, suppose \( Q(s, a, l) \in [0, Q_{\max}],~ \forall (s,a,l) \sim \mathcal{D}\) with finite $Q_{\max}$, and the task distinction threshold is \( \delta > 0 \). When the number of training samples \( n_{\textup{value}} \) satisfies:
\[
n_{\textup{value}} \geq \frac{C_{\textup{value}} \log(3M^2 / \eta)}{\delta^2}.
\]
The mean value estimate algorithm achieves task-level semantic disambiguation with confidence at least \( 1 - \eta \).
In the distributional RL setting, let \( Z(s, a,l) \) denote the learned return distribution, and suppose the task distinction threshold is given by a 1-Wasserstein distance \( d > 0 \). Then, to ensure semantic disambiguation of task instructions with confidence at least \( 1 - \eta \), it suffices that:
\[
n_{\textup{dist}} \geq \frac{C_{\textup{dist}} \log(3M^2 / \eta)}{d^2},
\]
where \( C_{\textup{value}}, C_{\textup{dist}} >0 \) are universal constants depending on certain attributes of Q-value distribution.

\label{thm: sample-complexity-appendix}
\end{theorem}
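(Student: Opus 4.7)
The plan is to establish both bounds via a concentration-plus-union-bound argument, working uniformly over the $\binom{M}{2}$ task pairs. For every task instruction $l$, I would treat the training samples as i.i.d.\ draws of the (discounted) return random variable $Z^{\pi}(s,a,l)$ under the shared $\epsilon$-optimal policy $\pi$, and denote by $\hat Q(s,a,l)$ and $\hat Z(s,a,l)$ the empirical mean and empirical distribution built from $n$ such samples. The disambiguation test accepts $g_i \not\leftrightarrow g_j$ when the empirical statistic exceeds $3\delta/4$ (respectively $3d/4$) and accepts $g_i \leftrightarrow g_j$ otherwise; by the triangle inequality this test is correct whenever the pairwise estimation error is at most $\delta/4$ (respectively $d/4$), since the true quantity is either $\geq \delta$ or $\leq \delta/2$ by Definition~\ref{def: distintion}. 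So the whole problem reduces to controlling the two one-task estimation errors and paying the union bound.

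For the value-based bound I would invoke Hoeffding's inequality on the bounded return random variable supported in $[0,Q_{\max}]$ to obtain $\Pr\bigl[|\hat Q(s,a,l) - Q_\pi(s,a,l)| > t\bigr] \leq 2\exp\bigl(-2n t^{2}/Q_{\max}^{2}\bigr)$, then set $t = \delta/4$ and union-bound over the $M$ tasks and the $\binom{M}{2}$ pairs (the factor of $3$ inside $\log(3M^{2}/\eta)$ absorbs the per-task plus per-pair terms). Solving for $n$ yields the stated form with $C_{\text{value}} = O(Q_{\max}^{2})$. For the distributional bound the analogous tool is a concentration inequality for the empirical measure under the $W_{1}$ metric. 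Since $Z$ is supported on $[0,Q_{\max}]$, I would use the Kantorovich--Rubinstein identity $W_{1}(\mu,\nu) = \sup_{\|f\|_{\text{Lip}}\leq 1} \int f\,d(\mu-\nu)$ together with a Dvoretzky--Kiefer--Wolfowitz style tail bound on the empirical CDF, giving $\Pr[W_{1}(\hat Z, Z) > t] \leq C\exp(-c n t^{2}/Q_{\max}^{2})$; setting $t = d/4$ and applying the same union bound yields the stated $n_{\text{dist}}$ with $C_{\text{dist}} = O(Q_{\max}^{2})$ (or sharper in terms of a distribution-dependent spread parameter).

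The main obstacle is the concentration step in the distributional case: the generic $W_{1}$ rate for empirical measures on $\mathbb{R}^{d}$ has dimension-dependent logarithmic factors (Fournier--Guillin), and care is needed to keep the clean $O(1/\sqrt n)$ rate. Since returns here are scalar and bounded, I would bypass the general theory and instead go through the integrated CDF representation $W_{1}(\mu,\nu) = \int_{0}^{Q_{\max}} |F_{\mu}(x) - F_{\nu}(x)|\,dx$, combined with a uniform Hoeffding bound on the empirical CDF (DKW), which gives a sub-Gaussian tail with no hidden dimensional factors. A secondary subtlety is that the training samples underlying $\hat Z$ are trajectory returns under a fixed evaluation policy; I would assume, consistent with the offline RL assumptions flagged in the conclusion, that the dataset admits enough coverage to produce $n$ effective i.i.d.\ return samples per task, so the concentration step applies directly. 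Assembling the three ingredients---empirical concentration, the $\delta/4$ (or $d/4$) tolerance from Definition~\ref{def: distintion}, and a union bound over tasks and task pairs---produces both sample-complexity inequalities and hence the theorem.
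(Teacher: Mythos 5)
Your proposal is correct and follows the same overall skeleton as the paper's proof: per-task concentration, a triangle-inequality reduction using the gap between $\geq \delta$ (different semantics) and $\leq \delta/2$ (same semantics) from Definition~\ref{def: distintion}, and a union bound over the $\binom{M}{2}\approx M^2/2$ pairs producing the $\log(3M^2/\eta)$ factor with $C=\mathcal{O}(Q_{\max}^2)$. The differences are worth noting. First, the paper decomposes the deviation $\bigl|\mathbb{E}_{(s,a)\sim\mathcal{D}}[|\hat Q_i-\hat Q_j|]-\mathbb{E}_\pi[|Q_i-Q_j|]\bigr|$ into a sampling-fluctuation term plus two per-task bias terms, bounds each of the three at level $\delta/2$ via Hoeffding (yielding the factor $6$), and then argues via the ``extreme case'' $\mathbb{E}_\pi[|Q_i-Q_j|]=\delta$; crucially, it makes explicit the assumption that the dataset is collected by an $\epsilon$-optimal behavior policy so that the empirical statistics are unbiased (Equation~\ref{eq: dataset_assumption}), which you only flag informally as a coverage assumption --- in the paper this is a stated hypothesis of the argument, not background. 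Your calibrated-test formulation (accept ``different'' above $3\delta/4$) is cleaner and, unlike the paper's one-sided ambiguity event, controls both error directions; note, though, that with per-task Hoeffding at $t=\delta/4$ the pairwise error can reach $\delta/2$ and defeat the $3\delta/4$ threshold, so you should take $t=\delta/8$ --- a constant-factor repair absorbed into $C_{\textup{value}}$, and the paper's own three-terms-at-$\delta/2$ split has an analogous constant-level looseness. Second, for the distributional half, the paper cites Hoeffding together with Corollary 5.2 and Remark 5 of Lei (2020) for concentration of empirical measures in $W_1$, whereas you derive the sub-Gaussian tail elementarily from the one-dimensional identity $W_1(\mu,\nu)=\int_0^{Q_{\max}}|F_\mu-F_\nu|\,dx$ and the DKW inequality (giving $\Pr[W_1(\hat Z,Z)>t]\leq 2\exp(-2nt^2/Q_{\max}^2)$); this is self-contained, correctly sidesteps the Fournier--Guillin dimension-dependent rates you mention, and buys a fully explicit constant where the paper relies on an external result, at no loss of generality since returns here are scalar and bounded.
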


\begin{proof}
    We denote $Q_{i}(s,a)$ as a shorthand for $Q_{\pi}(s,a,l_i)$ and $Z_{i}(s,a)$ as a shorthand for $Z_{\pi}(s,a,l_i)$ in the following discussion.
    For direct offline Q-learning, when the task distinction threshold is \( \delta > 0 \), we define the semantic ambiguity event $\mathcal{S}_a$ as:
    \begin{align}
        \mathbb{E}_{(s,a)\sim\mathcal{D}}\left[|\hat{Q}_{i}(s,a)-\hat{Q}_{j}(s,a)|\right] &\leq \frac{\delta}{2}, \\ 
        \mathbb{E}_{\pi}\left[|Q_{i}(s, a) - Q_{j}(s, a)|\right] &\geq \delta, g_i\not\leftrightarrow g_j,
    \label{eq: ambiguity}
    \end{align}
    where $\mathcal{D}$ is the offline dataset, $Q$ is the optimal Q function, $\pi$ is the optimal policy along with $Q$, and $\hat{Q}$ is the learned Q function.
    


    Following the triangle inequality, we have:
    \begin{align}
        &\left|\mathbb{E}_{(s,a)\sim\mathcal{D}}[|\hat{Q}_i-\hat{Q}_j|]-\mathbb{E}_{\pi}[|Q_i-Q_j|]\right|\leq \\
        &\left|\mathbb{E}_{(s,a)\sim\mathcal{D}}[|\hat{Q}_i-\hat{Q}_j|]-\mathbb{E}_\mathcal{D}\mathbb{E}_{(s,a)\sim\mathcal{D}}[|\hat{Q}_i-\hat{Q}_j|]\right| + \left|\mathbb{E}_\mathcal{D}\mathbb{E}_{(s,a)\sim\mathcal{D}}[|\hat{Q}_i-\hat{Q}_j|]-\mathbb{E}_{\pi}[|Q_i-Q_j|]\right|
    \end{align}

    In general, we assume that the offline RL dataset is collected by $\epsilon$-optimal policy, i.e., there exists a $\epsilon$-optimal policy $\pi$ that $\mathcal{D}$ satisfies 
    \begin{equation}
        \mathbb{E}_\mathcal{D}\mathbb{E}_{(s,a)\sim\mathcal{D}}[|\hat{Q}_i-\hat{Q}_j|] = \mathbb{E}_{\pi}[|\hat{Q}_i-\hat{Q}_j|], \mathbb{E}_{\mathcal{D}}(\hat{Q})=Q_{\pi}=Q
        \label{eq: dataset_assumption}
    \end{equation}
    
    Besides, we have:
    \begin{align}
    \left|\mathbb{E}_{\pi}[|\hat{Q}_i-\hat{Q}_j|]-\mathbb{E}_{\pi}[|Q_i-Q_j|]\right| &= \left|\mathbb{E_{\pi}}\left( |\hat{Q}_i-\hat{Q}_j|-|Q_i-Q_j|\right)\right| \\
    &\leq \mathbb{E}_{\pi} \left||\hat{Q}_i-\hat{Q}_j|-|Q_i-Q_j|\right| \\
    &\leq \mathbb{E}_{\pi}\left|(\hat{Q}_i-\hat{Q}_j)-(Q_i-Q_j)\right| \\
    &\leq \mathbb{E}_{\pi}|\hat{Q}_i-Q_i| + \mathbb{E}_{\pi}|\hat{Q}_j-Q_j|
    \end{align}

    Therefore, 
    \begin{align}
        &\Pr \left(\left|\mathbb{E}_{(s,a)\sim\mathcal{D}}[|\hat{Q}_i-\hat{Q}_j|]-\mathbb{E}_{\pi}[|Q_i-Q_j|]\right| \geq \frac{\delta}{2}\right) \\
        &\leq\Pr\left(\left|\mathbb{E}_{(s,a)\sim\mathcal{D}}[|\hat{Q}_i-\hat{Q}_j|]-\mathbb{E}_{\pi}[|\hat{Q}_i-\hat{Q}_j|]\right| + \mathbb{E}_{\pi}|\hat{Q}_i-Q_i| + \mathbb{E}_{\pi}|\hat{Q}_j-Q_j|\geq \frac{\delta}{2}\right)\\
        & \leq \Pr\left(\left|\mathbb{E}_{(s,a)\sim\mathcal{D}}[|\hat{Q}_i-\hat{Q}_j|]-\mathbb{E}_{\pi}[|\hat{Q}_i-\hat{Q}_j|]\right|\geq \frac{\delta}{2}\right) \\
        &\quad + \Pr\left(\mathbb{E}_{\pi}|\hat{Q}_i-Q_i|\geq \frac{\delta}{2}\right) +\Pr\left(\mathbb{E}_{\pi}|\hat{Q}_j-Q_j|\geq \frac{\delta}{2}\right) 
        \label{eq: value_based_scale}
    \end{align}

    Since $\hat{Q},Q\in[0,Q_{\max}]$, $|\hat{Q}_i-\hat{Q}_j|,|\hat{Q}_i-Q_i|,|\hat{Q}_j-Q_j|\in[0,Q_{\max}]$, together with Equation \ref{eq: dataset_assumption}, we can apply Theorem 1 (Hoeffding's inequality) in~\cite{boucheron2003concentration}, and obtain that for $\forall l_i,l_j\in \mathcal{L}$,
    \begin{align}
        \mathrm{Pr}\left(\left|\mathbb{E}_{(s,a)\sim\mathcal{D}}[|\hat{Q}_i-\hat{Q}_j|] -\mathbb{E}_{\pi}[|\hat{Q_i}-\hat{Q_j}|]\right|\geq \delta/2\right)&\leq 2\exp(-n_{\textup{value}}\delta^2/C_{\text{value}}) \\
        \Pr\left(\mathbb{E}_{\pi}|\hat{Q}_i-Q_i|\geq \delta/2\right) &\leq 2\exp(-n_{\textup{value}}\delta^2/C_{\text{value}}) \\
        \Pr\left(\mathbb{E}_{\pi}|\hat{Q}_j-Q_j|\geq \delta/2\right) &\leq 2\exp(-n_{\textup{value}}\delta^2/C_{\text{value}})
    \end{align}
    where $n_{\textup{value}}=|\mathcal{D}|$ is the size of dataset, $C_{\text{value}}\in (0,\mathcal{O}(Q_{\max}^2)]$ is a constant value.
    
    Combining with Equation \ref{eq: value_based_scale}, we have
    \begin{equation}
        \Pr \left(\mathbb{E}_{(s,a)\sim\mathcal{D}}[|\hat{Q}_i-\hat{Q}_j|]-\mathbb{E}_{\pi}[|Q_i-Q_j|] \leq -\delta/2\right) \leq 6\exp(-n_{\textup{value}}\delta^2/C_{\text{value}})
    \end{equation}
    Consider the extreme case in task semantics distinction, when $\exists l_i,l_j \in \mathcal{L}, g_i \not \leftrightarrow g_j,\mathbb{E}_{\pi}[|Q_i-Q_j|]=\delta$ that can be exactly partitioned under the threshold. We have
    \begin{equation}
        \mathrm{Pr}\left(\mathbb{E}_{(s,a)\sim\mathcal{D}}[|\hat{Q}_i-\hat{Q}_j|]\leq \delta/2\right)\leq 6\exp(-n_{\textup{value}}\delta^2/C_{\text{value}})
    \end{equation}
    Consider all task pairs with different real semantics $\left(\begin{array}{c}
     M  \\
    2
    \end{array}\right)\approx \frac{M^2}{2}$:
    \begin{equation}
        \mathrm{Pr}\left(\exists i,j ,g_i\not\leftrightarrow g_j,\mathbb{E}_{(s,a)\sim\mathcal{D}}[|\hat{Q}_i-\hat{Q}_j|]\leq \delta/2\right)\leq 3M^2\exp(-n_{\textup{value}}\delta^2/C_{\text{value}})
    \end{equation}
    To ensure a confidence level of at least $1-\eta$ for  any task ambiguity, we require that the probability of the event $\mathcal{S}_a$ satisfies:
    \[
    \mathrm{Pr}(\mathcal{S}_a)\leq\eta,
    \]
    Thus, we need to let:
    \[
    3M^2\exp(-n_{\textup{value}}\delta^2/C_{\text{value}})\leq \eta.
    \]
    Then,
    \begin{equation}
        n_{\textup{value}}\geq \frac{C_{\text{value}}}{\delta^2}\log\frac{3M^2}{\eta}
    \end{equation}

    For the distributional RL setting, we have the estimated return distributions $\hat{Z}_i, \hat{Z}_j$ and real distributions $Z_i, Z_j$, with the threshold $d>0$. Similar to Equation \ref{eq: ambiguity}, we have the semantic ambiguity event $\mathcal{S}_a$ as:
    \begin{align}
        \mathbb{E}_{(s,a)\sim\mathcal{D}}\left[W_1(\hat{Z}_{i}(s,a),\hat{Z}_{j}(s,a))\right]&\leq \frac{d}{2},\\
        \mathbb{E}_{\pi}\left[W_1\left(Z_{i}(s, a), Z_{j}(s, a)\right)\right] &\geq d,g_i\not\leftrightarrow g_j,
    \end{align}
    where $Z$ is the optimal distribution of Q-value, $\pi$ is the optimal policy along with $Z$, and $\hat{Z}$ is the learned distribution.

    Similarly, we have:
    \begin{align}
        &\left|\mathbb{E}_{(s,a)\sim\mathcal{D}}\left[W_1(\hat{Z}_{i},\hat{Z}_{j})\right]-\mathbb{E}_{\pi}\left[W_1\left(Z_{i}, Z_{j}\right)\right]\right| \\
        &\quad\leq \left|\mathbb{E}_{(s,a)\sim\mathcal{D}}\left[W_1(\hat{Z}_{i},\hat{Z}_{j})\right]-\mathbb{E}_{\mathcal{D}}\mathbb{E}_{(s,a)\sim\mathcal{D}}\left[W_1(\hat{Z}_{i},\hat{Z}_{j})\right]\right| \\ 
        &\quad \quad+\left|\mathbb{E}_{\mathcal{D}}\mathbb{E}_{(s,a)\sim\mathcal{D}}\left[W_1(\hat{Z}_{i},\hat{Z}_{j})\right]-\mathbb{E}_\pi\left[W_1(Z_{i},Z_{j})\right]\right| \\
        &\quad=\left|\mathbb{E}_{(s,a)\sim\mathcal{D}}\left[W_1(\hat{Z}_{i},\hat{Z}_{j})\right]-\mathbb{E}_{\pi}\left[W_1(\hat{Z}_{i},\hat{Z}_{j})\right]\right| + \left|\mathbb{E}_\pi\left[ W_1(\hat{Z}_{i},\hat{Z}_{j})-W_1(Z_{i},Z_{j})\right] \right|
    \end{align}
    
    Following the triangle inequality of Wasserstein distance, we have:
    \begin{equation}
        W_1(\hat{Z}_i,\hat{Z}_j)-W_1(Z_i,Z_j)\leq W_1(\hat{Z}_i,Z_i) + W_1(Z_j,\hat{Z}_j)
        \label{eq: triangle}
    \end{equation}
    
    Then,
    \begin{align}
    &\Pr\left( \left|\mathbb{E}_{(s,a)\sim\mathcal{D}}\left[W_1(\hat{Z}_{i},\hat{Z}_{j})\right]-\mathbb{E}_{\pi}\left[W_1\left(Z_{i}, Z_{j}\right)\right]\right| \geq \frac{d}{2} \right) \\
    &\leq \Pr\left( \left|\mathbb{E}_{(s,a)\sim\mathcal{D}}\left[W_1(\hat{Z}_{i},\hat{Z}_{j})\right]-\mathbb{E}_{\pi}\left[W_1(\hat{Z}_{i},\hat{Z}_{j})\right]\right| + \left|\mathbb{E}_\pi\left[ W_1(\hat{Z}_{i},\hat{Z}_{j})-W_1(Z_{i},Z_{j})\right] \right| \geq \frac{d}{2} \right) \\
    &\leq \Pr\left( \left|\mathbb{E}_{(s,a)\sim\mathcal{D}}\left[W_1(\hat{Z}_{i},\hat{Z}_{j})\right]-\mathbb{E}_{\pi}\left[W_1(\hat{Z}_{i},\hat{Z}_{j})\right]\right| + \left|\mathbb{E}_\pi\left[ W_1(\hat{Z}_{i},Z_{i})+W_1(\hat{Z}_{j},Z_{j})\right] \right| \geq \frac{d}{2} \right) \\
    &\leq \Pr\left( \left|\mathbb{E}_{(s,a)\sim\mathcal{D}}\left[W_1(\hat{Z}_{i},\hat{Z}_{j})\right]-\mathbb{E}_{\pi}\left[W_1(\hat{Z}_{i},\hat{Z}_{j})\right]\right| \geq \frac{d}{2} \right)\\ 
    &\quad+ \Pr\left( \left|\mathbb{E}_\pi\left[ W_1(\hat{Z}_{i},Z_{i})\right] \right| \geq \frac{d}{2} \right)+ \Pr\left( \left|\mathbb{E}_\pi\left[ W_1(\hat{Z}_{j},Z_{j})\right] \right| \geq \frac{d}{2} \right)
    \label{eq: wasserstein-process}
    \end{align}

        
    Since $\hat{Z}$ is the empirical measure of $Z$ and the estimated return is 1-dimensional, $Q\in [0,Q_{\max}],W_1(Z_i,Z_j)\in[0,Q_{\max}]$. Following Theorem 1 (Hoeffding's inequality) in~\cite{boucheron2003concentration} , Corollary 5.2 and Remark 5 in~\cite{lei2020convergence}, we have the following equation where $C_{\text{dist}_1}, C_{\text{dist}_2}\in (0, \mathcal{O}(Q_{\max}^2)] $ are constant values.
    \begin{align}
        \Pr\left( \left|\mathbb{E}_{(s,a)\sim\mathcal{D}}\left[W_1(\hat{Z}_{i},\hat{Z}_{j})\right]-\mathbb{E}_{\pi}\left[W_1(\hat{Z}_{i},\hat{Z}_{j})\right]\right| \geq d/2 \right) &\leq 2\exp(-n_{\textup{dist}}d^2/C_{\text{dist}_1}) \\
        \Pr\left( \mathbb{E}_\pi\left[ W_1(\hat{Z}_{i},Z_{i})\right]  \geq d/2 \right) &\leq 2\exp(-n_{\textup{dist}}d^2/C_{\text{dist}_2}) \\
        \Pr\left( \mathbb{E}_\pi\left[ W_1(\hat{Z}_{j},Z_{j})\right]  \geq d/2 \right) &\leq 2\exp(-n_{\textup{dist}}d^2/C_{\text{dist}_2})
    \end{align}
        
    Combining with Equation \ref{eq: wasserstein-process}, we can obtain the following equation where $C_{\text{dist}}=\max(C_{\text{dist}_1}, C_{\text{dist}_2})>0$
    \begin{equation}
        \mathrm{Pr}\left( \mathbb{E}_{(s,a)\sim D}\left(W_1(\hat{Z}_i,\hat{Z}_j)\right)-\mathbb{E}_{\pi}\left(W_1(Z_i,Z_j)\right)\leq -d/2\right) \leq 6\exp(-n_{\textup{dist}}d^2/C_{\text{dist}})
    \end{equation}
    In the same way, consider the extreme case when $\exists l_i,l_j\in \mathcal{L}, g_i \not \leftrightarrow g_j,\mathbb{E}_{\pi}\left(W_1(Z_i,Z_j)\right)=d$. We have
    \begin{equation}
        \mathrm{Pr}\left( \mathbb{E}_{(s,a)\sim D}\left(W_1(\hat{Z}_i,\hat{Z}_j)\right)\leq d/2\right) \leq 6\exp(-n_{\textup{dist}}d^2/C_{\text{dist}})
    \end{equation}
    Consider all task pairs with different real semantics $\left(\begin{array}{c}
     M  \\
    2
    \end{array}\right)\approx \frac{M^2}{2}$:
    \begin{equation}
        \mathrm{Pr}\left( \exists i,j ,g_i\not\leftrightarrow g_j,\mathbb{E}_{(s,a)\sim D}\left(W_1(\hat{Z}_i,\hat{Z}_j)\right)\leq d/2\right) \leq 3M^2 \exp(-n_{\textup{dist}}d^2/C_{\text{dist}})
    \end{equation}
    Similarly, to ensure a confidence level of at least $1-\eta$ for avoiding task ambiguity, we need 
    \[
        3M^2 \exp(-n_{\textup{dist}}d^2/C_{\text{dist}}) \leq \eta.
    \]
    Then,
    \begin{equation}
        n_{\textup{dist}} \geq \frac{C_{\text{dist}}}{d^2}\log\frac{3M^2}{\eta}
    \end{equation}
\end{proof}

\begin{corollary}
In a multi-task RL setting, to avoid task ambiguity with confidence level $ 1-\eta$, learning the distribution over Q-values requires fewer samples than learning point estimates of Q-values when the number of tasks $M$ is sufficiently large. Formally, $n_{\textup{value}} \geq n_{\textup{dist}}$, where $n_{\textup{value}}, n_{\textup{dist}}$ denote the samples needed to avoid task ambiguity for value-based and distributional settings.
\label{cor: sample-complexity}
\end{corollary}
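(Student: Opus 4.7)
The plan is to derive the corollary directly from the two bounds in Theorem~\ref{thm: sample-complexity-appendix}. Since both lower bounds share the common factor $\log(3M^2/\eta)$, the desired comparison $n_{\textup{value}} \geq n_{\textup{dist}}$ reduces to showing that $C_{\textup{value}}/\delta^2 \geq C_{\textup{dist}}/d^2$ once $M$ is large enough. The key mechanism making this inequality tight is how the task-distinction thresholds $\delta$ and $d$ scale as more tasks are packed into the same reward range.

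First I would exploit the standard fact that the $1$-Wasserstein distance dominates differences of means, namely $|\mathbb{E}[X]-\mathbb{E}[Y]| \leq W_1(X,Y)$. Applied to any pair of return distributions $Z_i$ and $Z_j$, this gives $\mathbb{E}_\pi[|Q_i(s,a)-Q_j(s,a)|] \leq \mathbb{E}_\pi[W_1(Z_i(s,a), Z_j(s,a))]$. Consequently, whenever two tasks are semantically distinct at level $\delta$ in the mean-value sense, they are automatically distinct at some Wasserstein level $d \geq \delta$, so a distributional learner can always operate with an equally large (and often strictly larger) separation threshold. The illustrative example of Figure~\ref{fig:insight_appendix} already shows that the gap can be strict: there exist distinct tasks where $\delta = 0$ while $d>0$, so $n_{\textup{value}} = \infty$ but $n_{\textup{dist}}$ remains finite, which is a degenerate certificate of the claim.

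Next I would make the role of $M$ quantitative. Define $\delta_{\min}(M)$ and $d_{\min}(M)$ as the smallest pairwise mean gap and Wasserstein gap across all semantically-distinct task pairs. A pigeonhole argument applied to the bounded scalar range $[0,Q_{\max}]$ forces $\delta_{\min}(M) = O(Q_{\max}/M)$, since $M$ mean values cannot all be separated by more than $Q_{\max}/M$ on average. In contrast, Wasserstein distances live on the infinite-dimensional space of distributions and are not subject to the same packing constraint, so $d_{\min}(M)$ can remain $\Omega(1)$ as $M$ grows, as long as tasks are distinguishable through higher-order distributional features. Plugging these scalings into the two bounds of Theorem~\ref{thm: sample-complexity-appendix} gives $n_{\textup{value}}/n_{\textup{dist}} = \Omega((d_{\min}(M)/\delta_{\min}(M))^2) = \Omega(M^2)$, which exceeds one for all sufficiently large $M$ and yields the corollary.

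The main obstacle is making this pigeonhole-style scaling argument rigorous without additional structural assumptions on the task family, since in the worst case an adversarially chosen family could collapse $d_{\min}(M)$ at the same rate as $\delta_{\min}(M)$. I would handle this by restricting attention to task families whose return distributions differ beyond their first moment (consistent with the motivating example), or equivalently by stating the corollary under the mild assumption that $d_{\min}(M)/\delta_{\min}(M) \to \infty$. Under this condition the comparison of the two right-hand sides in Theorem~\ref{thm: sample-complexity-appendix} is straightforward, and combining with the monotonicity of the logarithmic factor completes the argument.
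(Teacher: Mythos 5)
Your core reduction is exactly the paper's proof: both bounds in Theorem~\ref{thm: sample-complexity} share the factor $\log(3M^2/\eta)$, so the comparison reduces to $C_{\textup{value}}/\delta^2 \geq C_{\textup{dist}}/d^2$; the paper establishes this by proving $W_1(Z_i,Z_j)\geq |\mathbb{E}_{Z_i}(Q)-\mathbb{E}_{Z_j}(Q)|$ from the coupling definition of $W_1$ together with Jensen's inequality, concluding $\delta \leq d$ (with $\delta \ll d$ when means nearly coincide but distributions differ), and bounding both constants by $\mathcal{O}(Q_{\max}^2)$ so that they are comparable. Your first two paragraphs reproduce this argument, including the degenerate $\delta = 0$, $d>0$ case of Figure~\ref{fig:insight_appendix}. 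Note that neither your reduction nor the paper's proof actually uses ``$M$ sufficiently large'' in any essential way, since the $\log(3M^2/\eta)$ factor cancels; and both share the same mild unstated requirement that $C_{\textup{value}} \gtrsim C_{\textup{dist}}$, which the paper only supports by giving the common upper bound $\mathcal{O}(Q_{\max}^2)$.

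The pigeonhole step, where you go beyond the paper, is unsound as stated. The threshold $\delta$ in Definition~\ref{def: distintion} bounds $\mathbb{E}_{\pi}\left[|Q_{\pi}(s,a,l_i)-Q_{\pi}(s,a,l_j)|\right]$, which is an $L^1(\pi)$ distance between the \emph{functions} $Q_i, Q_j:\mathcal{S}\times\mathcal{A}\to[0,Q_{\max}]$, not a gap between $M$ scalars on a line. Packing forces $\delta_{\min}(M)=\mathcal{O}(Q_{\max}/M)$ only if each task is summarized by a single number; over a nontrivial state--action space, exponentially many $[0,Q_{\max}]$-valued functions can be pairwise $\Omega(Q_{\max})$-separated in $L^1$ (take functions equal to $Q_{\max}$ on the support of codewords of a constant-distance binary code and $0$ elsewhere). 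Hence the claimed scaling $n_{\textup{value}}/n_{\textup{dist}}=\Omega(M^2)$ does not follow, and your fallback assumption $d_{\min}(M)/\delta_{\min}(M)\to\infty$ assumes the driver of the conclusion rather than deriving it. None of this is needed, however: your own first argument --- $\delta \leq d$ via Wasserstein domination of mean gaps, plus comparable constants --- already delivers $n_{\textup{value}} \geq n_{\textup{dist}}$, which is precisely the route the paper takes, with the strict advantage arising exactly when tasks differ beyond their first moments.
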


\begin{proof}
    From Theorem \ref{thm: sample-complexity-appendix}, we have
    \begin{align}
    n_{\text{value}}\geq& \frac{C_{\text{value}}}{\delta^2}\log\frac{3M^2}{\eta} \\
        n_{\text{dist}} \geq& \frac{C_{\text{dist}}}{d^2}\log\frac{3M^2}{\eta}
    \label{eq: comparison}
    \end{align}
    First, $Q\in [0, Q_{\max}], W_1(Z_i,Z_j)\in[0, Q_{max}]$, following Theorem 1 (Hoeffding's inequality) in~\cite{boucheron2003concentration} , Corollary 5.2 and Remark 5 in~\cite{lei2020convergence} we obtain that the constant $C$ satisfies: 
    \begin{equation}
        C_{\text{value}}\leq \mathcal{O}(Q_{\max}^2),C_{\text{dist}}\leq \mathcal{O}(Q_{\max}^2).
        \label{eq: constant}
    \end{equation}
    Then, we can prove that for any Q-value distribution $Z_i,Z_j$,
    \begin{equation}
        W_1(Z_i,Z_j)\geq |\mathbb{E}_{Z_i}(Q)-\mathbb{E}_{Z_j}(Q)|
    \end{equation}
    The definition of Wasserstein-1 distance is
    \[
    W_1(Z_i,Z_j)=\inf_{\pi \in \Pi(Z_i,Z_j)}\mathbb{E}_{(X,Y)\sim \pi}|X-Y|
    \]
    Here, $\Pi(Z_i,Z_j)$ denotes the set of all joint distributions with marginals $Z_i,Z_j$. 

    For $(Z_i,Z_j)\sim \pi$, we have
    \[
    \mathbb{E}[|X-Y|]\geq |\mathbb{E}(X-Y)|=|\mathbb{E}_{Z_i}(X)-\mathbb{E}_{Z_j}(Y)|=|\mathbb{E}_{Z_i}(X)-\mathbb{E}_{Z_j}(X)|
    \]
    Thus, for any joint $\pi$,
    \[
    \mathbb{E}_{(X,Y)\sim \pi}|X-Y|\geq|\mathbb{E}_{Z_i}(X)-\mathbb{E}_{Z_j}(X)|
    \]

    For the optimal joint $\pi$,
    \begin{equation}
        W_1(Z_i,Z_j)\geq |\mathbb{E}_{Z_i}(Q)-\mathbb{E}_{Z_j}(Q)|
        \label{eq: wasserstein-bound}
    \end{equation}

    Thus, we must choose a much smaller threshold $\delta$ for point estimates than that $d$ for distribution learning, $\delta \leq d$. In some scenarios, when the mean difference of the Q function is small but the distribution difference is large, $\delta<<d$.

    Combining Equation \ref{eq: constant}, \ref{eq: wasserstein-bound}, we can obtain that
    \begin{equation}
        \frac{C_{\text{value}}}{\delta^2}\log \frac{3M^2}{\eta} \geq \frac{C_{\text{dist}}}{d^2}\log \frac{3M^2}{\eta}
        \label{eq: approx2}
    \end{equation}
     we prove that $n_{\text{value}} \geq n_{\text{dist}}$, even $n_{\text{value}} >> n_{\text{dist}}$ in some scenarios.
    
\end{proof}
\clearpage
\section{Experimental Details}
\label{appendix:implement}
\subsection{Toy Experiment}
In this toy experiment based on the Minigrid environment~\cite{MinigridMiniworld23}, we demonstrate that vanilla offline RL fails to establish the relationship between the tasks and their underlying reward functions as the number of tasks explodes. 

The setup consists of 10 accessible goal positions $\mathcal{G}=\{g_0,g_1,...,g_{9}\}$, where $\mathcal{G}$ represents the set of all possible goal positions. 
The agent (red triangle-shaped) must follow a given instruction $l\in \mathcal{L}$ to navigate to a specific goal position. 
We simulate instructions using numerical task IDs, making $\mathcal{L}\subset \mathbb{N}$. We employ a random mapping $F:\mathcal{L} \to \mathcal{G}$ to assign each $l$ to goal position $g$, which simulates semantics of instructions and is hidden from the agent. With these settings, we can control the number of instructions $|\mathcal{L}|$ by simply adjusting the set of valid task IDs and the mappings. We conduct 10 experiments, varying the number of instructions from 1 to $2^9$. For each experiment, we generate a new mapping $F$ and collect 1024 random trajectories as the offline dataset. 

The agent always starts from the center of the map. For each step, the agent receives the task ID and current state as input, and the agent can choose to move forward, turn left, or turn right. A reward of $1-0.9\times(\texttt{STEP\_COUNT}/\texttt{MAX\_STEP})$ is given for success and 0 for failure. \texttt{MAX\_STEP} is fixed to 12 in our toy experiment.
\begin{figure*}[h]
    \centering
    \includegraphics[width=1\linewidth]{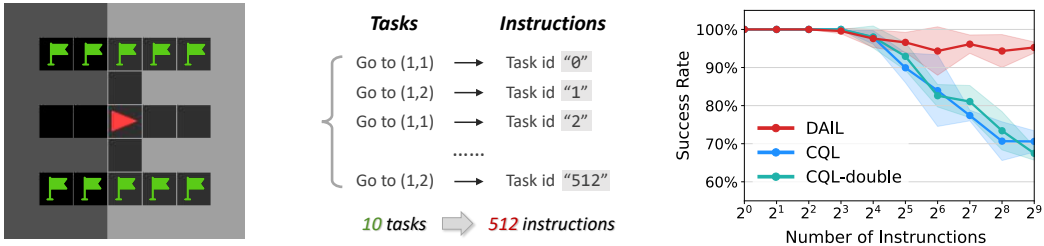}
    \vspace{-10pt}
    \caption{\textbf{Left:} The green flags in the map denote the accessible goals.
    \textbf{Middle:} An illustration of the mapping between goal positions and instructions.
    \textbf{Right:} Average success rates over 100 evaluations for each number of instructions and 3 seeds.}
    \label{fig:toy_appendix}
\end{figure*}
For offline datasets, a random policy with $\texttt{MAX\_STEP}=12$ is used to generate $64\times n$ trajectories for 8 or fewer tasks, where $n$ is the number of tasks, and 1024 trajectories for 16 or more tasks, respectively. For all these datasets, the success rate is fixed to be 0.5. 

For the observation signals in this toy experiment, we use a simple architecture, which combines Bag-of-Words encoding~\cite{mikolov2013efficient} and a two-layer CNN. We use 1 fully connected layer to embed task IDs into task representations and a two-layer MLP as the output network. We use 64 as the feature size for CQL and our method, 128 for CQL-double separately. For each number of tasks, each agent is trained with $\alpha=\{0, 0.01,0.1,0.5,1,2\}$ over 3 seeds. We calculate the average of the results from these 3 seeds and record the best performance among them as the performance for this number of instructions, as illustrated in the Right of Figure~\ref{fig:toy_appendix}.




\subsection{Offline Datasets}
\subsubsection{BabyAI}
\begin{figure}
    \centering
    \includegraphics[width=0.4\linewidth]{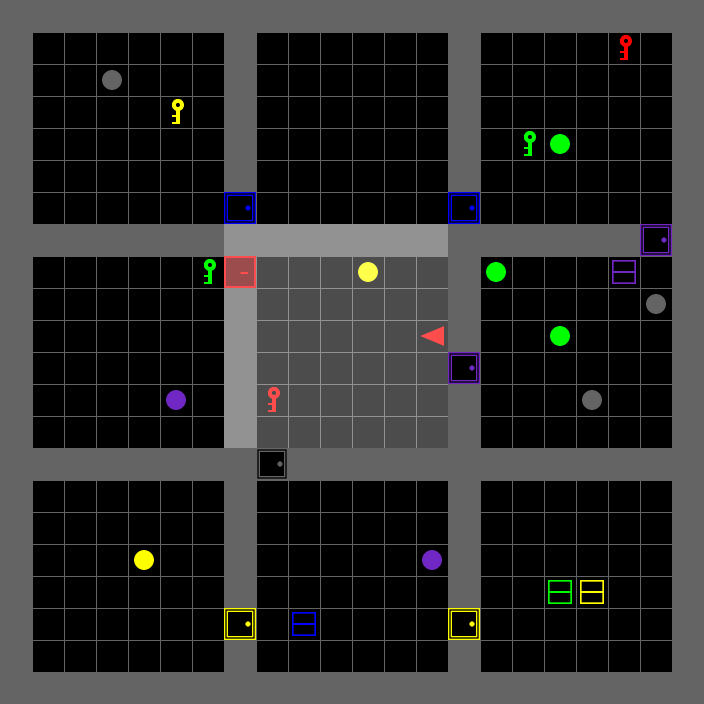}
    \caption{Level \texttt{SynthLoc} in BabyAI}
    \label{fig:synthloc}
\end{figure}
BabyAI~\cite{chevalier2018babyai} is a language-conditioned research platform built on MiniGrid~\cite{MinigridMiniworld23}, which provides different levels of tasks equipped with varied language instructions. We choose level $\verb|SynthLoc|$ as the benchmark, which is the union of all instructions from $\verb|PutNext|$, $\verb|Open|$, $\verb|Goto|$, and $\verb|PickUp|$. The agent needs to deal with synthetic Baby Language and interact with the specified objects at the goal position. Some examples of language instructions are ``put the green key behind you next to a box'', ``go to the red ball behind you'', and ``pick up a green box''. 

We follow the default map configuration of BabyAI, where each room has a size of $7\times7$, arranged in a $3\times3$ grid with a total of 9 rooms. Each room may be connected to others via a door, as illustrated in Figure \ref{fig:synthloc}. The agent has a field of view of $7\times7$ in front of it, with the observation size being ($7\times7\times3$). Each grid cell in the observation contains the values (\texttt{OBJECT\_IDX}, \texttt{COLOR\_IDX}, \texttt{STATE}), all represented in a structured format. A reward of `1 - 0.9 * ($\text{step\_count}$ / $\text{max\_steps}$)' is only given for success, and `0' in all other cases. Agents are permitted to take up to 300 steps before truncation in our setting.
\begin{figure}[h]
    \centering
    \includegraphics[width=0.5\textwidth]{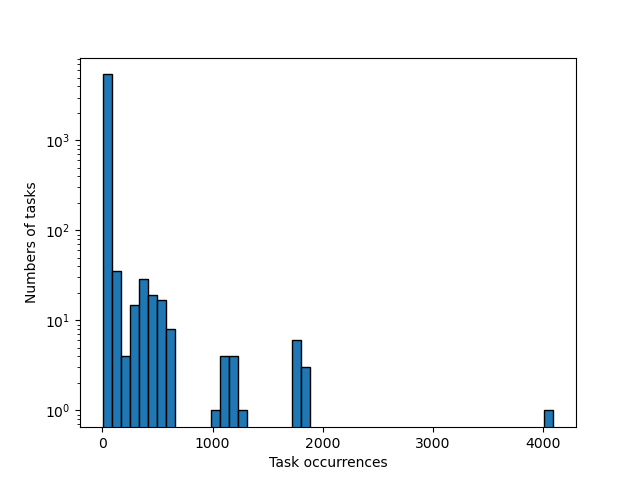} 
    \caption{The histogram shows the frequency distribution of tasks in $10^6$ samples. The x-axis represents the total frequency of each task, while the y-axis indicates the number of tasks that fall within each frequency interval.}
    \label{fig:babyai-taskcount}
\end{figure}
In $\verb|SynthLoc|$, the total number of tasks is large, and their distribution is sparse. We reset the environment $10^6$ times and obtain over 5500 unique instructions. From the histogram of task count (Figure \ref{fig:babyai-taskcount}), it can be observed that most tasks only appear less than 500 times in $10^6$, while some tasks appear over 4000 times. The distribution of tasks highlights the highly uneven distribution of tasks. We divide the task set into two subsets, designating approximately 60\% of the tasks as in-distribution tasks. All trajectories in the offline dataset are collected under in-distribution instructions, while tasks encountered during testing outside this set are considered out-of-distribution tasks.

To construct the offline dataset, we collect three types of data: expert data, gathered by a pre-designed bot within the environment; medium data, collected by a well-trained agent; and random data.
The built-in bot has access to global information to accomplish every possible task with a near-optimal solution. 
We train an IL agent following BabyAI 1.1~\cite{hui2020babyai}, the state-of-the-art model proposed by the original environmental authors. Trained on a dataset of 100k expert trajectories, it achieved approximately 87.9\% success rate across all tasks. Random agent achieves a 10.5\% success rate during data collection.
We conduct a high-quality dataset with 50k expert trajectories, 50k IL agent trajectories, and 25k random trajectories; a medium-quality dataset with 12.5k expert trajectories, 25k IL agent trajectories, and 40k random trajectories. All the trajectories in the dataset are generated under in-distribution instructions.

\subsubsection{ALFRED}

ALFRED~\cite{shridhar2020alfred} benchmarks sequential decision-making tasks involving household activities (e.g., cleaning, heating food) through language instructions and first-person vision, shown in Figure~\ref{fig:example_scene} and Table~\ref{table:instruction_example}. 
The dataset provides 8055 expert demonstrations with 25k human-annotated language instructions detailing both high-level goals and sub-goal step-by-step guidance. 
As our work primarily focuses on low-level policy learning rather than high-level planning, we specifically concentrate on the $\verb|GOTO|$ sub-goal setting for our evaluation.
In this task set, the agent must go to specific locations according to instructions like ``Move to other side of couch on the right side of the table before the door''. To simulate the presence of noisy data in real-world applications, we augment the training set with 30k random-agent trajectories, resulting in 97896 total trajectories with 53442 unique instructions across 108 household scenes.

As for the experiment, we use the Modeling Quickstart dataset, which is recommended~\cite{shridhar2020alfred}, including trajectory JSONs and pre-generated ResNet features. The ResNet features are obtained using a pre-trained ResNet-18~\cite{he2016deep} to extract $512 \times 7\times 7$ features from the \texttt{conv5} layer, which are used as observation input during training and evaluation.

\begin{figure}[h]
    \centering
    \begin{subfigure}{\textwidth}
        \centering
        \includegraphics[width=0.5\textwidth]{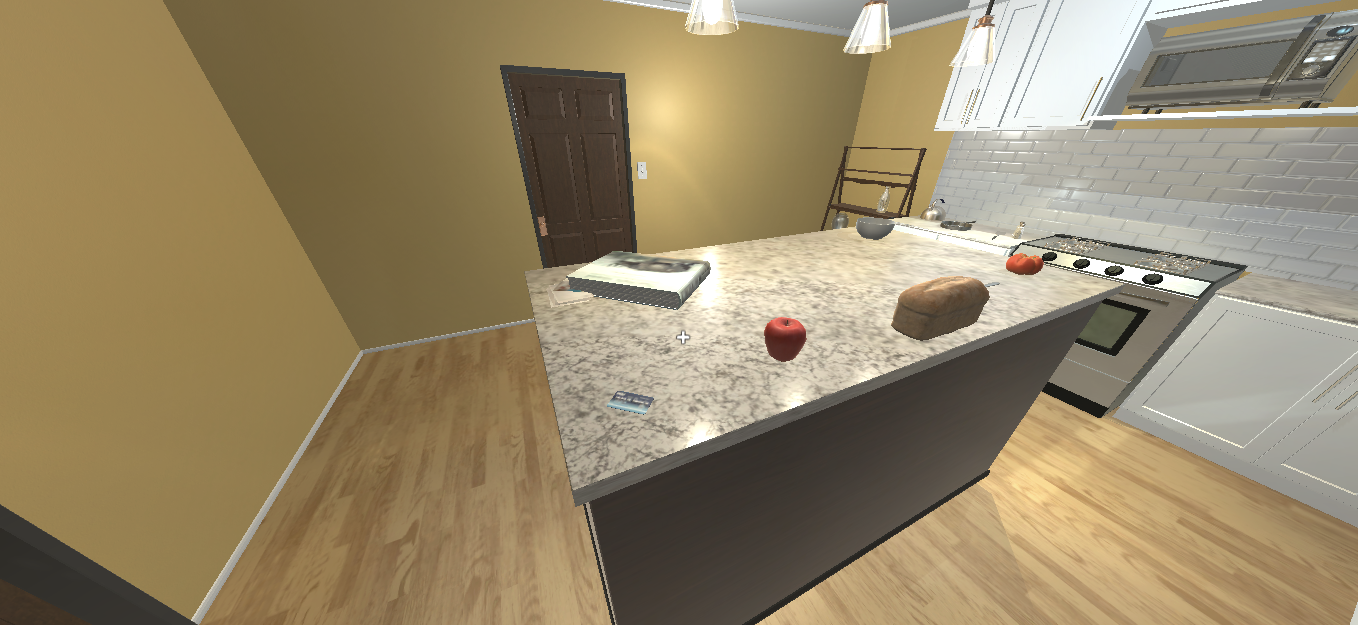} 
        \label{fig:scene1}
    \end{subfigure}


    \begin{subfigure}{\textwidth}
        \centering
        \includegraphics[width=0.5\textwidth]{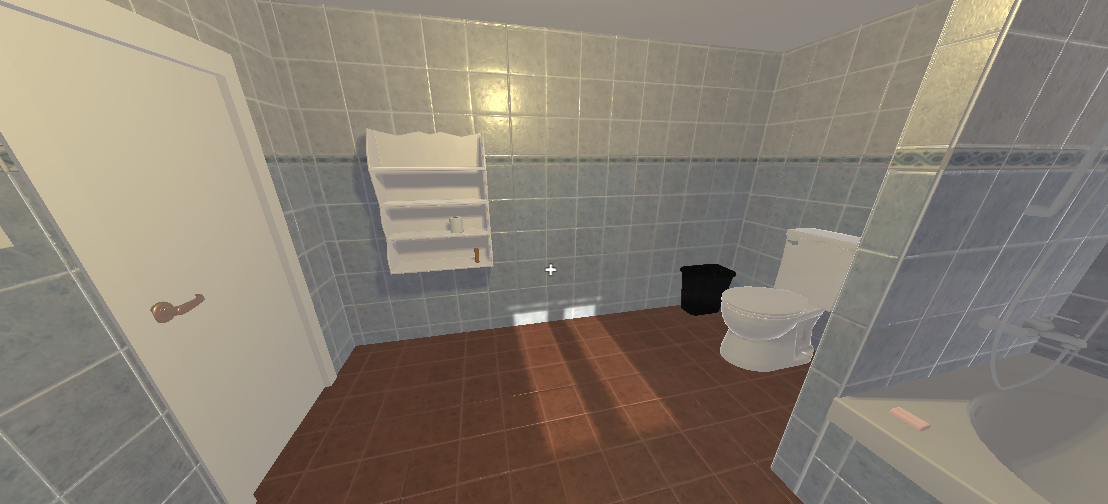} 
        \label{fig:scene2}
    \end{subfigure}

    \caption{Two example scenes from ALFRED.}
    \label{fig:example_scene}
    \vspace{-0.1in}
\end{figure}
\begin{table*}[h]
    \caption{Instruction examples in ALFRED \texttt{GOTO} task set.}
    \label{table:instruction_example}
    \centering
    \begin{tabular}{|p{0.3cm}|p{12cm}|}
        \hline
        \# & \multicolumn{1}{|c|}{Instructions} \\ \hline
        1    & Go left and turn to the right to face the couch.   \\ \hline
        2    & Turn around and make a left immediately after the toilet turn a quick left to face the side of the toilet.   \\ \hline
        3    & Move to other side of couch on the right side of the table before the door.   \\ \hline
        4    & go back to your right to the fridge and open the door   \\ \hline
        5    & Turn around and walk to the white stove on the right.   \\ \hline
        6    & Turn to the right and go to the sink across from you. \\ \hline
        7    & Turn around and walk towards the toilet, then turn right and walk towards the door, turn left to face the counter. \\ \hline
        8   & Walk forward, then hang a right and walk across the room, turn left and walk up to the chair. \\ \hline
    \end{tabular}
\end{table*}

For sub-goal evaluation, we follow~\cite {shridhar2020alfred} to use the expert trajectory to move the agent until the start state of the tested sub-goal, and the agent takes over to operate based on the instructions and observations. Episodes with 5 or more failed actions or exceeding the $\texttt{MAX\_STEP}=32$ are counted as failures immediately. A reward of 1 is given for success and 0 for failure. Failed actions refer to actions that cannot be successfully executed in the current state~(for example, moving against the walls or other obstacles in the room). 

\clearpage
\section{Architecture, Training, and Evaluation Details}
\label{appendix: architecture}
\subsection{Details of Encoders}

This section describes the network architecture of we used for language and observation encoding in BabyAI and ALFRED environments. In all the experiments, all models share the same encoders, detailed as follows, unless otherwise specified. 

\paragraph{Language Encoder} For the language signals, we use the Transformer model~\cite{vaswani2017attention} from the pre-trained CLIP network~\cite{radford2021learning} for language encoding in BabyAI and ALFRED experiments, freezing the entire model during training and adding an additional fully connected layer at the end for fine-tuning. 

\paragraph{Observation Encoder} 
Given the differing input structures of BabyAI and ALFRED, we employ separate observation encoders.

For \textbf{BabyAI}, we adopt the original visual encoding framework from BabyAI~\cite{hui2020babyai}, which integrates a Bag-of-Words embedding layer~\cite{mikolov2013efficient}, a convolution backbone, and a linear layer. The BOW module first turns the structural inputs with size $7\times7\times3$ into $7\times7\times256$ embeddings. The subsequent convolutional backbone processes these features through two sequential blocks and a max-pooling layer: each block contains a $3\times 3$ convolutional layer, followed by batch normalization and ReLU activation. The output is then processed with a $7\times7$ max-pooling layer. The features are then flattened and projected to 256 dimensions through a linear layer, producing a 256-dimensional vector as the observation encoder's final output. 

For \textbf{ALFRED}, we use the original encoding framework in ALFRED~\cite{shridhar2020alfred} for all implemented methods, which contains two sequential blocks: each block contains a $1\times 1$ convolutional layer, followed by batch normalization and ReLU activation. The features are then flattened and projected to 512 dimensions through a linear layer, producing a 512-dimensional vector as the observation encoder's final output.

\paragraph{FiLM and Sequence Encoder} We follow~\cite{hui2020babyai} to use FiLM~\cite{perez2018film} to fuse language and observation encodings through feature-wise affine transformations. For history encoding, all baseline methods employ a two-layer unidirectional LSTM to model temporal dependencies. 

\subsection{Architecture Details of DAIL}
\label{subsec:arch}
The overview architecture of DAIL is shown in Figure \ref{fig:neuralnetwork}. We adopt the language and observation encoders described above to encode instructions and observations separately. 
To use the same sequence encoder for both trajectory-wise semantic alignment and history encoding, we modify the sequence encoder to process observation-action trajectories jointly and output history information $x_t$.

The computation process of state-action value $q(s_t,a_t,x_{t-1},l)$ is shown on the Left of Figure \ref{fig:neuralnetwork}, given instruction $l$, observation $s_t$. The outputs of the FiLM network are concatenated with the outputs of the sequence encoder, then processed through a Multi-Layer Perceptron (MLP) and a Softmax layer to generate the final value distribution with dimension $M$.
For Trajectory-Wise Semantic Alignment as shown on the Right of Figure~\ref{fig:neuralnetwork}, we derive embeddings from instruction $l$ and trajectory $\tau$. The trajectory embedding is represented by the sequence model's final output $x_{\tau}$.
\begin{figure*}[t]
    \centering
    \includegraphics[width=1.0\textwidth]{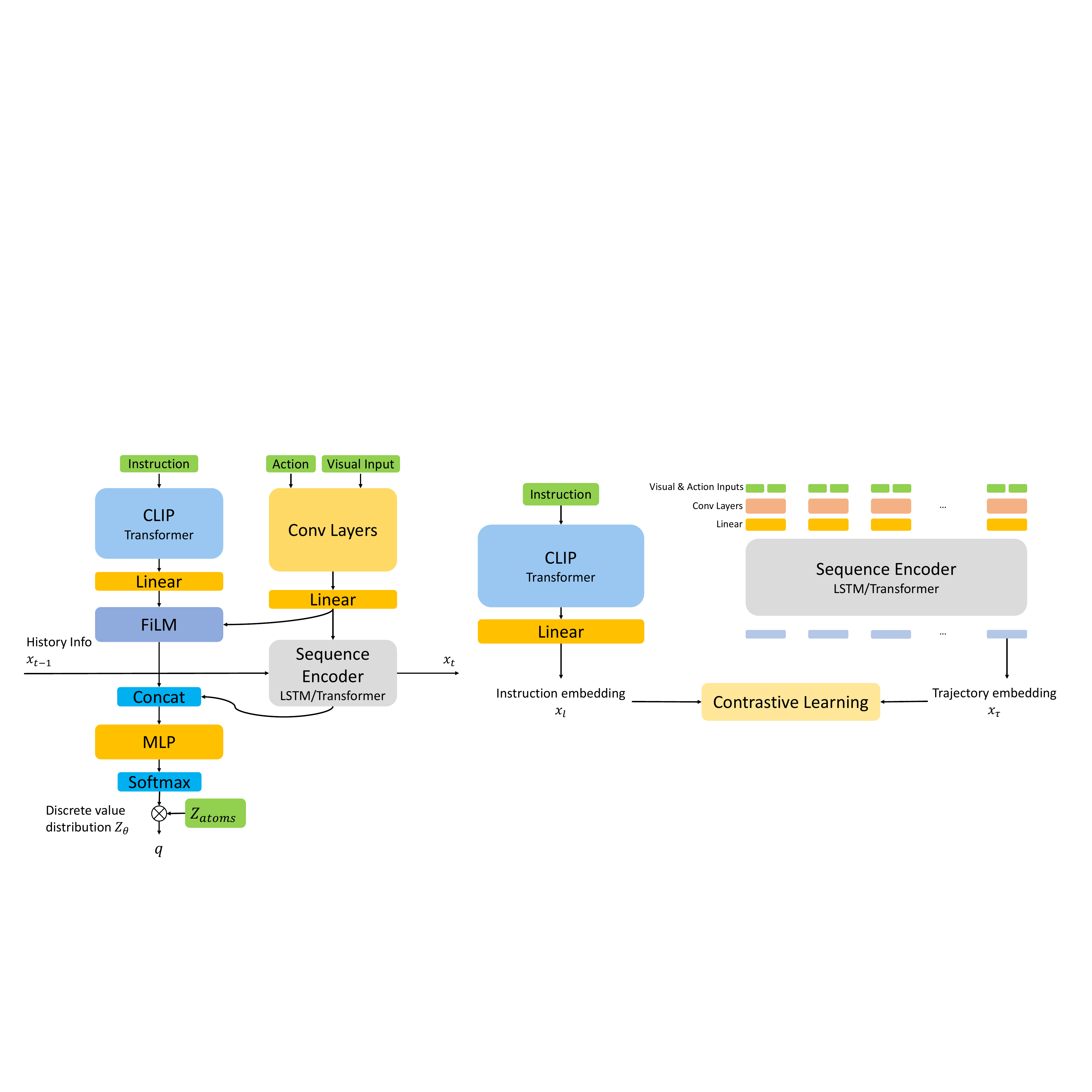} 
    \vspace{-20pt}
    \caption{
    Overview of our algorithm. \textbf{Left:} Computation process of state-action value. \textbf{Right:} trajectory-wise semantic alignment.
    }
    \label{fig:neuralnetwork}
\end{figure*}


\subsection{Training Details}

All models are implemented with PyTorch, and trained with a batch size of 64, using the Adam optimizer~\cite{kingma2014adam} at a learning rate of $3e-4$. All layers in the networks utilize PyTorch’s default weight initialization, and the network outputs fixed-dimensional embeddings suitable for downstream tasks. In BabyAI experiments, all methods were trained for 50 epochs over 3 seeds. And in the ALFRED experiments, all methods were trained for 20 epochs over 3 seeds following~\cite{shridhar2020alfred}.

As for DAIL, we fix $\alpha=2$ and $\lambda=0.2$ except for the toy experiment and ablation experiment of $\lambda$. We use $V_{MAX}=-V_{MIN}=20,M=51$ in all our experiments following~\cite{kumar2022offline}. 

\subsection{Baseline Details}
In this work, all the baselines share the same language encoder and similar observation encoder, and only differ in the decision module. In the following part, we introduce the decision modules of several baselines used in our paper:

\paragraph{GCBC:} language-conditioned behavior cloning with all data to maximize 
\begin{equation}
    \begin{aligned}
    \mathcal{J}_{\rm BC}(\pi_\theta)=\mathbb{E}_{(s_t,a_t,l)\sim \mathcal{D}, x_t\sim h_{w}}[\log\pi_\theta(a_t|x_t,l)]
    \end{aligned}
\end{equation}

\paragraph{BC-Z:} learning two task encodings from language and trajectory (video), and aligning them through similarity. We use cosine distance $D_{cos}(p,q)=\frac{p^T q}{|p||q|}$ to measure the similarity. We denote $(\tau,l)$ as a pair of trajectory and instruction.
\begin{equation}  
\begin{aligned}   
    \mathcal{J}_{\rm BC-Z}(\pi_\theta)=\mathbb{E}_{(s_t,a_t,l)\sim \mathcal{D}, x_t\sim h_{w}}[\log\pi_\theta(a_t|x_t,f_{\varphi}(l))]-\mathbb{E}_{(\tau, l)\sim \mathcal{D}}[D_{\cos}(q_{\phi}(\tau),f_{\varphi}(l))], 
 \end{aligned}
\end{equation}
where $q_\phi(\cdot)$ is the video encoder proposed by BC-Z to encode trajectory information, and $f_{\varphi}(\cdot)$ is the instruction encoder.

\paragraph{GRIF:} explicitly aligning the representations of language-conditioned tasks through contrastive learning with similarity measure $\mathcal{C}(s,g,l)=D_{cos}(f_{\varphi}(l), h_{\psi}(s_0,g))$, where $\varphi,\psi$ are learnable parameters of the language encoder and goal encoder respectively and $g$ is the last state of a trajectory. Positive data $(\tau^+, l^+)\sim p_{\mathcal{D}}(\cdot, \cdot)$ are uniformly sampled from the dataset, with $s^+, g^+$ being the start state and end state of $\tau^+$ respectively. Negative examples $s^-,g^-$ are the start state and end state of a randomly sampled trajectory from the dataset. Negative instruction $l^-\sim p_l(\cdot)$ is the instruction of another random trajectory. For each positive example, $k$ negative examples are sampled noted as $\{s_i^-,g_i^-\}_{i=1}^{k}$ and $\{l_i^-\}_{i=1}^k$.
\begin{equation}
    \begin{aligned}
    \mathcal{L}_{\rm lang\to goal}(\varphi, \psi)&=-\log \frac{\exp (\mathcal{C}(s^+, g^+, l^+)/\tau)}{\exp (\mathcal{C}(c^+,g^+,l^+)/\tau)+\sum_{i=1}^k\exp (\mathcal{C}(c^-_i,g^-_i,l^+)/\tau)}\\
    \mathcal{L}_{\rm goal\to lang}(\varphi, \psi)&=-\log \frac{\exp (\mathcal{C}(s^+, g^+, l^+)/\tau)}{\exp (\mathcal{C}(c^+,g^+,l^+)/\tau)+\sum_{i=1}^k\exp (\mathcal{C}(c^+_i,g^+_i,l^-)/\tau)}
    \end{aligned}
\end{equation}
where $\tau$ is the temperature parameter. We employ the last state $s_T$ in the trajectory as the goal state: $h_{\psi}(s_0,g)=h_{\psi}(s_0,s_T)$. Then the policy network is trained with behavior cloning by maximizing the likelihood of the actions:
\begin{equation}
    \begin{aligned}
    \mathcal{J}_{\rm GRIF}(\pi_\theta)=\mathbb{E}_{(s_t,a_t,l)\sim \mathcal{D}, x_t\sim h_{w}}[\log\pi_\theta(a_t|x_t,f_{\varphi}(l))]
    \end{aligned}
\end{equation}
We use the \textbf{GRIF(Joint)} setting to train the model~\cite{myers2023goal}.
 
\paragraph{CQL:} we implement CQL~(w/o distributional) based on DDPG, 
\begin{equation}
    \begin{aligned}
    \mathcal{J}_{CQL}(\pi_\theta)=\mathbb{E}_{(s_t,a_t,l)\sim \mathcal{D}, x_t\sim h_{w}}[Q(x_t,\pi_\theta(x_t,l), l)]
    \end{aligned}
\end{equation}
where Q function is learned by minimizing:
\begin{equation}
    \begin{aligned}
    \mathcal{L}_{CQL(\mathcal{H})}(\theta)=&\mathbb{E}_{(s_t,a_t,r_t,s_{t+1},l)\sim \mathcal{D}, (x_t,x_{t+1})\sim h_{w}}[(Q_\theta(x_t,a_t,l)-\mathcal{B}^\pi Q_\theta(x_t,a_t,l))^2]+  \\
    &\alpha \mathbb{E}_{(s_t,l)\sim \mathcal{D}, x_t\sim h_{w}}[\mathrm{log}\sum_a\mathrm{exp}(Q_\theta(x_t,a,l))-\mathbb{E}_{a\sim\hat{\pi}_{\beta}(a|s_t,l)}[Q_\theta(x_t,a, l)]]
    \end{aligned}    
\end{equation}
where $\hat{\pi}_{\beta}$ is the behavior policy, $\mathcal{B}^\pi$ is the Bellman operator, and the balance ratio $\alpha=2$ in our experiment.

\paragraph{IQL:} training an additional value network and extracting policy through advantage weighted regression.
\begin{equation}
    \begin{aligned}
    \mathcal{L}_V(\psi)&=\mathbb{E}_{(s_t,a_t,l)\sim \mathcal{D}, x_t\sim h_{w}}[L_2^\tau(Q_{\hat{\theta}} (x_t,a_t,l)-V_\psi(x_t,l)]\\
    \mathcal{L}_Q(\phi)&=\mathbb{E}_{(s_t,a_t,r_t,s_{t+1},l)\sim \mathcal{D}, (x_t,x_{t+1})\sim h_{w}}[(r_t+\gamma V_\psi(x_{t+1},l)-Q_\phi(x_t,a_t,l)^2]\\
    \mathcal{J}(\pi_\theta)&=\mathbb{E}_{(s_t,a_t,l)\sim \mathcal{D}, x_t\sim h_{w}}[\exp (\beta Q_\phi(x_t,a_t,l)-V_\psi(x_t, l))\log \pi_\theta(a|x_t, l)]
    \end{aligned}
\end{equation}
The expectile $\tau=0.7$, $\beta=5$. We follow the authors' suggestions and subtract 1 from the reward if it equals 0.


To follow the original IL setting and simplicity, we merely use observation without action information in the history state encoding in BC and IQL. Our primary experiments show that it has little impact on the results.
We also investigate recent approaches of language-conditioned IL such as LLfP~\cite{lynch2020language} and R3M~\cite{nair2022r3m}, but they perform poorly in prior experiments, so only BC-Z is chosen as the representative in the final baselines. 

\clearpage
\section{Visualization Supplementary Results}
\label{appendix:vis}

We apply t-SNE visualization in different algorithms and task types to show that our method substantially improves task representation on offline language-conditioned RL. Beyond the main text, here are some supplementary visualization results.
\begin{figure}[h]
    \centering
    \includegraphics[width=\textwidth]{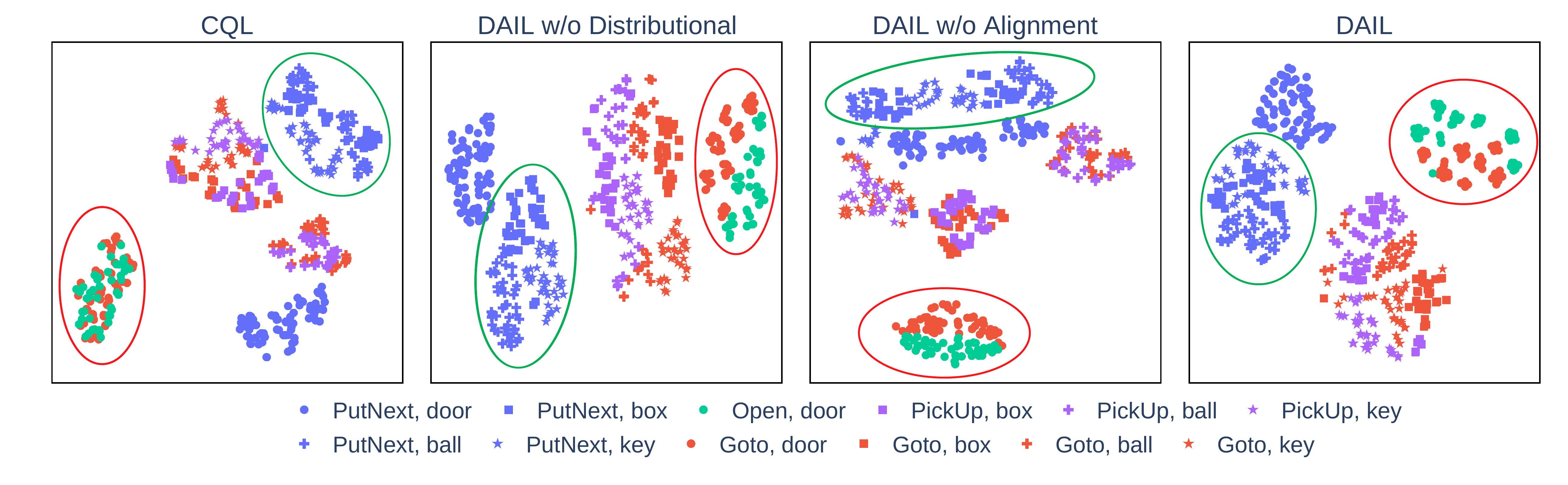}
    \vspace{-25pt}
    \caption{The t-SNE visualization of instructions from various tasks in BabyAI for different algorithms. The figure distinguishes between different \textbf{task categories}~(e.g., PutNext) and \textbf{target object types}~(e.g., box), using marker colors and shapes to represent each separately. }
    \label{fig:cql-alltask}
\end{figure}

\begin{figure}[h]
    \centering
    \includegraphics[width=\textwidth]{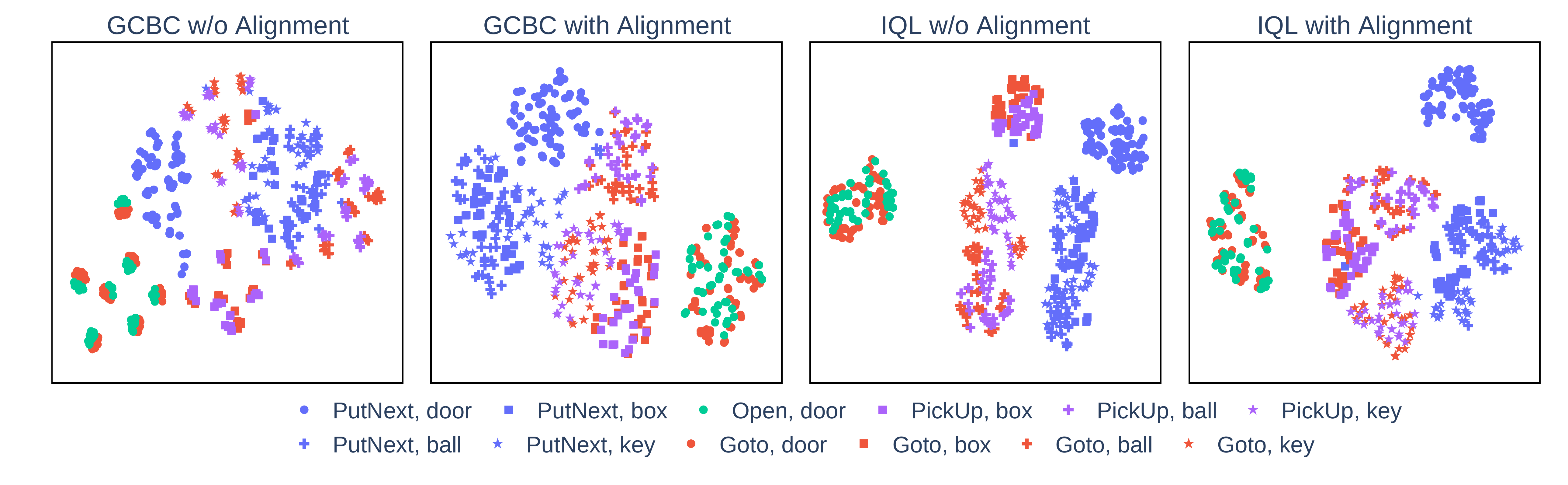}
    \vspace{-25pt}
    \caption{The t-SNE visualization of instructions from various tasks in BabyAI for different algorithms. The figure distinguishes between different \textbf{task categories}~(e.g., PutNext) and \textbf{target object types}~(e.g., box), using marker colors and shapes to represent each separately.}
    \label{fig:bciql-alltask}
\end{figure}

\textbf{Overall task representation}
As introduced in Section \ref{sec:exp}, in BabyAI $\verb|SynthLoc|$ tasks, there are four main categories of tasks: \texttt{Goto}, \texttt{PickUp}, \texttt{PutNext}, and \texttt{Open}. We sample tasks from all four categories to visualize in Figure \ref{fig:cql-alltask}. Our method demonstrates superior task embedding capabilities compared to other approaches in the \texttt{Goto}, \texttt{PickUp}, and \texttt{Open}—effectively reducing confusion as highlighted by the red circles. Some degree of confusion remains inevitable in the \texttt{PutNext} tasks, due to their complexity and variability (as indicated by the green circles).

We also visualize representations from GCBC and IQL (with and w/o alignment) in Figure \ref{fig:bciql-alltask}. GCBC shows reliable task representation, but confuses tasks from \textcolor{myOrange}{\texttt{Goto}} and \textcolor{myPurple}{\texttt{PickUp}}, ``Open, door'' \circlesymbolg and ``Goto, door'' \circlesymbolo, where their embeddings are tightly gathered into small clusters. Our further results in Figure \ref{fig:bciql-pickupgoto} show that each cluster represents a specific color. Similarly, alignment significantly eases this issue by separating each instruction. 
IQL shows similar results to CQL, while alignment has a more significant influence on CQL. This result explains why simple GCBC shows comparable performance in our settings while vanilla offline RL fails due to confusion in task encoding.

\begin{figure}[h]
    \centering
    \includegraphics[width=\textwidth]{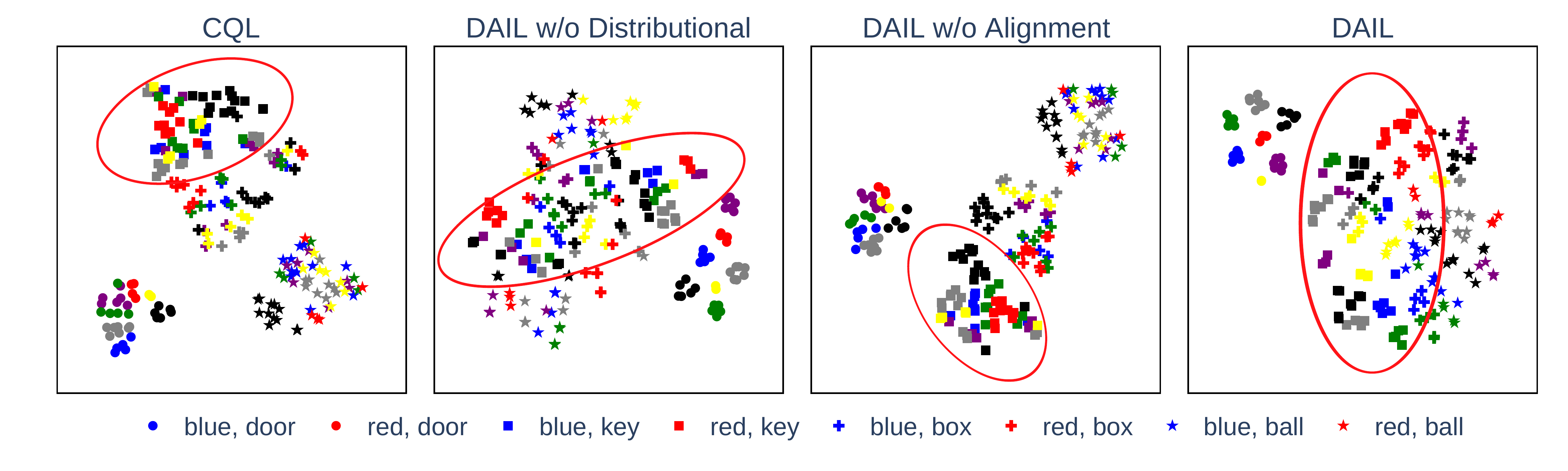}
    \vspace{-25pt}
    \caption{The t-SNE visualization of instructions from the same task categories with more detailed distinction. The figure distinguishes between different \textbf{target object types}~(e.g., door) and \textbf{target object colors}~(e.g., blue), using marker colors and shapes to represent each separately. For example, ``go to the red door'' corresponds to \textcolor{red}{$\bullet$};. ``go to a red ball behind you'' corresponds to \textcolor{red}{$\star$}, ``pick up the ball in front of you'' corresponds to $\star$.}
    \label{fig:cql-pickupgoto}
\end{figure}
\begin{figure}[h]
    \centering
    \includegraphics[width=\textwidth]{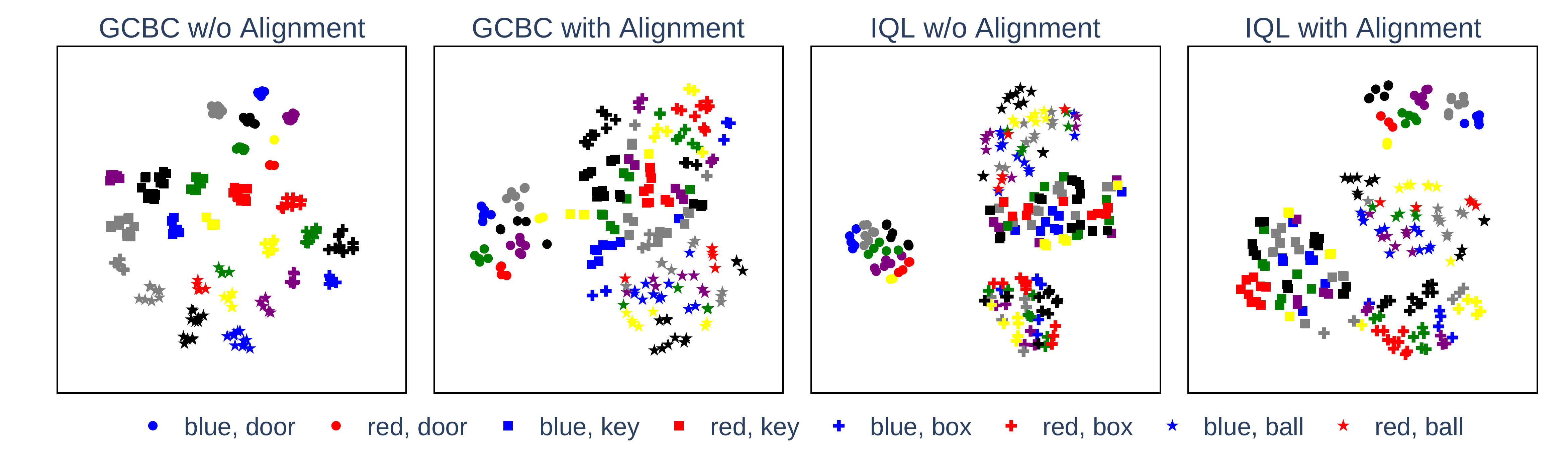}
    \vspace{-25pt}
    \caption{The t-SNE visualization of instructions from the same task categories~(\texttt{PickUp} and \texttt{Goto}) with more detailed distinction. 
    The figure distinguishes between different \textbf{target object types}~(e.g., door) and \textbf{target object colors}~(e.g., blue), using marker colors and shapes to represent each separately. The legend has the same meaning as in Figure \ref{fig:cql-pickupgoto}.}
    \label{fig:bciql-pickupgoto}
\end{figure}

\paragraph{Task \texttt{PickUp} and \texttt{Goto}}
We take a closer look at tasks in \texttt{GoTo} and \texttt{PickUp} that only have one target object. The task representations of DAIL and ablation algorithms are shown in Figure \ref{fig:cql-pickupgoto}, and GCBC and IQL (with and w/o alignment) are shown in Figure \ref{fig:bciql-pickupgoto}. The color of the markers indicates that of the target objects other than black markers.
As illustrated in Figure \ref{fig:visual}, our method further subdivides tasks (for example, \circlesymbolb and \circlesymbolo) into smaller clusters. Upon closer observation, these smaller clusters correspond to different target object colors. Similarly, CQL performs poorly in object color recognition, while distributional representation and semantic alignment substantially help task discrimination. As illustrated by the red circles, when the target object type is key, only our method succeeds in separating embeddings of different target colors while forming clusters for targets of the same color. In contrast, other methods tend to produce entangled representations, leading to less distinguishable task embeddings.

GCBC yields strong results in this scenario, whereas IQL relatively performs poorly. 
However, excessive overlap in GCBC suggests a confusion between the \texttt{PickUp} and \texttt{Goto} tasks. IQL can distinguish between different types of targets (different shapes of markers) but fails in differentiating target colors (for example, \textcolor{blue}{\rule{1ex}{1ex}} \textcolor{green}{\rule{1ex}{1ex}} \textcolor{red}{\rule{1ex}{1ex}} \dots).
After our alignment method was applied, its performance significantly improved. 

\paragraph{Representation with instruction texts}
We add some instruction texts to the representation map to better demonstrate the language instructions' representation results. 
We draw around 300 instructions from BabyAI \texttt{SynthLoc} level to visualize and sample around 20 tasks and include their original text in the figure, displayed to the right of the corresponding marker.
Figure \ref{fig:CQL-text} shows the detailed representation result of vanilla CQL, and Figure \ref{fig:CQL+C51-text} shows that of our method DAIL. 
The red dashed circles highlight the ``Goto, door \circlesymbolb'' and ``Open, door \textcolor{myGreen}{\textbf{+}}'' tasks. Our method DAIL successfully separates the two task categories and further organizes them into clusters (based on colors, see Figure \ref{fig:cql-pickupgoto}) while CQL fails. Similarly, the green dashed circles denote the ``PickUp, key \circlesymbolo'' and ``Goto, key \textcolor{myBlue}{\rule{1ex}{1ex}}'' tasks. While CQL fails to disentangle the \textcolor{myOrange}{\texttt{PickUp}} and \textcolor{myBlue}{\texttt{Goto}} task types, our method achieves a clear separation between them.

\begin{figure}
    \centering
    \includegraphics[width=0.8\linewidth]{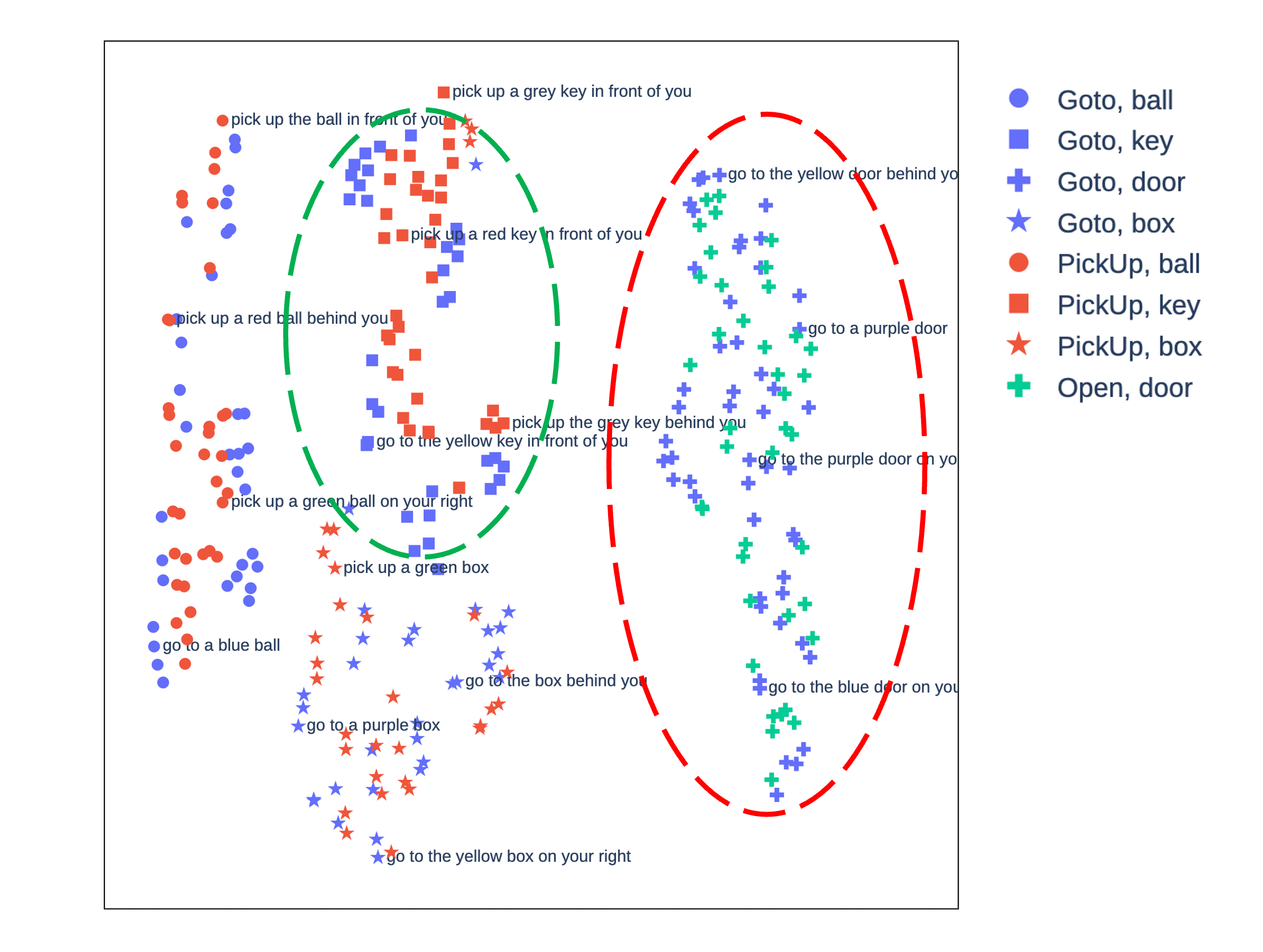}
    \vspace{-10pt}
    \caption{The t-SNE visualization of instructions with text annotations from \texttt{Open}, \texttt{Goto}, \texttt{PickUp} in BabyAI for CQL. The figure distinguishes between different \textbf{task categories}~(e.g., PickUp) and \textbf{target object types}~(e.g., box), using marker colors and shapes to represent each separately.}
    \label{fig:CQL-text}
\end{figure}
\begin{figure}
    \centering
    \includegraphics[width=0.8\linewidth]{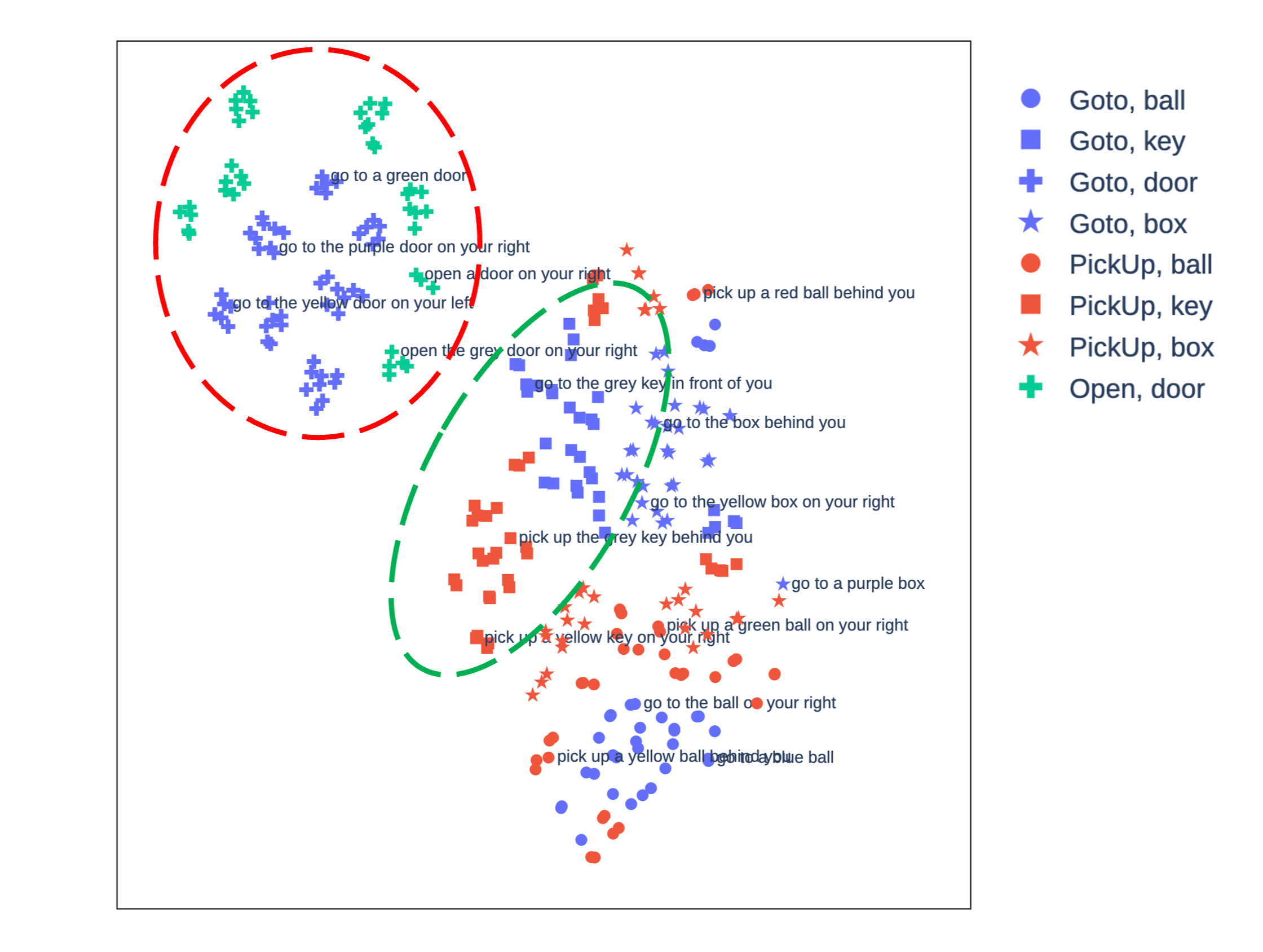}
    \vspace{-10pt}
    \caption{The t-SNE visualization of instructions with text annotations from \texttt{Open}, \texttt{Goto}, \texttt{PickUp} in BabyAI for our method. The figure distinguishes between different \textbf{task categories}~(e.g., PickUp) and \textbf{target object types}~(e.g., box), using marker colors and shapes to represent each separately. }
    \label{fig:CQL+C51-text}
\end{figure}
\clearpage

\section{Additional Results}
\label{appendix:add-res}




\paragraph{Results on the hight-quality dataset of BabyAI}
The success rates of in-distribution and out-of-distribution tasks on the high-quality dataset are shown in Table~\ref{table:babyai-high}. in both in-distribution and out-of-distribution tasks. 
Our method demonstrates significant advantages over other approaches on out-of-distribution tasks, particularly achieving substantial performance improvements on complex tasks such as $\texttt{PutNext}$ compared to the baseline RL method CQL~(49.1\% vs. 27.6\%). 
Vanilla offline RL algorithms like CQL and IQL underperform compared to imitation learning methods like GCBC, which we attribute to the adverse impact of task ambiguity on RL-based approaches as discussed in Theorem~\ref{thm: sample-complexity}.
On the other hand, modified algorithms designed for language-conditioned IL (BC-Z and GRIF) perform poorly under our setting. This is primarily because their contrastive learning objectives are not robust in the presence of noisy or suboptimal data. In contrast, our alignment-based approach, built on an offline RL framework, maintains strong performance.

\begin{table*}[h]
\caption{Success rate of in-distribution tasks and out-of-distribution BabyAI tasks. Each score is evaluated over 3 seeds.}
\begin{center}
    \begin{tabular}{l|c|c|c|c|c} 
    \toprule
      Algorithm & Open & Goto & PickUp & PutNext & All \\
      \midrule
      \multicolumn{6}{c}{In Distribution} \\
      \midrule
      GCBC & 96.9\textpm0.8  & 91.8\textpm1.1 & 85.6\textpm0.0 & 27.6\textpm3.3 & 79.1\textpm1.3 \\

      BC-Z & 96.3\textpm 0.7 & 77.5\textpm 1.0 & 49.9\textpm 4.4 & 14.2\textpm0.7 & 64.0\textpm1.8\\
      GRIF & 96.6\textpm0.8 & 89.4\textpm2.5 & 87.6\textpm0.1 & 27.7\textpm3.7 &  78.6\textpm2.5 \\
      IQL  & \textbf{98.2}\textpm0.4 & 87.9\textpm1.4 & 73.7\textpm1.1 & 26.2\textpm3.5 & 75.2\textpm0.7  \\
      CQL & \textbf{98.7}\textpm0.3 & 92.2\textpm0.9  & 83.8\textpm1.7  & 25.6\textpm2.5 & 78.1\textpm1.6  \\
      DAIL (ours)  & 97.2\textpm0.2 & \textbf{96.5}\textpm1.4  & \textbf{94.9}\textpm1.4  & \textbf{57.9}\textpm0.9 & \textbf{89.2}\textpm0.5  \\
      \midrule
      \multicolumn{6}{c}{Out of Distribution} \\
      \midrule
      GCBC & 94.4\textpm2.5 & 90.3\textpm1.6 & 78.4\textpm2.1 & 27.4\textpm1.6 & 74.1\textpm0.7 \\
      BC-Z & 93.7\textpm1.0 & 76.9\textpm3.0 & 45.4\textpm1.5 & 11.2\textpm3.3 & 57.9\textpm1.8 \\
      GRIF & 95.9\textpm1.7 & 88.8\textpm2.6 & 75.6\textpm3.9 & 22.5\textpm2.7 & 71.2\textpm2.6  \\
      IQL  & 98.0\textpm0.4& 86.1\textpm1.2 & 70.4\textpm3.6 & 21.4\textpm3.1 & 69.7\textpm2.3  \\
      CQL & 98.8\textpm0.5 & 88.9\textpm2.1  & 71.9\textpm2.2 & 27.6\textpm0.8 & 72.6\textpm0.4  \\
      DAIL (ours)  & \textbf{99.0}\textpm0.2 & \textbf{91.3}\textpm1.0  & \textbf{87.6}\textpm2.0 & \textbf{49.1}\textpm1.8 & \textbf{81.7}\textpm1.3  \\
      \bottomrule
    \end{tabular}
\end{center}
\label{table:babyai-high}
\end{table*}

\paragraph{Results on the medium-quality dataset of BabyAI}
The success rates of in-distribution and out-of-distribution tasks on the medium-quality dataset are shown in Table~\ref{table:babyai-medium}.
Due to the lower proportion of successful trajectories, learning in this dataset is more challenging. 
As a result, all methods show a significant decline in performance compared to results on the high-quality dataset~(Table~\ref{table: babyai}). 
However, our method still achieves optimal results, especially in the $\verb|PutNext|$ task category.
\begin{table*}[h]
    \caption{Success rate of on the medium-quality dataset. Each score is evaluated over 3 seeds.}
    \begin{center}
        \begin{tabular}{c|c|c|c|c}
        \toprule
            \multirow{2}{*}{Algorithm} & \multicolumn{2}{c|}{In Distribution} & \multicolumn{2}{c}{Out of Distribution}\\
            \cmidrule(lr){2-3} \cmidrule(lr){4-5}
            & PutNext & All & PutNext & All \\
            \midrule
            GCBC  & 15.6\textpm1.5 & 58.0\textpm2.2 & 10.5\textpm1.3 & 52.6\textpm2.3\\
            BC-Z  & 4.4\textpm1.0 & 54.2\textpm0.2 & 5.0\textpm1.6 & 49.7\textpm1.4\\
            GRIF  & 7.0\textpm1.5 & 61.4\textpm0.2 & 4.9\textpm2.5 &54.7\textpm0.6 \\
            IQL  & 17.7\textpm2.8 & 67.3\textpm0.5 & 12.3\textpm0.9 & 61.4\textpm0.0\\
            CQL  & 13.5\textpm1.8 & 69.2\textpm0.7 & 14.5\textpm1.9 & 63.4\textpm1.3\\
            Ours  & \textbf{32.8}\textpm2.7 & \textbf{81.3}\textpm0.7 & \textbf{26.4}\textpm0.6 & \textbf{73.7}\textpm0.6\\
        \bottomrule
        \end{tabular}
    \end{center}
\label{table:babyai-medium}
\end{table*}

\paragraph{Ablation of components}
To study the contribution of each component in our learning framework, we conduct the following ablation study.
We compare the performance of algorithms that only apply trajectory-wise alignment or distributional language-guided policy alone with our method on \texttt{SynthLoc}. 
The experimental results in Figure~\ref{fig:babyai_curve} show that both modules significantly improve the performance over vanilla CQL on in-distribution and out-of-distribution tasks.
Further, combining both components can achieve the best performance compared to other approaches.

We further evaluate the sample efficiency of each method by calculating the Area Under the Curve (AUC) for the success rates of their learning curves, and present the AUC comparisons of in-distribution learning curves at the key milestones of 10 and 20 epochs in Table~\ref{table:auc}. Statistical analysis confirms that at 20 epochs, DAIL holds a significant lead over the other three ablation models, as demonstrated by Kolmogorov-Smirnov tests on the AUC results (p = 0.004).

\begin{figure}[h]
    \centering
    \includegraphics[width=\textwidth]{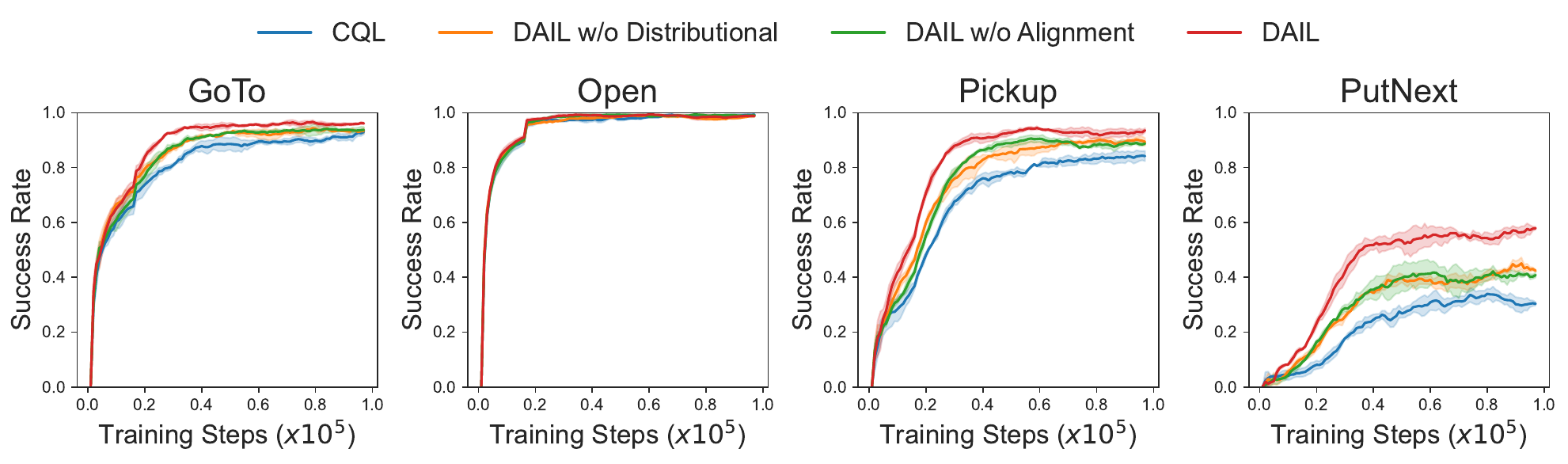}
    \caption{Ablation experiments on BabyAI tasks. The success rates are evaluated over 3 seeds. }
    \centering
    \label{fig:babyai_curve}
\end{figure}

\begin{table}
        \caption{Ablation results of AUC on the high-quality dataset. Each score is evaluated over 3 seeds.}
    \label{table:auc}
    \begin{center}
        \begin{tabular}{c|c|c}
        \toprule
            Algorithm & 10 & 20 \\
            \midrule
            CQL & 38,385.3\textpm853.5 & 81,876.0\textpm848.0 \\
            DAIL w/o Alignment & 40,374.7\textpm601.3 & 85,247.1\textpm540.2  \\
            DAIL w/o Distributional & 40,752.9\textpm248.7 & 85,416.5\textpm543.6 \\
            DAIL & \textbf{42,370.6\textpm404.8} & \textbf{88,450.5\textpm291.6} \\
        \bottomrule
        \end{tabular}

    \end{center}
\end{table}

\paragraph{Quantitative measure of clustering.} To quantitatively demonstrate the impact of different components in DAIL on representation clustering, we measure the clustering quality with the Silhouette score\cite{rousseeuw1987silhouettes}, and present the result in Table \ref{table:clustering}. The labels required to calculate the score are defined as follows: two instructions share the same label if they require the agent to perform the same action on the same kind of object with the same color as the final target object. The distance between language embeddings is measured using cosine distance. 

The results demonstrate that in the All-task setting, the clustering metric of CQL even exhibits negative values, indicating pronounced task confusion. In contrast, both proposed methods evidently improve clustering performance.

\begin{table*}[h]
    \caption{Silhouette score of in-distribution BabyAI tasks.}
    \begin{center}
        \begin{tabular}{c|c|c}
        \toprule
            Algorithm & All & PutNext\\
            \midrule
            CQL & -0.030\textpm0.005 & 0.004\textpm0.007 \\
            DAIL w/o Alignment & 0.024\textpm0.016 &  0.048\textpm0.008 \\
            DAIL w/o Distributional & 0.110\textpm0.019 & 0.088\textpm0.020 \\
            DAIL & \textbf{0.127\textpm0.013} & \textbf{0.107\textpm0.012} \\
        \bottomrule
        \end{tabular}
    \end{center}
\label{table:clustering}
\end{table*}


\paragraph{Ablation of $\alpha$}
In Equation~\ref{eq: total loss}, $\alpha$ is the weight of the CQL loss.
The ablation is done on the high-quality dataset in BabyAI tasks with various $\alpha$. 
The experimental results in Table~\ref{table:ablation_alpha} indicates that a wide range of $\alpha$ values (from 0.5 to 2) yield comparable performance, which drops off at the extremes ($\alpha=0.2$ and $\alpha=5$). We therefore recommend setting $\alpha$ between 0.5 and 2.


\begin{table*}[h]
    \caption{Ablation experimental results on $\alpha$. Each score is evaluated over 3 seeds.}
    \begin{center}
        \begin{tabular}{c|c|c|c|c}
        \toprule
        \multirow{2}{*}{$\alpha$} & \multicolumn{2}{c|}{In Distribution} & \multicolumn{2}{c}{Out of Distribution}\\
        \cmidrule(lr){2-3} \cmidrule(lr){4-5}
            & PutNext & All & PutNext & All \\
        \midrule
            0.2 & 44.0\textpm3.6 & 83.7\textpm0.7 & 40.8\textpm5.1 & 78.4\textpm1.7 \\
            0.5 & 57.4\textpm2.2 & 88.1\textpm0.3 & 50.7\textpm2.7 & 83.1\textpm0.8\\
            1 & 56.2\textpm3.6 & 87.7\textpm1.3 & \textbf{50.8\textpm3.7} & \textbf{84.1\textpm0.7}\\
            2 & \textbf{57.9\textpm0.9} & \textbf{89.2\textpm0.5} & 49.1\textpm1.8 & 81.7\textpm1.3\\
            5 & 44.8\textpm3.2 & 85.2\textpm0.2 & 37.1\textpm5.1 & 77.4\textpm1.7\\
            10 & 35.7\textpm2.3 & 82.8\textpm1.1 & 39.1\textpm4.4 & 77.2\textpm0.7\\
        \midrule
        \bottomrule
        \end{tabular}
    \end{center}
    \label{table:ablation_alpha}
\end{table*}

\clearpage
\section{Demonstration Trajectories in ALFRED}
\label{appendix:alfred_res}

We present extended trajectory visualizations of DAIL's task execution in the ALFRED benchmark, illustrating its semantic comprehension and generalization capabilities.

\begin{figure}[h]
    \centering
    \includegraphics[width=\textwidth]{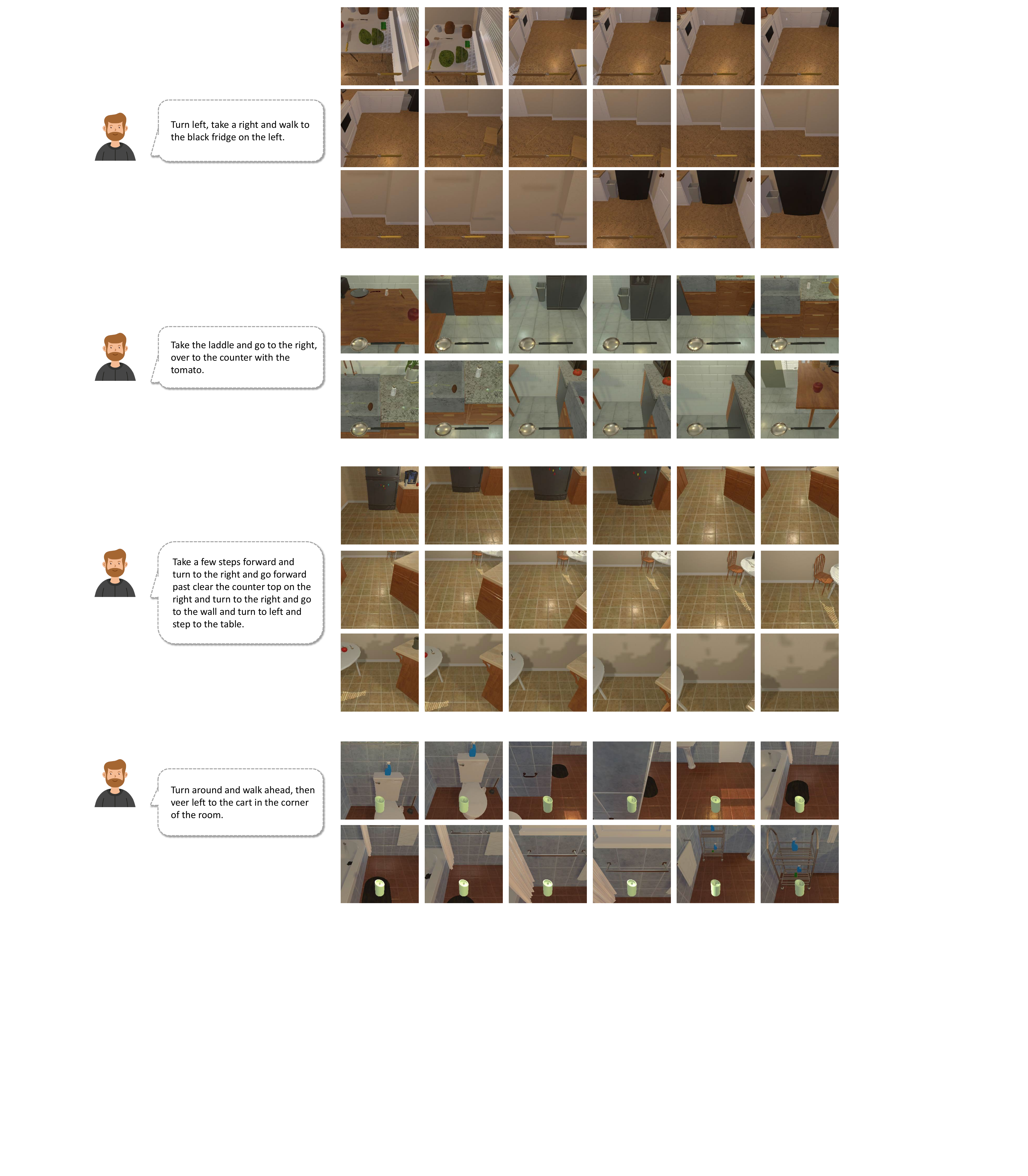}
    \caption{Extended trajectories of DAIL in ALFRED validation tasks.}
    \label{fig:more_demonstration}
\end{figure}

\end{document}